\documentclass{article}

\usepackage[margin=1in]{geometry}
\usepackage{hyperref}
\hypersetup{colorlinks,linkcolor=blue,filecolor=blue,citecolor=magenta,urlcolor=blue}
\usepackage{bm,amsmath,amsthm,amssymb,multicol,algorithmic,algorithm,enumitem,graphicx,subfigure}
\usepackage{xargs}
\usepackage{stmaryrd}

\newtheorem{Lemma}{Lemma}

\newtheorem{Theorem}{Theorem} 

\newtheorem{Corollary}{Corollary}

\newtheorem{assumption}{H\!\!}

\newtheorem*{Lemma*}{Lemma}
\newtheorem*{Theorem*}{Theorem}
\newtheorem*{Corollary*}{Corollary}
 
\newcommand{\eqsp}{\;}
\newcommand{\beq}{\begin{equation}}
\newcommand{\eeq}{\end{equation}}
\newcommand{\eqdef}{\mathrel{\mathop:}=}
\def\EE{\mathbb{E}}
\newcommand{\norm}[1]{\left\Vert #1 \right\Vert}
\newcommand{\pscal}[2]{\left\langle#1\,|\,#2 \right\rangle}
\def\major{\mathsf{M}}
\def\rset{\ensuremath{\mathbb{R}}}

\begin{document}

\title{An Optimistic Acceleration of AMSGrad for Nonconvex Optimization}

\author{\textbf{Jun-Kun Wang, Xiaoyun Li, Belhal Karimi, Ping Li} \\\\
Cognitive Computing Lab\\
Baidu Research\\
  10900 NE 8th St. Bellevue, WA 98004, USA
}

\date{}
\maketitle

\begin{abstract}
We propose a new variant of AMSGrad~\cite{Proc:Reddi_ICLR18}, a popular adaptive gradient based optimization algorithm widely used for training deep neural networks. 
Our algorithm adds prior knowledge about the sequence of consecutive mini-batch gradients and leverages its underlying structure making the gradients sequentially predictable. 
By exploiting the predictability and ideas from optimistic online learning, the proposed algorithm can accelerate the convergence and increase sample efficiency.
After establishing a tighter upper bound under some convexity conditions on the regret, we offer a complimentary view of our algorithm which generalizes the offline and stochastic version of nonconvex optimization. 
In the nonconvex case, we establish a non-asymptotic convergence bound independently of the initialization.
We illustrate the practical speedup on several deep learning models via numerical experiments.
\end{abstract}

\section{Introduction}

Deep learning models have been successful in several applications, from robotics (e.g.,~\cite{Article:Levine_JMLR16}), computer vision (e.g.,~\cite{Proc:He_CVPR16,Proc:Goodfellow_NIPS14}), reinforcement learning (e.g.,~\cite{mnih2013playing}) and natural language processing (e.g.,~\cite{Proc:Graves_ICASSP13}).
With the sheer size of modern datasets and the dimension of neural networks, speeding up training is of utmost importance.
To do so, several algorithms have been proposed in recent years, such as  \textsc{AMSGrad}~\cite{Proc:Reddi_ICLR18}, \textsc{Adam}~\cite{Proc:Kingma_ICLR15}, \textsc{RMSPROP}~\cite{Tieleman2012}, \textsc{AdADELTA}~\cite{zeiler2012adadelta},  \textsc{NADAM}~\cite{dozat2016incorporating}, 
\textsc{Local Adam}~\cite{Proc:Chen_FODS20}, \textsc{SAGD}~\cite{Proc:Zhou_NeurIPS20}, 
etc. 
All the prevalent algorithms for training deep networks mentioned above combine two ideas: the idea of adaptivity from \textsc{AdaGrad}~\cite{Proc:McMahan_COLT10,Proc:Duchi_JMLR11} and the idea of momentum from \textsc{Nesterov's Method}~\cite{Book:Nesterov_2004} or \textsc{Heavy ball} method~\cite{Article:Polyak_1964}.
\textsc{AdaGrad} is an online learning algorithm that works well compared to the standard online gradient descent when the gradient is sparse.
Its update has a notable feature: it leverages an anisotropic learning rate depending on the magnitude of the gradient for each dimension which helps in exploiting the geometry of the data. 
On the other hand, \textsc{Nesterov's Method} or \textsc{Heavy ball} Method~\cite{Article:Polyak_1964} is an accelerated optimization algorithm which update not only depends on the current iterate and gradient but also depends on the past gradients (i.e., momentum). 
State-of-the-art algorithms such as \textsc{AMSGrad}~\cite{Proc:Reddi_ICLR18} and \textsc{Adam}~\cite{Proc:Kingma_ICLR15} leverage these ideas to accelerate the training of nonconvex objective functions, for instance deep neural networks losses.

In this paper, we propose an algorithm that goes beyond the hybrid of the adaptivity and momentum approach. 
Our algorithm is inspired by \textsc{Optimistic Online learning}~\cite{Proc:Chiang_COLT12,Proc:Rakhlin_NIPS13,Proc:Syrgkanis_NIPS15,Proc:Abernethy_COLT18,Proc:Mertikopoulos_ICLR19}, which assumes that, in each round of online learning, a \emph{predictable process} of the gradient of the loss function is available. 
Then, an action is played exploiting these predictors. 
By capitalizing on this (possibly) arbitrary process, algorithms in \textsc{Optimistic Online learning} enjoy smaller regret than the ones without gradient predictions.
We combine the \textsc{Optimistic Online learning} idea with the adaptivity and the momentum ideas to design a new algorithm --- \textsc{OPT-AMSGrad}. 

A single work along that direction stands out. 
\cite{Proc:Daskalakis_ICLR18} develop \textsc{Optimistic-Adam} leveraging optimistic online mirror descent~\cite{Proc:Rakhlin_NIPS13}.
Yet, \textsc{Optimistic-Adam} is specifically designed to optimize two-player games, e.g., GANs~\cite{Proc:Goodfellow_NIPS14} which is in particular a two-player zero-sum game. 
There have been some related works in \textsc{Optimistic Online learning}~\cite{Proc:Chiang_COLT12,Proc:Rakhlin_NIPS13,Proc:Syrgkanis_NIPS15} showing that if both players use an $\textsc{Optimistic}$ type of update, then accelerating the convergence to the equilibrium of the game is possible.
\cite{Proc:Daskalakis_ICLR18} build on these related works and show that \textsc{Optimistic-Mirror-Descent} can avoid the cycle behavior in a bilinear zero-sum game accelerating the convergence. 
In contrast, in this paper, the proposed algorithm is designed to accelerate nonconvex optimization (e.g., empirical risk minimization).
To the best of our knowledge, this is the first work exploring towards this direction and bridging the unfilled \emph{theoretical} gap at the crossroads of online learning and stochastic optimization.

\vspace{0.1in}
The \textbf{contributions} of this paper are as follows:
\begin{itemize}
\item We derive an optimistic variant of \textsc{AMSGrad} borrowing techniques from online learning procedures. Our method relies on \textsf{(I)} the addition of \emph{prior knowledge} in the sequence of model parameter estimates leveraging a predictable process able to provide guesses of gradients through the iterations; \textsf{(II)} the construction of a \emph{double update} algorithm done sequentially. We interpret this two-projection step as the learning of the global parameter and of an underlying scheme which makes the gradients sequentially predictable.
\item We focus on the  \emph{theoretical} justifications of our method by establishing novel \emph{non-asymptotic} and \emph{global} convergence rates in both convex and nonconvex cases.  Based on \emph{convex regret minimization} and \emph{nonconvex stochastic optimization} views, we prove, respectively, that our algorithm suffers regret of $\mathcal{O}(\sqrt{\sum_{t=1}^T \| g_t - m_t  \|^2_{\psi_{t-1}}})$ and achieves a convergence rate $\mathcal{O}(\sqrt{d/T} +d/T )$, where $g_t$ is the gradient, $m_t$ is its prediction, $d$ is the dimension of the problem and $T$ the total number of iterations.
\end{itemize}
The proposed algorithm adapts to the informative dimensions, exhibits momentum, and also exploits a good guess of the next gradient to facilitate acceleration. 
Besides the complete convergence analysis of \textsc{OPT-AMSGrad}, we conduct numerical experiments and show that the proposed algorithm not only accelerates the training procedure, but also leads to better empirical generalization performance.

\vspace{0.1in}

\textbf{Notations:} 
We follow the notations of adaptive optimization~\cite{Proc:Kingma_ICLR15,Proc:Reddi_ICLR18}. 
For any $u$, $v \in \mathbb R^{d}$,  $u/v$ represents the element-wise division,
$u^{2}$ the element-wise square, $\sqrt{u}$ the element-wise square-root.
We denote $g_{1:T}[i]$ as the sum of the $i_{th}$ element of $g_{1},\dots, g_{T} \in \mathbb R^{d}$ and $\| \cdot \|$ as the Euclidean norm.

\section{Preliminaries}\label{sec:prelim}

We begin by providing some background on both online learning and adaptive methods.

 \textbf{Optimistic Online learning.}\hspace{0.1cm}
The standard setup of \textsc{Online learning} is that, in each round $t$, an online learner selects an action $w_{t} \in \Theta \subseteq \mathbb R^{d}$, observes $\ell_{t}(\cdot)$ and suffers the associated loss $\ell_{t}(w_t)$ after the action is committed.
The goal of the learner is to minimize the regret, 
$$\mathcal{R}_{T}( \{ w_t \} ):= \sum_{t=1}^T \ell_{t}(w_t) - \sum_{t=1}^T \ell_{t}(w^*)\eqsp,$$
which is the cumulative loss of the learner minus the cumulative loss of some benchmark $w^{*} \in \Theta$.
The idea of \textsc{Optimistic Online learning} (e.g.,~\cite{Proc:Chiang_COLT12,Proc:Rakhlin_NIPS13,Proc:Syrgkanis_NIPS15,Proc:Abernethy_COLT18}) is as follows.
In each round $t$, the learner exploits a guess $m_t(\cdot)$ of the gradient $\nabla \ell_t(\cdot)$ to choose an action $w_t$\footnote{Imagine that if the learner would have known $\nabla \ell_t(\cdot)$ (i.e., exact guess) before committing its action, then it would exploit the knowledge to determine its action and consequently minimize the regret.}. 
Consider the \textsc{Follow-the-Regularized-Leader} (\textsc{FTRL},~\cite{hazan2019introduction}) online learning algorithm which update reads
\begin{equation} \notag
\textstyle w_t  = \arg \min_{w \in \Theta} \langle w , L_{t-1} \rangle + \frac{1}{\eta} \textsf{R}(w) \eqsp,
\end{equation}
where $\eta$ is a parameter, $\textsf{R}(\cdot)$ is a $1$-strongly convex function with respect to a given norm on the constraint set $\Theta$, and $L_{t-1}:= \sum_{s=1}^{t-1} g_s$ is the cumulative sum of gradient vectors of the loss functions up to round $t-1$. It has been shown that FTRL has regret at most $\mathcal{O}(\sqrt{\sum_{t=1}^T \| g_t \|_*^2})$.
The update of its optimistic variant, called \textsc{Optimistic-FTRL} and developed in~\cite{Proc:Syrgkanis_NIPS15} reads
\begin{equation}\notag
\textstyle w_t  = \arg \min_{w \in \Theta} \langle w , L_{t-1} + m_t \rangle + \frac{1}{\eta} \textsf{R}(w)\eqsp,
\end{equation}
where $\{m_{t}\}_{t>0}$ is a predictable process incorporating (possibly arbitrary) knowledge about the sequence of gradients $\{ g_{t}:=\nabla \ell_t(w_t)\}_{t>0}$.
Under the assumption that the loss functions are convex, it has been shown in~\cite{Proc:Syrgkanis_NIPS15} that the regret of \textsc{Optimistic-FTRL} is at most $\mathcal{O}(\sqrt{\sum_{t=1}^T \| g_t - m_t \|_*^2 } )$.

\vspace{0.1in}

\textit{Remark:} Note that the usual worst-case bound is preserved even when the predictors $\{m_{t}\}_{t>0}$ do not predict well the gradients. Indeed, if we take the example of \textsc{Optimistic-FTRL}, the bound reads $\sqrt{\sum_{t=1}^T \| g_t - m_t \|_*^2 } \leq 2 \max \limits_{w \in \Theta} \| \nabla \ell_t(w) \| \sqrt{T}$ which is equal to the usual bound up to a factor $2$~\cite{Proc:Rakhlin_NIPS13}, under certain boundedness assumptions on $\Theta$ detailed below.
Yet, when the predictors $\{m_{t}\}_{t>0}$ are well designed, the resulting regret will be lower. 
We will have a similar argument when comparing \textsc{OPT-AMSGrad} and \textsc{AMSGrad} regret bounds in Section~\ref{sec:convex}.

We emphasize, in Section~\ref{sec:opt}, the importance of leveraging a good guess $m_t$ for updating $w_t$ in order to get a fast convergence rate (or equivalently, small regret) and introduce in Section~\ref{sec:numerical} a simple predictable process $\{m_{t}\}_{t>0}$ leading to empirical acceleration on various applications.

\vspace{0.1in}
\textbf{Adaptive optimization methods.}\hspace{0.1cm}
Adaptive optimization has been popular in various deep learning applications due to their superior empirical performance.
\textsc{Adam}~\cite{Proc:Kingma_ICLR15}, a popular adaptive algorithm, combines momentum~\cite{Article:Polyak_1964} and anisotropic learning rate of \textsc{AdaGrad}~\cite{Proc:Duchi_JMLR11}.
More specifically, the learning rate of \textsc{AdaGrad} at time $T$ for dimension $j$ is proportional to the inverse of $\sqrt{ \Sigma_{t=1}^T g_t[j]^2 }$, where $g_t[j]$ is the $j$-th element of the gradient vector $g_t$ at time $t$.
This adaptive learning rate helps accelerating the convergence when the gradient vector is sparse~\cite{Proc:Duchi_JMLR11}, yet, when applying \textsc{AdaGrad} to train deep neural networks, it is observed that the learning rate might decay too fast, see~\cite{Proc:Kingma_ICLR15} for more details.
Therefore,~\cite{Proc:Kingma_ICLR15} put forward \textsc{Adam} that uses a moving average of the gradients divided by the square root of the second moment of this moving average (element-wise multiplication), for updating the model parameter $w$.
A variant, called \textsc{AMSGrad} and detailed in Algorithm~\ref{alg:amsgrad}, has been developed in~\cite{Proc:Reddi_ICLR18} to fix \textsc{Adam} failures.

\noindent{\centering
\begin{minipage}{.5\linewidth}
\begin{algorithm}[H]
\caption{\textsc{AMSGrad}~\cite{Proc:Reddi_ICLR18}} \label{alg:amsgrad}
\begin{algorithmic}[1]
\STATE \textbf{Required}: parameter $\beta_1$, $\beta_2$, and $\eta_t$. 
\STATE Init: $w_{1} \in \Theta \subseteq \mathbb R^d $ and $v_{0} = \epsilon 1 \in \mathbb R^{d}$.
\FOR{$t=1$ to $T$}
\STATE Get mini-batch stochastic gradient $g_t$ at $w_t$.
\STATE $\theta_t = \beta_1 \theta_{t-1} + (1 - \beta_1) g_t$.
\STATE $v_t = \beta_2 v_{t-1} + (1 - \beta_2) g_t^2$. 
\STATE \label{line:maxop}$\hat{v}_t = \max( \hat{v}_{t-1} , v_t )$. 
\STATE $w_{t+1} = w_t - \eta_t \frac{\theta_t}{ \sqrt{\hat{v}}_t }$.
\text{ (element-wise division)}
\ENDFOR
\end{algorithmic}
\end{algorithm}
 \end{minipage}
 \par
 }

\vspace{0.3in}

The difference between \textsc{Adam} and \textsc{AMSGrad} lies in line 7 of Algorithm~\ref{alg:amsgrad}.
The \textsc{AMSGrad} algorithm~\cite{Proc:Reddi_ICLR18} applies the $\texttt{max}$ operation on the second moment to guarantee a non-increasing learning rate $\eta_t / \sqrt{\hat{v}}_t $, which helps for the convergence (i.e., average regret $\mathcal{R}_T/T \rightarrow 0$).

\section{\textsc{OPT-AMSGRAD} Algorithm}\label{sec:opt}

We formulate in this section the proposed optimistic acceleration of AMSGrad, namely \textsc{OPT-AMSGrad}, and detailed in Algorithm~\ref{alg:optamsgrad}.  
It combines the idea of \emph{adaptive optimization} with \emph{optimistic learning}. 
At each iteration, the learner computes a gradient vector $g_{t}:= \nabla \ell_t( w_t)$ at $w_{t}$ (line 4), then maintains an exponential moving average of $\theta_{t} \in \mathbb R^{d}$ (line 5) and $v_{t} \in \mathbb R^{d}$ (line 6),  followed by the $\texttt{max}$ operation to get $\hat{v}_{t} \in \mathbb R^{d}$ (line 7). 
The learner first updates an auxiliary variable $\tilde{w}_{t+1} \in \Theta$ (line 8), then computes the next model parameter $w_{t+1}$ (line 9).
Observe that the proposed algorithm does not reduce to \textsc{AMSGrad} when $m_{t}=0$, contrary to the optimistic variant of \textsc{FTRL}.
Furthermore, combining line 8 and line 9 yields the following single step $w_{t+1}= \tilde{w}_{t}  - \eta_t (\theta_t + h_{t+1} )/ \sqrt{\hat{v}}_t $. 
Compared to \textsc{AMSGrad}, the algorithm is characterized by a \emph{two-level} update that interlinks some \emph{auxiliary state} $\tilde{w}_{t}$ and the model parameter state, $w_t$, similarly to the \textsc{Optimistic Mirror Descent} algorithm developed in~\cite{Proc:Rakhlin_NIPS13}.
It leverages the auxiliary variable (hidden model) to update and commit $w_{t+1}$, which exploits the guess $m_{t+1}$, see Figure~\ref{fig:scheme}.

\vspace{0.2in}

\noindent \begin{minipage}{0.5\linewidth}
\begin{algorithm}[H]
\begin{algorithmic}[1] 
\caption{\textsc{OPT-AMSGrad}} \label{alg:optamsgrad}
\STATE \textbf{Required}: parameter $\beta_1$, $\beta_2$, $\epsilon$, and $\eta_t$. 
\STATE Init: $w_1 = w_{-1/2} \in \Theta \subseteq \mathbb R^d $ and $v_{0} = \epsilon 1 \in \mathbb R^{d}$.
\FOR{$t=1$ to $T$}
\STATE Get mini-batch stochastic gradient $g_t$ at $w_t$.
\STATE $\theta_t = \beta_{1} \theta_{t-1} + (1 - \beta_{1}) g_t$.
\STATE $v_t = \beta_2 v_{t-1} + (1 - \beta_2) g_t^{2}$.
\STATE $\hat{v}_t = \max( \hat{v}_{t-1} , v_t )$. 
\STATE $\tilde{w}_{t+1} =  \tilde{w}_{t} - \eta_t \frac{\theta_t}{ \sqrt{\hat{v}_t }  } $.
\STATE $w_{t+1} = \tilde{w}_{t+1} - \eta_{t} \frac{h_{t+1}}{ \sqrt{\hat{v}_t } } $,  \\  
where $h_{t+1}:= \beta_{1} \theta_{t-1} + (1 - \beta_{1}) m_{t+1}$ with $m_{t+1}$ being the guess of $g_{t+1}$. 
\ENDFOR 
\end{algorithmic}
\end{algorithm}
\end{minipage}
\hfill
\begin{minipage}{0.5\linewidth}
\begin{figure}[H]
\centering
    \includegraphics[width=3.3in]{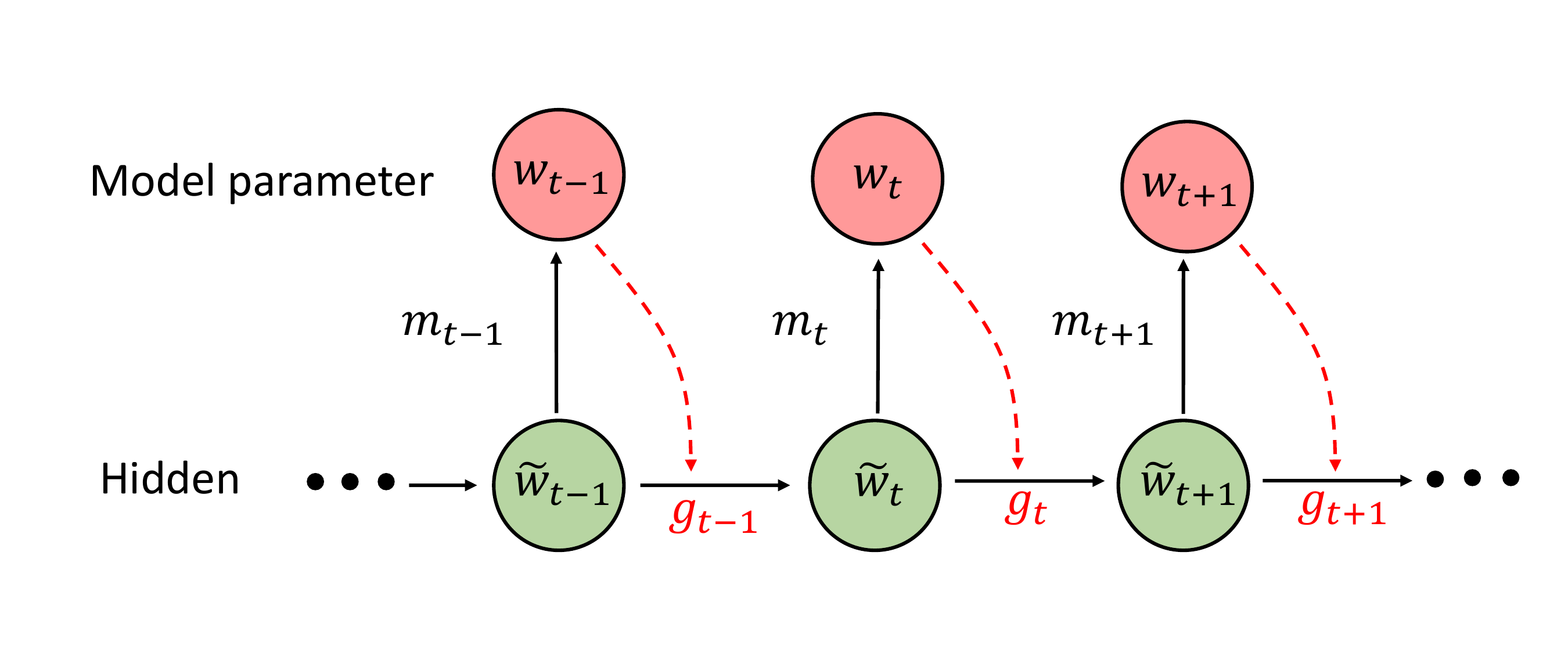}
    \caption{\textsc{OPT-AMSGrad} underlying structure.}
     \label{fig:scheme}
\end{figure}
\end{minipage}
\vspace{0.2in}

In the following analysis, we show that this interleaving actually leads to some cancellation in the regret bound.
Such two-levels method where the guess $m_t$ is equal to the last known gradient $g_{t-1}$ has been exhibited recently in~\cite{Proc:Chiang_COLT12}.
The gradient prediction process plays an important role as discussed in Section~\ref{sec:numerical}.
The proposed \textsc{OPT-AMSGrad} algorithm inherits three properties: \textsf{(i)} Adaptive learning rate of each dimension as \textsc{AdaGrad}~\cite{Proc:Duchi_JMLR11} (line 6, line 8 and line 9). \textsf{(ii)} Exponential moving average of the past gradients as \textsc{Nesterov's method}~\cite{Book:Nesterov_2004} and the \textsc{Heavy-Ball} method~\cite{Article:Polyak_1964} (line 5). \textsf{(iii)} Optimistic update that exploits \emph{prior knowledge} of the next gradient vector as in optimistic online learning algorithms~\cite{Proc:Chiang_COLT12,Proc:Rakhlin_NIPS13,Proc:Syrgkanis_NIPS15} (line 9).
The first property helps for acceleration when the gradient has a sparse structure.
The second one is from the long-established idea of momentum which can also help for acceleration. 
The last property can lead to an acceleration when the prediction of the next gradient is good as mentioned above when introducing the regret bound for the \textsc{Optimistic-FTRL} algorithm.
This property will be elaborated whilst establishing the theoretical analysis of \textsc{OPT-AMSGrad}.

\section{Convergence Analysis}\label{sec:analysis}
In this section, we provide regret analysis of the proposed method and show that it can improve the bound of vanilla \textsc{AMSGrad} with a good guess of the gradient.
The convex result is followed by a global non-asymptotic analysis for nonconvex objective functions where the optimization problem tackled in this paper is cast as an offline and stochastic nonconvex optimization problem.

\vspace{0.08in}
 \textbf{More notations.}\hspace{0.1in}
We denote the Mahalanobis norm by $\|\cdot\|_H := \sqrt{ \langle \cdot, H \cdot \rangle }$ for some positive semidefinite (PSD) matrix $H$.
We let $\psi_t(x) := \langle x, \text{diag}\{\hat{v}_t\}^{1/2} x \rangle$ for a PSD matrix $H_t^{1/2}:= \text{diag}\{\hat{v}_t\}^{1/2}$, where $\text{diag}\{\hat{v}_t\}$ represents the diagonal matrix which $i_{th}$ diagonal element is $\hat{v}_t[i]$ defined in Algorithm~\ref{alg:optamsgrad}.
We define its corresponding Mahalanobis norm by $\| \cdot \|_{\psi_t}:=  \sqrt{ \langle \cdot, \text{diag}\{\hat{v}_t\}^{1/2} \cdot \rangle }$,
where we abuse the notation $\psi_t$ to represent the PSD matrix $H_t^{1/2}:=\text{diag}\{\hat{v}_t\}^{1/2}$.
Note that $\psi_t(\cdot)$ is 1-strongly convex with respect to the norm $\| \cdot \|_{\psi_t}$, i.e., $\psi_t(\cdot)$ satisfies $\psi_t(u) \geq \psi_t(v) + \langle \psi_t(v), u - v \rangle + \frac{1}{2} \| u - v\|^2_{\psi_t}$ for any point $(u,v) \in \Theta^2$.
A consequence of 1-strong convexity of $\psi_t(\cdot)$ is that $B_{\psi_t}(u,v) \geq \frac{1}{2} \| u - v \|^2_{\psi_t}$, where the Bregman divergence $B_{\psi_t}(u,v)$ is defined as $B_{\psi_t}(u,v) := \psi_t(u) - \psi_t(v) - \langle \psi_t(v), u - v \rangle$ with $\psi_t(\cdot)$ as the distance generating function.
We also define the corresponding dual norm $\| \cdot \|_{\psi_t^*}:= \sqrt{ \langle \cdot, \text{diag}\{\hat{v}_t\}^{-1/2} \cdot \rangle }$.
The proofs of the following theoretical results are deferred to the Appendix.

\subsection{Convex Regret Analysis}\label{sec:convex}

In the following, we assume convexity of $\{\ell_t\}_{t>0}$ and that $\Theta$ has a bounded diameter $D_{\infty}$, which is a standard assumption for adaptive methods~\cite{Proc:Reddi_ICLR18,Proc:Kingma_ICLR15} and is necessary in regret analysis.

\begin{Theorem} \label{thm:mainconvex}\vspace{0.05in}
Suppose the learner incurs a sequence of convex loss functions $\{ \ell_{t}(\cdot) \}$.
Then,  \textsc{OPT-AMSGrad} (Algorithm~\ref{alg:optamsgrad}) has regret 
\begin{equation} \notag
\begin{split}
\mathcal{R}_T \leq    \frac{ B_{\psi_1}(w^*, \tilde{w}_{1})}{\eta_1}
+ \sum_{t=1}^T\frac{\eta_t}{2} \| g_t - \tilde{m}_t  \|_{\psi_{t-1}^*}^2  + \frac{D_{\infty}^2}{\eta_{\min}}  \sum_{i=1}^d \hat{v}_{T}^{1/2}[i] + D_{\infty}^2\beta_1^2   \sum_{t=1}^T  \| g_t - \theta_{t-1}  \|_{\psi_{t-1}^*}\eqsp,
\end{split}
\end{equation}
where $ \tilde{m}_{t+1}  = \beta_1 \theta_{t-1} +(1-\beta_1) m_{t+1}$, $g_{t}:= \nabla \ell_{t}(w_t)$, $\eta_{{\min}} := \min_{{t}} \eta_{t}$ and $D_{\infty}^2$ is the diameter of the bounded set $\Theta$.
The result holds for any benchmark $w^{*} \in \Theta$ and any step size sequence $\{ \eta_t \}_{t>0}$.
\end{Theorem}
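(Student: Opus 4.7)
The plan is to treat Algorithm~\ref{alg:optamsgrad} as a two-stage adaptive optimistic mirror descent: the update for $\tilde w_{t+1}$ plays the role of the ``true'' mirror step with surrogate gradient $\theta_t$, while the update for $w_{t+1}$ is the predictive correction with guess $h_{t+1}=\tilde m_{t+1}$. I would first apply an optimistic-mirror-descent lemma to the surrogate gradients $\theta_t$, then translate the resulting bound into one on the real gradients $g_t$ using the momentum identity $\theta_t=\beta_1\theta_{t-1}+(1-\beta_1)g_t$.

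By convexity, $\ell_t(w_t)-\ell_t(w^*)\leq \langle g_t,w_t-w^*\rangle$, so the problem reduces to bounding $\sum_{t=1}^T \langle g_t,w_t-w^*\rangle$. Lines~8--9 can be rewritten as
\begin{equation*}
\tilde w_{t+1}=\arg\min_{x\in\Theta}\;\eta_t\langle\theta_t,x\rangle+B_{\psi_t}(x,\tilde w_t),\qquad w_{t+1}=\arg\min_{x\in\Theta}\;\eta_t\langle h_{t+1},x\rangle+B_{\psi_t}(x,\tilde w_{t+1}),
\end{equation*}
which is exactly the Rakhlin--Sridharan template with realized ``gradient'' $\theta_t$, predictor $h_t=\tilde m_t$, and $1$-strongly convex Bregman generator $\psi_t$. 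Invoking the three-point identity and the first-order optimality conditions of both subproblems yields, for each $t$,
\begin{equation*}
\eta_t\langle\theta_t,w_t-w^*\rangle\leq B_{\psi_t}(w^*,\tilde w_t)-B_{\psi_t}(w^*,\tilde w_{t+1})+\tfrac{\eta_t^2}{2}\|\theta_t-h_t\|_{\psi_t^*}^2-\tfrac{1}{2}\|w_t-\tilde w_{t+1}\|_{\psi_t}^2,
\end{equation*}
where the last, negative, stability term comes from $1$-strong convexity of $\psi_t$ and absorbs the cross term produced by Fenchel--Young.

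Next, summing after dividing by $\eta_t$, the Bregman terms do not telescope cleanly because $\psi_t$ changes with $t$. I would handle this drift by exploiting the $\texttt{max}$ operation on line~7 (which guarantees $\hat v_t$ is coordinatewise monotone) together with $\|w^*-\tilde w_{t+1}\|_\infty\leq D_\infty$; the resulting sum collapses into the initialization term $B_{\psi_1}(w^*,\tilde w_1)/\eta_1$ plus the adaptive-drift term $\tfrac{D_\infty^2}{\eta_{\min}}\sum_i\hat v_T^{1/2}[i]$ in the statement. Finally, to go from a $\theta_t$-regret to a $g_t$-regret I would use $g_t-\theta_t=\beta_1(g_t-\theta_{t-1})$ and write
\begin{equation*}
\sum_{t=1}^T\langle g_t,w_t-w^*\rangle=\sum_{t=1}^T\langle\theta_t,w_t-w^*\rangle+\beta_1\sum_{t=1}^T\langle g_t-\theta_{t-1},w_t-w^*\rangle,
\end{equation*}
bounding the residual by Cauchy--Schwarz in the $\psi_{t-1}$-norm and $\|w_t-w^*\|_\infty\leq D_\infty$, which after a weighted Young-type inequality contributes the last term $D_\infty^2\beta_1^2\sum_t\|g_t-\theta_{t-1}\|_{\psi_{t-1}^*}$.

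The step I expect to be most delicate is reconciling the predictor error that naturally drops out of the optimistic mirror-descent lemma, $\|\theta_t-h_t\|_{\psi_t^*}^2$, with the quantity $\|g_t-\tilde m_t\|_{\psi_{t-1}^*}^2$ that actually appears in the theorem: expanding $\theta_t=\beta_1\theta_{t-1}+(1-\beta_1)g_t$ and $h_t=\tilde m_t=\beta_1\theta_{t-2}+(1-\beta_1)m_t$ produces several cross terms that must be carefully reabsorbed into the $\beta_1^2 D_\infty^2$ residual above without spoiling the dominant $O(\sqrt{\sum_t\|g_t-m_t\|^2_{\psi_{t-1}}})$ behavior advertised in the introduction. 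The adaptive-regularizer drift argument is routine but bookkeeping-heavy and hinges essentially on the monotonicity of $\hat v_t$ enforced by AMSGrad's $\texttt{max}$.
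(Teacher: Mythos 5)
Your proposal follows essentially the same route as the paper: both rewrite lines 8--9 of Algorithm~\ref{alg:optamsgrad} as Bregman proximal steps, apply the three-point mirror-descent inequality to each, use $1$-strong convexity of $\psi_t$ to cancel the $\|w_t-\tilde w_{t+1}\|^2_{\psi}$ term produced by Young's inequality, handle the non-telescoping Bregman terms via the monotonicity of $\hat v_t$ together with the diameter bound (giving the $\tfrac{D_\infty^2}{\eta_{\min}}\sum_i \hat v_T^{1/2}[i]$ term), and peel off the momentum residual $\beta_1(g_t-\theta_{t-1})$ by Cauchy--Schwarz. The one substantive difference is organizational, and it is exactly the step you flag as delicate. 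The paper never bounds a $\theta_t$-regret: it decomposes $\langle w_t-w^*, g_t\rangle$ up front into four inner products, pairing $g_t-\tilde m_t$ directly with the displacement $w_t-\tilde w_{t+1}$, so Cauchy--Schwarz/Young immediately produces $\tfrac{\eta_t}{2}\|g_t-\tilde m_t\|^2_{\psi_{t-1}^*}$ and no reconciliation with $\|\theta_t-h_t\|^2$ is ever needed. In your ordering the optimistic-MD lemma applied to the surrogate $\theta_t$ yields $\|\theta_t-\tilde m_t\|^2_{\psi^*}$, and since $\theta_t-\tilde m_t=(g_t-\tilde m_t)-\beta_1(g_t-\theta_{t-1})$, closing the gap costs a triangle/Young step that doubles the leading term and adds a squared $\beta_1^2\|g_t-\theta_{t-1}\|^2_{\psi^*}$ contribution rather than the unsquared one in the statement; the argument goes through, but with worse constants than the theorem as stated. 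If you reorder the decomposition as the paper does, that difficulty disappears entirely.
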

\vspace{0.05in}
\begin{Corollary}\label{cor:corollary}
Suppose $\beta_1=0$ and $\{v_t\}_{t>0}$ is a monotonically increasing sequence, then we obtain the following regret bound for any $w^{*} \in \Theta$ and sequence of stepsizes $\{ \eta_t = \eta/\sqrt{t}\}_{t>0}$: 
\begin{equation}\notag
\begin{split}
\mathcal{R}_T \leq  \frac{ B_{\psi_1}}{\eta_1}
+ \frac{\eta \sqrt{1 + \log T}}{\sqrt{1 - \beta_2}} \sum_{i=1}^d \| (g-m)_{1:T}[i] \|_2  +\frac{D_{\infty}^2}{\eta_{\min}} \sum_{i=1}^d \left[ (1-\beta_2) \sum_{s=1}^{T} \beta_2^{T-s} g^2_s[i] \right]^{1/2} \eqsp,
\end{split}
\end{equation}
where $B_{\psi_1} := B_{\psi_1}(w^*, \tilde{w}_{1})$, $g_{t}:= \nabla \ell_{t}(w_t)$ and $\eta_{{\min}} := \min_{{t}} \eta_{t}$.
\end{Corollary}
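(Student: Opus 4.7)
The plan is to derive Corollary~\ref{cor:corollary} from Theorem~\ref{thm:mainconvex} by simplifying the bound under the two hypotheses ($\beta_1 = 0$ and monotonic $\{v_t\}$) and then handling the one nontrivial sum with a Cauchy--Schwarz estimate. Setting $\beta_1 = 0$ immediately causes the last summand $D_\infty^2 \beta_1^2 \sum_t \|g_t - \theta_{t-1}\|_{\psi_{t-1}^*}$ to vanish, and collapses the mixed predictor $\tilde{m}_t = \beta_1 \theta_{t-2} + (1-\beta_1) m_t$ to $m_t$. With $\eta_t = \eta/\sqrt{t}$, the Theorem~\ref{thm:mainconvex} bound reduces, after expanding the dual norm coordinate-wise, to
\begin{equation}\notag
\mathcal{R}_T \leq \frac{B_{\psi_1}}{\eta_1} + \frac{\eta}{2} \sum_{i=1}^d \sum_{t=1}^T \frac{(g_t[i] - m_t[i])^2}{\sqrt{t}\,\sqrt{\hat{v}_{t-1}[i]}} + \frac{D_\infty^2}{\eta_{\min}} \sum_{i=1}^d \hat{v}_T^{1/2}[i].
\end{equation}

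I would then clean up the third term by noting that coordinate-wise monotonicity of $\{v_t\}$, combined with $\hat{v}_t = \max(\hat{v}_{t-1}, v_t)$, yields $\hat{v}_t = v_t$ by induction on $t$; unrolling the moving-average recurrence then gives $v_T[i] = (1-\beta_2)\sum_{s=1}^T \beta_2^{T-s} g_s^2[i] + \beta_2^T \epsilon$, which matches the last summand of the target bound up to the vanishing initialization contribution. The remaining work is to control the middle sum coordinate-wise. I would split each summand as $\tfrac{|g_t[i]-m_t[i]|}{\sqrt{t}} \cdot \tfrac{|g_t[i]-m_t[i]|}{\sqrt{\hat{v}_{t-1}[i]}}$, apply the Cauchy--Schwarz inequality, and invoke the \textsc{AMSGrad}-style lower bound on $\sqrt{\hat{v}_{t-1}[i]}$ (available here because monotonicity forces $\hat{v}_{t-1}[i] = v_{t-1}[i]$ to inherit a lower bound of the form $\sqrt{1-\beta_2}$ times an appropriate gradient magnitude from the moving-average structure) to peel off a factor $1/\sqrt{1-\beta_2}$. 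Combining this with $\sum_{t=1}^T 1/t \leq 1 + \log T$ yields the advertised middle term $\tfrac{\eta\sqrt{1+\log T}}{\sqrt{1-\beta_2}} \sum_i \|(g-m)_{1:T}[i]\|_2$.

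The main obstacle is the Cauchy--Schwarz step just described: the moving average $\hat{v}_{t-1}[i]$ is assembled from the past squared gradients $g_s^2$, not from the residuals $g_s - m_s$ that appear in the numerator, so the monotonicity hypothesis is exactly what allows the relevant \textsc{AMSGrad} bound to transfer across the index shift, and one must handle the $t=1$ boundary carefully (most easily by absorbing a term into the $v_0 = \epsilon \cdot 1$ initialization). Everything else is bookkeeping: exchanging the $t$- and $i$-sums, and recognising the norm $\|(g-m)_{1:T}[i]\|_2 = \bigl(\sum_{t=1}^T (g_t[i] - m_t[i])^2\bigr)^{1/2}$.
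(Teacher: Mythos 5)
Your route is the same as the paper's: set $\beta_1=0$ to kill the $D_\infty^2\beta_1^2$ term and collapse $\tilde m_t$ to $m_t$, use monotonicity to identify $\hat v_T$ with $v_T$ and unroll the recursion for the third term, expand the middle term coordinate-wise, peel off a factor $1/\sqrt{1-\beta_2}$ from each summand, and finish with Cauchy--Schwarz against $\sum_{t\le T} 1/t \le 1+\log T$. The reductions and the final Cauchy--Schwarz step are fine and match the paper.

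The gap is the peeling step, and your proposed fix does not close it. You need $|g_t[i]-m_t[i]|/\sqrt{\hat v_{t-1}[i]} \lesssim 1/\sqrt{1-\beta_2}$, and you assert that monotonicity forces $\hat v_{t-1}[i]=v_{t-1}[i]$ to ``inherit a lower bound of the form $\sqrt{1-\beta_2}$ times an appropriate gradient magnitude.'' Monotonicity only gives $\hat v_{t-1}=v_{t-1}$; it provides no relation between the residual $g_t[i]-m_t[i]$ at time $t$ and the exponential average of the \emph{squared gradients} $g_s^2[i]$, $s\le t-1$, that constitutes $v_{t-1}[i]$. Since $m_t$ is an arbitrary predictor, one can have $g_s[i]$ small for all $s\le t-1$ (hence $v_{t-1}[i]$ small) while $m_t[i]$ is large, and the ratio blows up; the analogous \textsc{AMSGrad} step works only because there the numerator is $|g_t[i]|$ and the denominator contains $(1-\beta_2)g_t^2[i]$ as one of its own summands. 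To be fair, the paper's proof does not close this gap either: it silently replaces $v_{T-1}[i]$ by $(1-\beta_2)\sum_{s\le T-1}\beta_2^{T-1-s}(g_s[i]-m_s[i])^2$ --- i.e., pretends the second moment is accumulated from the residuals rather than from the gradients --- and then invokes an explicitly labelled ``rough approximation'' $\sum_{s\le t-1}\beta_2^{t-1-s}(g_s[i]-m_s[i])^2 \simeq (g_t[i]-m_t[i])^2$ to cancel one factor of the residual. So you have reproduced the paper's argument including its weak point; a rigorous version would require either an extra hypothesis tying $m_t$ to the past gradients (in the spirit of H\ref{ass:guessbound}) or a variant of the algorithm in which $v_t$ is built from $(g_t-m_t)^2$.
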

We can compare the bound of Corollary~\ref{cor:corollary} with that of \textsc{AMSGrad}~\cite{Proc:Reddi_ICLR18} with $\eta_t = \eta/\sqrt{t}$:
\beq\label{eq:boundAMS}
\mathcal{R}_T \leq \frac{ \eta \sqrt{ 1 + \log T}    }{ \sqrt{1 - \beta_2}  } \sum_{i=1}^d \| g_{1:T}[i]  \|_2 +\frac{\sqrt{T}}{2 \eta} D^2_{\infty} \sum_{i=1}^d \hat{v}_T[i]^2 \eqsp.
\eeq
For convex regret minimization, Corollary~\ref{cor:corollary} yields a regret of $\mathcal{O}(\sqrt{\sum_{t=1}^T \| g_t - m_t  \|^2_{\psi^*_{t-1}}})$ with an access to an arbitrary predictable process $\{m_t\}_{t>0}$ of the mini-batch gradients.
We notice from the second term in Corollary~\ref{cor:corollary} compared to the first term in \eqref{eq:boundAMS} that better predictors lead to lower regret.
The construction of the predictions $\{m_t\}_{t>0}$ is thus of utmost importance for achieving optimal acceleration and can be learned through the iterations~\cite{Proc:Rakhlin_NIPS13}.
In Section~\ref{sec:numerical}, we derive a basic, yet effective, gradient prediction algorithm, see Algorithm~\ref{alg:algex}, embedded in \textsc{OPT-AMSGrad}.

\subsection{Finite-Time Analysis in  Nonconvex Case}

We discuss the offline and stochastic nonconvex optimization properties of our online framework.
As stated in the introduction, this paper is about solving optimization problems instead of solving zero-sum games.  
Classically, the optimization problem we are tackling reads:
\beq\label{eq:minproblem}
\min \limits_{w \in \Theta} f(w) \eqdef  \EE[ f(w, \xi)] = n^{-1} \sum_{i=1}^n  \EE[f(w, \xi_i)] \eqsp,
\eeq
for a fixed batch of $n$ samples $\{ \xi_i \}_{i=1}^n$.
The objective function $f(\cdot)$ is (potentially) nonconvex and has Lipschitz gradients.
Set the terminating number, $T \in \{0,\dots,T_{\sf M}-1\}$, as a discrete r.v.~with:
\beq \label{eq:random}
   P( T = \ell ) = \frac{ \eta_{\ell} }{\sum_{j=0}^{T_{\sf M}-1} \eta_j} \eqsp,
\eeq
where $T_{\sf M}$ is the maximum number of iteration.
The random termination number \eqref{eq:random} is inspired by~\cite{Article:Ghadimi_SJOPT13} and is widely used to derive novel results in nonconvex optimization. 
Consider the following assumptions: 

\begin{assumption}\label{ass:boundedparam}
For any $t >0$, the estimated parameter $w_t$ stays within a $\ell_{\infty}-$ball. There exists a constant $W >0$ such that $\norm{w_t}_{\infty} \leq W$ almost surely.
\end{assumption}

\vspace{0.1in}

\begin{assumption}\label{ass:smooth}
The function $f$ is $L$-smooth (has $L$-Lipschitz gradients) w.r.t. the parameter w.
There exist some constant $L > 0$ such that for $(w, \vartheta) \in \Theta^2$, $f(w) - f(\vartheta) - \nabla f(\vartheta)^\top(w - \vartheta) \leq \frac{L}{2} \norm{w - \vartheta}^2\eqsp.$
\end{assumption}

For nonconvex analysis, we assume the following:
\begin{assumption}\label{ass:guessbound}
For any $t >0$, $0 < \pscal{m_t}{ g_t} = a_t \|g_t\|^2$ with some $0<a_t\leq 1$, and $\| m_t \|\leq \| g_t \|$, where $\pscal{}{}$ denotes the inner product.
\end{assumption}
H\ref{ass:guessbound} assumes that the predicted gradient is in general reasonable, in the sense that $m_t$ has acute angle with $g_t$ and bounded norm, as the shadowed area in Figure~\ref{fig:assumption}.

\begin{figure}[H]
\centering
    \includegraphics[width=3.3in]{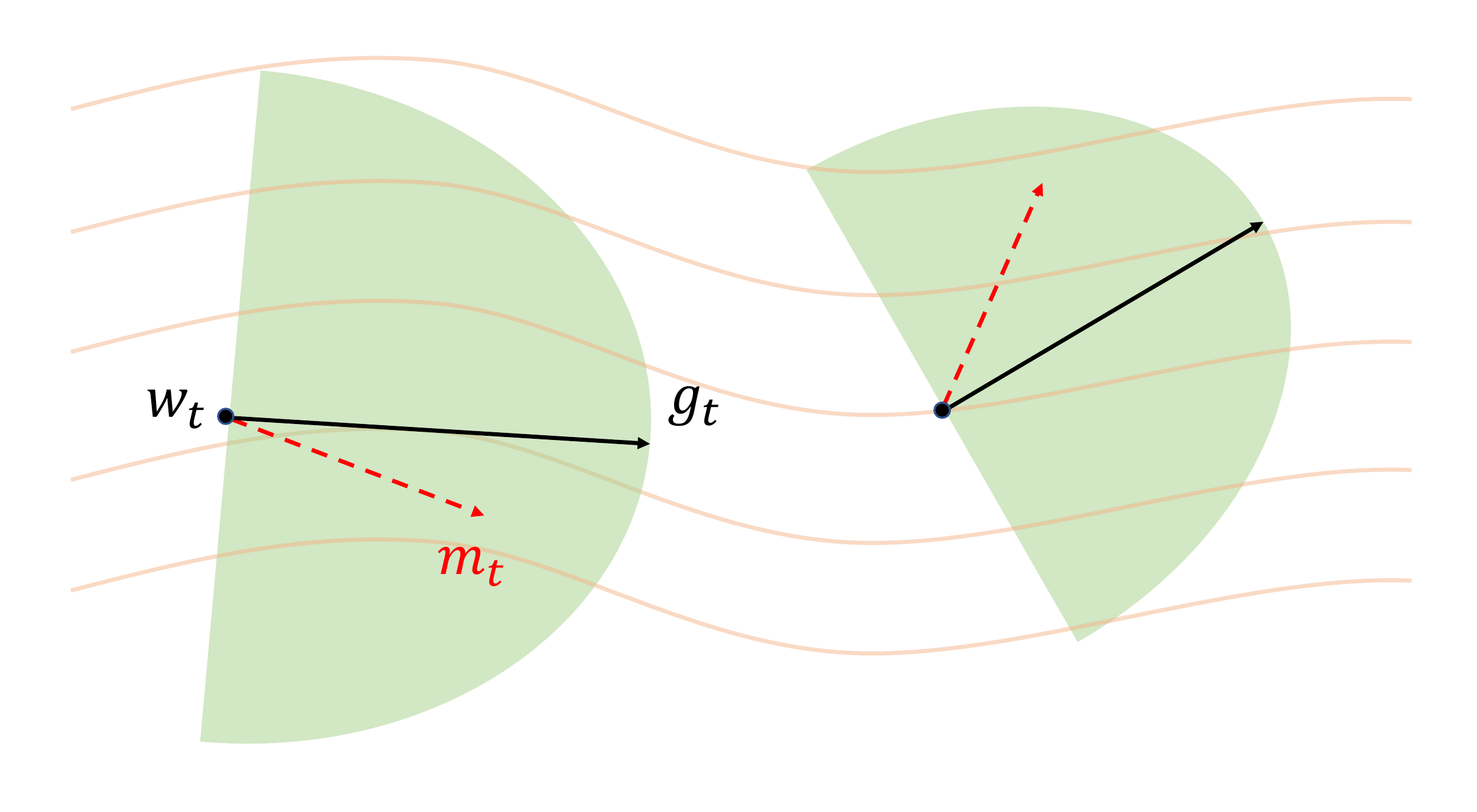}
    \caption{Illustration of H\ref{ass:guessbound}.}
     \label{fig:assumption}
\end{figure}

Lastly, We make a classical assumption in nonconvex optimization on the magnitude of the gradient:
\begin{assumption}\label{ass:bounded}
There exists a constant $\major >0$ such that for any $w$ and $\xi$, it holds that $\norm{\nabla f(w, \xi)} < \major$.
\end{assumption}

We now derive important results for our global analysis.
The first one ensures bounded norms of quantities of interests (resulting from the bounded stochastic gradient assumption):
\begin{Lemma}\label{lem:bound}\vspace{0.05in}
Assume H\ref{ass:bounded}, then the quantities defined in Algorithm~\ref{alg:optamsgrad} satisfy for any $w \in \Theta$ and $t>0$, $ \|\nabla f(w_t)\| < \major ,~~~\|\theta_t \| < \major$ and $\|\hat{v}_t\| < \major^2$.
\end{Lemma}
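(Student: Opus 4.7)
The proof breaks naturally into the three bounds, each of which I would handle by the same ``convex combination propagates boundedness'' template; the main task is just careful bookkeeping rather than any new idea.

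For the first claim, $\|\nabla f(w_t)\| < \major$, the plan is to recall that $f(w) = \EE[f(w,\xi)]$, so under the standard smoothness/integrability assumptions one may differentiate under the expectation to obtain $\nabla f(w_t) = \EE[\nabla f(w_t,\xi)]$. Jensen's inequality then gives $\|\nabla f(w_t)\| = \|\EE[\nabla f(w_t,\xi)]\| \le \EE[\|\nabla f(w_t,\xi)\|] < \major$ by H\ref{ass:bounded}. The same argument shows $\|g_t\| < \major$ for the mini-batch stochastic gradient produced at line~4 of Algorithm~\ref{alg:optamsgrad}, which will be the workhorse for the next two parts.

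For $\|\theta_t\| < \major$, I would proceed by induction on $t$. The initialization $\theta_0 = 0$ trivially satisfies $\|\theta_0\| < \major$. For the inductive step, line~5 of Algorithm~\ref{alg:optamsgrad} gives $\theta_t = \beta_1 \theta_{t-1} + (1-\beta_1) g_t$, so by the triangle inequality and the inductive hypothesis together with $\|g_t\| < \major$,
\begin{equation*}
\|\theta_t\| \le \beta_1 \|\theta_{t-1}\| + (1-\beta_1)\|g_t\| < \beta_1 \major + (1-\beta_1) \major = \major \eqsp.
\end{equation*}

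For $\|\hat v_t\| < \major^2$, I would work coordinate-wise since the recursions for $v_t$ and $\hat v_t$ act entrywise. For each $i$, note that $g_t[i]^2 \le \|g_t\|^2 < \major^2$. By induction, using $v_0[i] = \epsilon < \major^2$ (which is the standard small-$\epsilon$ initialization), line~6 yields $v_t[i] = \beta_2 v_{t-1}[i] + (1-\beta_2) g_t[i]^2 < \beta_2 \major^2 + (1-\beta_2)\major^2 = \major^2$. Since $\hat v_t[i] = \max(\hat v_{t-1}[i], v_t[i])$ at line~7, the same bound is inherited by $\hat v_t$ entrywise, hence $\|\hat v_t\| < \major^2$ in the sense used in the paper.

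The only mild obstacle is that the claim $\|\hat v_t\| < \major^2$ is a bit notationally ambiguous (the norm on $\hat v_t$ is not explicitly defined, and a literal Euclidean norm would pick up a $\sqrt{d}$ factor); my plan is to state the coordinate-wise bound $\max_i \hat v_t[i] < \major^2$, which is what the remainder of the nonconvex analysis actually needs when manipulating $\psi_t^{-1/2}$ terms, and which matches the convention used throughout the paper. Everything else is a straightforward induction combined with Jensen and the triangle inequality.
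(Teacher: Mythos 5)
Your proof is correct and follows essentially the same route as the paper's: Jensen's inequality for $\nabla f$, then induction on the convex-combination recursions for $\theta_t$ and the second-moment estimate. You are in fact slightly more careful than the paper, which applies the moving-average recursion directly to $\hat v_t$ (ignoring the $\max$ operation) and glosses over the coordinate-wise versus Euclidean-norm issue that you correctly flag and resolve.
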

We now formulate the main result of our paper yielding a finite-time upper bound of the suboptimality condition defined as $\EE\left[\|\nabla f(w_T)\|^2\right]$ (set as the convergence criterion of interest, see~\cite{Article:Ghadimi_SJOPT13}):
\begin{Theorem}\label{thm:boundopt}\vspace{0.05in}
Assume H\ref{ass:boundedparam}-H\ref{ass:bounded}, $\beta_1 < \beta_2 \in [0,1)$ and a sequence of decreasing stepsizes $\{\eta_t\}_{t>0}$, then the following result holds:
\beq\notag
\begin{split}
\EE\left[\|\nabla f(w_T)\|_2^2\right] \leq \tilde{C}_1 \sqrt{\frac{d}{T_{\sf M}}} + \tilde{C}_2 \frac{1}{T_{\sf M}} \eqsp,
\end{split}
\eeq
where $T$ is a random termination number distributed according \eqref{eq:random}.
The constants are defined as:
{\fontsize{9.5}{9}
\begin{align*}
&\tilde{C}_1 =\frac{\major}{(1 - a_m\beta_1) + (\beta_1 + a_m)} \left[ \frac{a_m(1 - \beta_1)^2}{1-\beta_2} + 2L \frac{1}{1-\beta_2} +  \Delta f  +   \frac{4L \beta_1^2(1 + \beta_1^2) }{(1 - \beta_1)(1 - \beta_2)(1-\gamma)} \right],\\
&\tilde{C}_2 = \frac{ \major^2 }{(1 - \beta_1) \left((1 - a_m\beta_1) + (\beta_1 + a_m)\right)} \left[a_m\beta_1^2 -2 a_m \beta_1 + \beta_1\right] \EE\left[ \norm{\hat{v}_{0}^{-1/2}}    \right]  \eqsp,
\end{align*}
}
where $\Delta f = f(\overline{w}_{1}) - f(\overline{w}_{T_{\sf M}+1})$ and $a_m=\displaystyle{\min_{t=1,...,T}}a_t$.
\end{Theorem}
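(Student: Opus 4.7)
The plan is to treat $\tilde w_t$ (line 8 of Algorithm~\ref{alg:optamsgrad}) as the principal analysis sequence, since it satisfies the clean AMSGrad-style recursion $\tilde w_{t+1} = \tilde w_t - \eta_t \theta_t/\sqrt{\hat v_t}$, whereas the iterate $w_t$ carries an additional optimistic correction $-\eta_{t-1} h_t/\sqrt{\hat v_{t-1}}$ that I would later pay for by a short Lipschitz step. First I would apply the $L$-smoothness hypothesis H\ref{ass:smooth} to $\tilde w_{t+1}$ vs. $\tilde w_t$, obtaining
$$ f(\tilde w_{t+1}) \leq f(\tilde w_t) - \eta_t \langle \nabla f(\tilde w_t),\, \theta_t/\sqrt{\hat v_t}\rangle + \tfrac{L}{2} \eta_t^2\, \|\theta_t/\sqrt{\hat v_t}\|^2, $$
and then replace $\nabla f(\tilde w_t)$ by $\nabla f(w_t)$ via $L$-Lipschitzness, the penalty $L\|\tilde w_t - w_t\| = L\eta_{t-1}\|h_t\|/\sqrt{\hat v_{t-1}}$ being controlled by Lemma~\ref{lem:bound} together with the bound $\|m_t\|\leq\|g_t\|$ from H\ref{ass:guessbound}.

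The heart of the argument is the inner product $\langle\nabla f(w_t), \theta_t/\sqrt{\hat v_t}\rangle$. I would split $\theta_t = (1-\beta_1) g_t + \beta_1 \theta_{t-1}$: for the fresh-gradient piece, substitute the non-measurable $\hat v_t$ by the $\mathcal F_{t-1}$-measurable $\hat v_{t-1}$, absorbing the mismatch through the monotonicity $\hat v_t\geq\hat v_{t-1}$ from line 7 and the norm bounds of Lemma~\ref{lem:bound}, and then take conditional expectation using $\EE[g_t\mid\mathcal F_{t-1}] = \nabla f(w_t)$ to generate a term proportional to $\|\nabla f(w_t)\|^2$ weighted entry-wise by $1/\sqrt{\hat v_{t-1}}$. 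The momentum piece $\beta_1\theta_{t-1}$ is treated in the nonconvex heavy-ball style by unrolling its own recursion and turning the cross-products into telescopes of $f(\tilde w_s)$ plus residual $O(\eta_s^2)$ terms, which generates the $(1-\beta_1)$ and $(1-\gamma)$ factors visible in $\tilde C_1$. The optimistic component enters when one passes from $\tilde w$-descent back to $w$: invoking H\ref{ass:guessbound} with $\langle m_t, g_t\rangle = a_t\|g_t\|^2 \geq a_m\|g_t\|^2$ guarantees that $h_{t+1} = \beta_1\theta_{t-1} + (1-\beta_1)m_{t+1}$ still has positive correlation with the true gradient, which together with the momentum is what produces the multiplicative factor $(1 - a_m\beta_1) + (\beta_1 + a_m)$ appearing in the denominators of both $\tilde C_1$ and $\tilde C_2$.

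Summing the resulting one-step inequality for $t=1,\dots,T_{\sf M}$, the $f$-differences telescope into $\Delta f$; the $\eta_t^2\|\theta_t\|^2/\hat v_t$ residuals are controlled via $\|\theta_t\|\leq\major$ from Lemma~\ref{lem:bound}, and after a Cauchy--Schwarz step the $\sqrt d$ in $\tilde C_1$ arises from $\sum_i \hat v_{t}^{-1/2}[i] \leq \sqrt d\,\|\hat v_t^{-1/2}\|$ together with the $\EE[\|\hat v_0^{-1/2}\|]$ baseline. Dividing by $\sum_{t=0}^{T_{\sf M}-1}\eta_t$ and applying the random-termination identity~\eqref{eq:random} converts $\tfrac{1}{\sum_t\eta_t}\sum_t \eta_t\,\EE[\|\nabla f(w_t)\|^2]$ into $\EE[\|\nabla f(w_T)\|^2]$, yielding the announced rate $\tilde C_1\sqrt{d/T_{\sf M}} + \tilde C_2/T_{\sf M}$ once a decreasing step-size schedule is inserted. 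The main obstacle I anticipate is the bookkeeping around three simultaneous couplings: the non-$\mathcal F_{t-1}$-measurability of $\hat v_t$, the time-lag between $\theta_{t-1}$ in the heavy-ball term and the current gradient $g_t$, and the off-by-one indexing between $m_{t+1}$ used inside the construction of $w_{t+1}$ and the $g_{t+1}$ against which H\ref{ass:guessbound} is stated; picking the right variable to condition on at each stage is what makes the adaptive preconditioner, the momentum, and the optimistic guess collapse into a single clean descent inequality.
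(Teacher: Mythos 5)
Your overall skeleton (one-step smoothness bound, telescoping of function values, control of the second-order residual via a bound like Lemma~\ref{lem:squarev}, then the random-termination identity \eqref{eq:random} to convert the weighted sum into $\EE[\|\nabla f(w_T)\|^2]$) matches the paper's, and your proposal to unroll the momentum recursion is a workable substitute for part of what the paper does. But there is a structural gap in your choice of Lyapunov sequence. You elect $\tilde w_t$ as the primary sequence precisely because its recursion $\tilde w_{t+1}=\tilde w_t-\eta_t\theta_t/\sqrt{\hat v_t}$ is "clean," i.e., it contains only $\theta_t$ and \emph{not} the guess $m_{t+1}$. Consequently, if you telescope $f(\tilde w_t)$, the optimistic correction $h_{t+1}$ can only enter your analysis through the perturbation $\|\tilde w_t - w_t\|=\eta_{t-1}\|h_t/\sqrt{\hat v_{t-1}}\|$, which you propose to absorb by Lipschitzness. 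A Lipschitz penalty has the wrong sign: it can only add error, never produce a negative (descent) term $-\eta_t\langle\nabla f(\cdot),m_{t+1}/\sqrt{\hat v_t}\rangle$ to which H\ref{ass:guessbound} could be applied. The chain $f(\tilde w_1)\to f(\tilde w_2)\to\cdots$ never passes through the $w_t$'s, so the inner products with $m_{t+1}$ do not accumulate in the telescope, and the factor $(1-a_m\beta_1)+(\beta_1+a_m)$ in the denominators of $\tilde C_1,\tilde C_2$ --- which is the entire content of the claim that better predictions improve the bound --- cannot be recovered along your route. Your remark that ``the optimistic component enters when one passes from $\tilde w$-descent back to $w$'' is exactly the step that does not go through.

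The paper resolves this by \emph{not} using $\tilde w_t$ or $w_t$ alone: following the momentum-corrected potential of Yan et al., it introduces $\overline{w}_t=\frac{1}{1-\beta_1}w_t-\frac{\beta_1}{1-\beta_1}\tilde w_{t-1}$ (equation \eqref{eq:deftilde}) and proves in Lemma~\ref{lem:momentum} that its increment equals $-\eta_t\hat v_t^{-1/2}\tilde g_t$ plus a telescoping preconditioner correction, where $\tilde g_t=g_t-\beta_1 m_t+\beta_1 g_{t-1}+m_{t+1}$ contains the guesses \emph{explicitly and with the descent sign}. Smoothness is then applied along $\overline{w}_t$, and H\ref{ass:guessbound} converts $\EE[\nabla f(w_t)^\top(g_t-\beta_1 m_t)]$ into $(1-a_t\beta_1)\|\nabla f(w_t)\|^2$ and the $\beta_1 g_{t-1}+m_{t+1}$ piece into the $(\beta_1+a_t)$ contribution; this is also how the paper sidesteps the momentum unrolling you anticipate, since $\overline{w}_{t+1}-\overline{w}_t$ no longer involves $\theta_t$ at all. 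To repair your plan you would need to replace $\tilde w_t$ by such a combined potential (or an equivalent one mixing $w_t$ and $\tilde w_{t-1}$); the remaining bookkeeping you describe (measurability of $\hat v_t$, Lemma~\ref{lem:bound}, the $\sqrt d$ from the preconditioner, the step size $\eta=1/\sqrt{dT_{\sf M}}$) then proceeds essentially as in the paper.
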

Firstly, the bound for our OPT-AMSGrad method matches the complexity bound of $\mathcal{O}( \sqrt{d/T_{\sf M}} + 1/T_{\sf M} )$ of~\cite{Article:Ghadimi_SJOPT13} for SGD considering the dependence of T only, and of~\cite{zhou2018convergence} for AMSGrad method.To see the influence of prediction quality, we can show that when $(1-\beta_1)(\beta_2-\beta_1^2-2L(1-\beta_1))-\frac{4L\beta_1^2(1+\beta_1^2)}{1-\gamma}<0$, $\tilde C_1$ and $\tilde C_2$ both decrease as $a_m$ approaches 1, i.e., as the prediction gets more accurate. Therefore, similar to the convex case, our nonconvex bound also improves with better gradient prediction.

\subsection{Checking H\ref{ass:boundedparam} for a Deep Neural Network}

As boundedness assumption H\ref{ass:boundedparam} is generally hard to verify, we now show, for illustrative purposes, that the weights of a fully connected feed forward neural network stay in a bounded set when being trained using our method. 
The activation function for this section will be sigmoid function and we use a $\ell_2$ regularization. 
We consider a fully connected feed forward neural network with $L$ layers modeled by the function $\textsf{MLN}(w, \xi): \Theta^d \times \rset^p \to \rset$ defined as:
\beq\label{eq:dnnmodel}
\textsf{MLN}( w, \xi) = \sigma\left(w^{(L)} \sigma\left(w^{(L-1)} \ldots \sigma\left(w^{(1)} \xi \right)\right)\right) \eqsp,
\eeq
where $w = [w^{(1)}, w^{(2)}, \cdots , w^{(L)}]$ is the vector of parameters, $\xi \in \rset^p$ is the input data and $\sigma$ is the sigmoid activation function. We assume a $p$ dimension input data and a scalar output for simplicity.
In this setting, the stochastic objective function \eqref{eq:minproblem} reads 
$$
f(w, \xi) = \mathcal{L}(\textsf{MLN}( w, \xi), y) +\frac{\lambda}{2}\norm{w}^2\eqsp,
$$
where $\mathcal{L}(\cdot, y)$ is the loss function (e.g., cross-entropy), $y$ are the true labels and $\lambda >0$ is the regularization parameter.
We establish that the boundedness assumption H\ref{ass:boundedparam} is satisfied with model~\eqref{eq:dnnmodel} via the following:
\begin{Lemma}\label{lem:dnnh2}
Given the multilayer model \eqref{eq:dnnmodel}, assume the boundedness of the input data and of the loss function, i.e., for any $\xi \in \rset^p$ and $y \in \rset$ there is a constant $T >0$ such that $\norm{\xi} \leq 1 \quad \textrm{a.s.}$ and $|\mathcal{L}'(\cdot, y)| \leq T$ where $\mathcal{L}'(\cdot, y)$ denotes its derivative w.r.t. the parameter. Then for each layer $\ell \in [1,L]$, there exists a constant $A_{(\ell)}$ such that $\norm{w^{(\ell)}} \leq A_{(\ell)}$.
\end{Lemma}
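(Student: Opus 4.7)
The plan is to proceed by backward induction on the layer index $\ell=L,L-1,\ldots,1$, combining two ingredients: the bounded-activation structure of the sigmoid network, which controls the loss gradient through backpropagation, and the restoring drift induced by the $\ell_2$ regularization, which prevents the OPT-AMSGrad iterates from escaping to infinity.

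First, I set up the backpropagation calculus. Writing $a^{(\ell)}=\sigma(w^{(\ell)} a^{(\ell-1)})$ with $a^{(0)}=\xi$ and the backpropagated errors $\delta^{(\ell)}:=\partial \mathcal{L}/\partial z^{(\ell)}$, one gets $\nabla_{w^{(\ell)}}\mathcal{L}=\delta^{(\ell)}(a^{(\ell-1)})^\top$ with the recursion $\delta^{(\ell)}=\sigma'(z^{(\ell)})\odot (w^{(\ell+1)})^\top\delta^{(\ell+1)}$, initialized by $\delta^{(L)}=\mathcal{L}'(a^{(L)},y)\sigma'(z^{(L)})$. The sigmoid bounds $|\sigma|\leq 1$ and $|\sigma'|\leq 1/4$, together with $\norm{\xi}\leq 1$ and $|\mathcal{L}'|\leq T$, yield $\norm{a^{(\ell)}}_\infty\leq 1$ for every $\ell$ and $\norm{\delta^{(L)}}_\infty\leq T/4$. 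Hence $\norm{\nabla_{w^{(L)}}\mathcal{L}}$ is bounded by a constant $C_L$ depending only on $T$ and the network widths.

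In the base case $\ell=L$, I analyze the trajectory $\{w^{(L)}_t\}_{t>0}$ under the OPT-AMSGrad update with $g_t=\nabla_{w^{(L)}}\mathcal{L}+\lambda w^{(L)}_t$. The adaptive denominator satisfies $\sqrt{\hat{v}_t[i]}\geq\sqrt{\epsilon}$ by initialization, so the step size per coordinate $\eta_t/\sqrt{\hat{v}_t[i]}$ is upper-bounded, while the regularization component of $g_t$ injects a term aligned with $w^{(L)}_t$ into the momentum $\theta_t$. A Lyapunov argument on $\Phi_t:=\norm{w^{(L)}_t}^2$ then shows that once $\norm{w^{(L)}_t}$ exceeds a threshold depending on $C_L$, $\lambda$, $\epsilon$, and $\eta_t$, the cross term $-2\eta_t\langle w^{(L)}_t,D_t\theta_t\rangle$, with $D_t=\mathrm{diag}\{\hat{v}_t\}^{-1/2}$, dominates the quadratic $\eta_t^2\norm{D_t\theta_t}^2$, producing a strict decrease of $\Phi_t$. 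This yields a uniform bound $\norm{w^{(L)}_t}\leq A_{(L)}$. The inductive step is identical: assuming uniform bounds $A_{(\ell+1)},\ldots,A_{(L)}$, iterating the backprop recursion gives $\norm{\delta^{(\ell)}}_\infty\leq (T/4)\prod_{k=\ell+1}^{L}(A_{(k)}/4)$ and therefore $\norm{\nabla_{w^{(\ell)}}\mathcal{L}}\leq C_\ell$; the same Lyapunov argument then delivers $\norm{w^{(\ell)}_t}\leq A_{(\ell)}$.

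\textbf{Main obstacle.} The delicate step is the Lyapunov computation, because the OPT-AMSGrad update is not a clean weight-decay step: the per-coordinate adaptive scaling $1/\sqrt{\hat{v}_t[i]}$, the momentum $\theta_t$, and the optimistic correction $h_{t+1}$ routed through the auxiliary variable $\tilde{w}_t$ all enter at once. One must either control $\langle w^{(\ell)}_t,D_t\theta_t\rangle$ coordinate-by-coordinate, exploiting that, componentwise, $\theta_t$ inherits the sign of $\lambda w^{(\ell)}_t[i]$ once regularization dominates, or unfold the two-level recursion $(w_t,\tilde{w}_t)$ and absorb the optimistic correction. The resulting constants $A_{(\ell)}$ will depend polynomially on $1/\lambda$ and on $\prod_{k>\ell}A_{(k)}$, so the per-layer bound is existential rather than sharp.
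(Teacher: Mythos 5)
Your skeleton matches the paper's: bound the loss gradient at the last layer via the sigmoid/backprop structure ($\|\nabla_{w^{(L)}}\mathcal{L}\|\leq T/4$), use the $\ell_2$ regularization as a restoring force to confine the iterates, and propagate the bound to earlier layers by backward induction, with the earlier-layer gradient controlled by the product of the already-bounded later weights. However, the step you flag as the ``main obstacle'' is exactly where your sketch has two concrete gaps. First, your per-step control is insufficient: $\sqrt{\hat{v}_t[i]}\geq\sqrt{\epsilon}$ bounds the per-coordinate \emph{learning rate}, but not the \emph{displacement}, because $g_t=\nabla_{w^{(\ell)}}\mathcal{L}+\lambda w_t^{(\ell)}$ contains the a priori unbounded term $\lambda w_t^{(\ell)}$, so $\theta_t$ and $h_{t+1}$ are not a priori bounded. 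The paper instead exploits the self-normalization $\sqrt{\hat{v}_t[i]}\geq\sqrt{1-\beta_2}\,|g_t[i]|$ together with H3 ($\|m_{t+1}\|\leq\|g_{t+1}\|$) to obtain a uniform per-iteration jump bound $\|w_t-\tilde{w}_{t-1}\|\leq B:=\hat{\eta}(a+1)/\sqrt{1-\beta_2}$ \emph{independently} of how large $w_t$ is; this bound is indispensable for concluding that an iterate cannot overshoot far past the threshold.

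Second, the global Lyapunov argument on $\Phi_t=\|w_t^{(\ell)}\|^2$ does not go through as stated. On a coordinate where $|w_t[i]|$ is large, $\hat{v}_t[i]$ itself grows like $(\lambda w_t[i])^2$, so the restoring cross term $-2\eta_t\lambda\, w_t[i]^2/\sqrt{\hat{v}_t[i]}$ is only \emph{linear}, not quadratic, in $|w_t[i]|$; meanwhile coordinates below the threshold $T/(4\lambda)$ can still increase, so a single-step strict decrease of $\Phi_t$ above a norm threshold is not guaranteed. The coordinate-wise route you mention only in passing is the one the paper actually takes (following the argument of D\'efossez et al.): if $w^{(L)}_{t,i}\geq T/(4\lambda)+B$, then $\nabla_i f = \nabla_i\mathcal{L}+\lambda w^{(L)}_{t,i}\geq -T/4+T/4\geq 0$, hence (with $\beta_1=0$) $\theta_{t,i}\geq 0$ and the coordinate does not increase; combined with the jump bound $B$ this yields $|w^{(L)}_{t,i}|\leq T/(4\lambda)+2B$ for all $t$, and the same argument recurses over layers. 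To make your proof complete you should replace the norm-squared Lyapunov step by this coordinate-wise sign argument and supply the displacement bound via the self-normalization of the adaptive denominator rather than via $\sqrt{\epsilon}$.
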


\section{Comparison to related methods} \label{sec:related}
We give in this section some comparable methods to our \textsc{OPT-AMSGrad} algorithm such as AO-FTRL~\cite{Proc:Mohri_AISTATS16} or \textsc{Optimistic-Adam}~\cite{Proc:Daskalakis_ICLR18}.

\vspace{0.05in}

 \textbf{Comparison to nonconvex optimization works.}\hspace{0.1in}
Recently,~\cite{Proc:Zaheer_NeurIPS18,Proc:Chen_ICLR19,Proc:Ward_ICML19,zhou2018convergence,zou2018convergence,Proc:Li_AISTATS19} provide some theoretical analysis of \textsc{Adam}-type algorithms when applying them to smooth nonconvex optimization problems. 
For example,~\cite{Proc:Chen_ICLR19} provide the following bound $\min_{t \in [T]} \mathbb{E}[\| \nabla f(w_t) \|^2 ] = \mathcal{O}(\log T / \sqrt{T}) $.
Yet, this data independent bound does not show any advantage over standard stochastic gradient descent. 
Similar concerns appear in other related works.
To get some adaptive data dependent bound written in terms of the gradient norms observed along the trajectory when applying  \textsc{OPT-AMSGrad} to nonconvex optimization, one can follow the approach of~\cite{Proc:Agarwal_ICML19} or~\cite{Proc:Chen_Yuan_ICLR19}.
They provide a modular approach to convert algorithms with adaptive data dependent regret bound for convex loss functions (e.g., \textsc{AdaGrad}) to algorithms that can find an approximate stationary point of nonconvex objectives. 
These variants can outperform the ones instantiated by other \textsc{Adam}-type algorithms when the gradient prediction $m_t$ is close to the true gradient $g_t$.

\vspace{0.1in}
\textbf{Comparison to AO-FTRL~\cite{Proc:Mohri_AISTATS16}.}\hspace{0.1in}In~\cite{Proc:Mohri_AISTATS16}, the authors propose \textsc{AO-FTRL}, which update reads $w_{t+1} = \arg\min_{{w \in \Theta}} ( \sum_{s=1}^t g_s )^{\top}  w + m_{t+1}^\top w + r_{0:t}(w) $, where $r_{0:t}(\cdot)$ is a 1-strongly convex loss function with respect to some norm $\| \cdot\|_{(t)}$ that may be different for different iteration $t$. 
Data dependent regret bound provided in~\cite{Proc:Mohri_AISTATS16} reads $r_{{0:T}}(w^*) + \sum_{t=1}^T \| g_t - m_t \|_{(t)^*}$ for any benchmark $w^{*} \in \Theta$. 
We remark that if one selects $r_{0:t}(w) := \langle w, \text{diag}\{\hat{v}_t\}^{1/2} w \rangle$  and $\| \cdot \|_{(t)}:=  \sqrt{ \langle \cdot, \text{diag}\{\hat{v}_t\}^{1/2} \cdot \rangle }$, then the update might be viewed as an optimistic variant of $\textsc{AdaGrad}$. 
However, no experiments were provided in~\cite{Proc:Mohri_AISTATS16} to back those findings.

\vspace{0.1in}
\textbf{Comparison to \textsc{Optimistic-Adam}~\cite{Proc:Daskalakis_ICLR18}.} This is an optimistic variant of ADAM, namely \textsc{Optimistic-Adam}. 
A slightly modified version is summarized in Algorithm~\ref{OPT-DISZ}.  Here, \textsc{Optimistic-Adam$+\hat{v}_t$} corresponds to \textsc{Optimistic-Adam} with the additional max operation $\hat{v}_t = \max ( \hat{v}_{t-1}, v_t)$ to guarantee that the weighted second moment is monotone increasing.
We want to emphasize that the motivations of our optimistic algorithm are different. 
\textsc{Optimistic-Adam} is designed to optimize two-player games (e.g., GANs~\cite{Proc:Goodfellow_NIPS14}), while our proposed algorithm \textsc{OPT-AMSGrad} is designed to accelerate optimization (e.g., solving empirical risk minimization).
\cite{Proc:Daskalakis_ICLR18} focuses on training GANs~\cite{Proc:Goodfellow_NIPS14} as a two-player zero-sum game. 
\cite{Proc:Daskalakis_ICLR18} was inspired by these related works and showed that \textsc{Optimistic-Mirror-Descent} can avoid the cycle behavior in a bilinear zero-sum game, which accelerates the convergence.

\noindent{\centering
\begin{minipage}{.5\linewidth}
\begin{algorithm}[H]
\begin{algorithmic}[1]
\caption{\textsc{Optimistic-Adam~\cite{Proc:Daskalakis_ICLR18}+$\hat{v}_t$}. \label{OPT-DISZ}}
\STATE \textbf{Required}: parameter $\beta_1$, $\beta_2$, and $\eta_t$.
\STATE Init: $w_1 \in \Theta$ and $\hat{v}_0 = v_{0} = \epsilon 1 \in \rset^{d}$.
\FOR{$t=1$ to $T$}
\STATE Compute stochastic gradient vector $g_t$ at $w_t$.
\STATE $\theta_t = \beta_{1} \theta_{t-1} + (1 - \beta_{1}) g_t$.
\STATE $v_t = \beta_2 v_{t-1} + (1 - \beta_2) g_t^2$.
\STATE $\hat{v}_t = \max( \hat{v}_{t-1} , v_t )$.
\STATE $w_{t+1} = \Pi_{k}[ w_{t} - 2 \eta_t \frac{\theta_t}{ \sqrt{\hat{v}_t }}
+ \eta_t \frac{\theta_{t-1}}{ \sqrt{\hat{v}_{t-1}} }]$.
\ENDFOR
\end{algorithmic}
\end{algorithm}
 \end{minipage}
 \par
 }

\section{Numerical Experiments}\label{sec:numerical}

In this section, we provide experiments on classification tasks with various neural network architectures and datasets to demonstrate the effectiveness of \textsc{OPT-AMSGrad} in practice and justify its theoretical convergence acceleration.
We start with giving an overview of the gradient predictor process before presenting our numerical runs.

\subsection{Gradient Estimation}

Based on the analysis in the previous section, we understand that the choice of the prediction $m_t$ plays an important role in the convergence of \textsc{Optimistic-AMSGrad}.
Some classical works in gradient prediction methods include \textsc{Anderson} acceleration~\cite{Article:Walker_SJNA11}, \textsc{Minimal Polynomial Extrapolation}~\cite{cabay1976polynomial} and  \textsc{Reduced Rank Extrapolation}~\cite{eddy1979extrapolating}.
These methods typically assume that the sequence $\{g_t\} \in \mathbb R^d$ has a linear relation $g_t = A( g_{t-1} - g^* ) + g^*$ where $A \in \mathbb R^{d \times d}$ is an unknown, not necessarily symmetric, matrix.
Then, these methods aim at finding a fixed point $g^{*}$ and assume that $\{g_t \in \mathbb R^d\}_{t>0} $ has the following linear relation:
\begin{equation} \label{nox}
g_t - g^* = A( g_{t-1} - g^* ) + e_t,
\end{equation}
where $e_t$ is a second order term satisfying $\| e_t \|_2  = \mathcal{O}( \| g_{t-1} - g^* \|_2^2)$, see~\cite{Article:Scieur_MP20} for details and results.
For our numerical experiments, we run \textsc{OPT-AMSGrad} using Algorithm~\ref{alg:algex} to construct the sequence $\{m_t\}_{t>0}$ which is based on estimating the limit of a sequence using the last iterates~\cite{brezinski2013extrapolation}.

\noindent{\centering
\begin{minipage}{.8\linewidth}
\begin{algorithm}[H]
\begin{algorithmic}[1] 
\caption{Regularized Approximated Minimal Polynomial Extrapolation~\cite{Article:Scieur_MP20} } \label{alg:algex}
\STATE \textbf{Input:} sequence $\{ g_s \in \mathbb R^d \}_{s=0}^{s=r-1}$, parameter $\lambda > 0$.
\STATE Compute matrix  $U = [ g_1 - g_0, \dots, g_{r} - g_{r-1}] \in \mathbb R^{d \times r}$.
\STATE Obtain $z$ by solving $(U^\top U + \lambda I ) z = \mathbf{1}$.
\STATE Get $c= z / (z^\top \mathbf{1})$.
\STATE \textbf{Output:} $\Sigma_{i=0}^{r-1} c_i g_i$, the approximation of the fixed point $g^*$.
\end{algorithmic}
\end{algorithm}
 \end{minipage}
 \par
 }
 \vspace{0.2in}
 
Specifically, at iteration $t$, $m_t$ is obtained by \textsf{(a)} calling Algorithm~\ref{alg:algex} with a sequence of $r$ past gradients, $\{ g_{t-1},g_{t-2}, \dots, g_{t-r} \}$ as input yielding the vector $c = [c_0, \dots, c_{r-1}] $ and \textsf{(b)} setting $m_t:= \Sigma_{i=0}^{r-1} c_i g_{t-r+i}$.
To understand why the output from the extrapolation method may be a reasonable estimation, assume that the update converges to a stationary point (i.e., $g^*:=\nabla f(w^*) = 0$ for the underlying function $f$). Then, we might rewrite (\ref{nox}) as $g_t = A g_{t-1}  + \mathcal{O}( \| g_{t-1} \|_2^2 ) u_{t-1}$, for some unit vector $u_{t-1}$.
This equation suggests that the next gradient vector $g_{t}$ is a linear transform of $g_{{t-1}}$ plus an error vector that may not be in the span of $A$.
If the algorithm converges to a stationary point, the magnitude of the error will converge to zero. 
We note that prior known gradient prediction methods are mainly designed for convex functions.
Algorithm~\ref{alg:algex} is used in our following numerical applications given its empirical success in Deep Learning, see~\cite{scieur2018nonlinear}, yet, any gradient prediction method can be embedded in our \textsc{OPT-AMSGrad} framework.
The search for the optimal prediction process in order to accelerate \textsc{OPT-AMSGrad} is an interesting research direction.

\vspace{0.1in}
\textbf{Computational cost:}
 This extrapolation step consists in: \textsf{(a)} Constructing the linear system $(U^\top U)$ which cost can be optimized to $\mathcal{O}(d)$, since the matrix $U$ only changes one column at a time. \textsf{(b)} Solving the linear system which cost is $\mathcal{O}(r^3)$, and is negligible for a small $r$ used in practice.\textsf{ (c)} Outputting a weighted average of previous gradients which cost is $\mathcal{O}(r \times d)$ yielding a computational overhead of $\mathcal O\left((r+1)d+r^3\right)$.
Yet, steps \textsf{(a)} and \textsf{(c)} are parallelizable in the final implementation.

\subsection{Classification Experiments}

\textbf{Methods.}
We consider two baselines. The first one is the original \textsc{AMSGrad}. 
The hyper-parameters are set to be $\beta_1 = 0.9$ and $\beta_2 = 0.999$, see~\cite{Proc:Reddi_ICLR18}. 
The other benchmark method is the \textsc{Optimistic-Adam$+\hat{v}_t$}~\cite{Proc:Daskalakis_ICLR18}, which described Algorithm~\ref{OPT-DISZ}. 
We use cross-entropy loss, a mini-batch size of $128$ and tune the learning rates over a fine grid and report the best result for all methods.
For \textsc{OPT-AMSGrad}, we use $\beta_1 = 0.9$ and $\beta_2 = 0.999$ and the best step size $\eta$ of \textsc{AMSGrad} for a fair evaluation of the optimistic step. In our implementation, \textsc{OPT-AMSGrad} has an additional parameter $r$ that controls the number of previous gradients used for gradient prediction. 
We use $r=5$ past gradient for empirical reasons, see Section~\ref{sec:choicer}.
The algorithms are initialized at the same point and the results are averaged over 5 repetitions.

\vspace{0.1in}
\textbf{Datasets.}\hspace{0.1in}Following~\cite{Proc:Reddi_ICLR18,Proc:Kingma_ICLR15}, we compare different algorithms on \textit{MNIST}, \textit{CIFAR10},
\textit{CIFAR100}, and \textit{IMDB} datasets. 
For \textit{MNIST}, we use two noisy variants namely \textit{MNIST-back-rand} and \textit{MNIST-back-image} from~\cite{Proc:Larochelle_ICML07} (which was also heavily used in~\cite{Proc:ABC_UAI10} to evaluate tree algorithms). 
They both have $12\,000$ training samples and $50\,000$ test samples, where random background is inserted to the original \textit{MNIST} hand-written digit images. 
For \textit{MNIST-back-rand}, each image is inserted with a random background, which pixel values are generated uniformly from 0 to 255, while \textit{MNIST-back-image} takes random patches from a black and white noisy background.
The input dimension is $784$ ($28\times 28$) and the number of classes is $10$. 
\textit{CIFAR10} and \textit{CIFAR100} are popular computer-vision datasets of $50\,000$ training images and $10\,000$ test images, of size $32\times 32$. 
The \textit{IMDB} movie review dataset is a binary classification dataset with $25\,000$ training and testing samples respectively. 
It is a popular dataset~for~text~classification.

\vspace{0.1in}
\textbf{Network architectures.} We adopt a multi-layer fully connected neural network with hidden layers of $200$ connected to another layer with $100$ neurons (using \textrm{ReLU} activations and \textrm{Softmax} output). 
This network is tested on \textit{MNIST} variants.
For convolutional networks, we adopt a simple four layer CNN which has 2 convolutional layers following by a fully connected layer. In addition, we also apply residual networks, Resnet-18 and Resnet-50~\cite{Proc:He_CVPR16}, which have achieved state-of-the-art results.
For the texture \textit{IMDB} dataset, we consider a Long-Short Term Memory (LSTM) network~\cite{Article:Gers_NC00}.
The latter network includes a word embedding layer with $5\,000$ input entries representing most frequent words embedded into a $32$ dimensional space. 
The output of the embedding layer is passed to $100$ LSTM units then connected to $100$ fully connected~\textrm{ReLU}~layers.

\begin{figure}[h]
\vspace{-0.1in}
\centering
\mbox{
\includegraphics[width=2.1in]{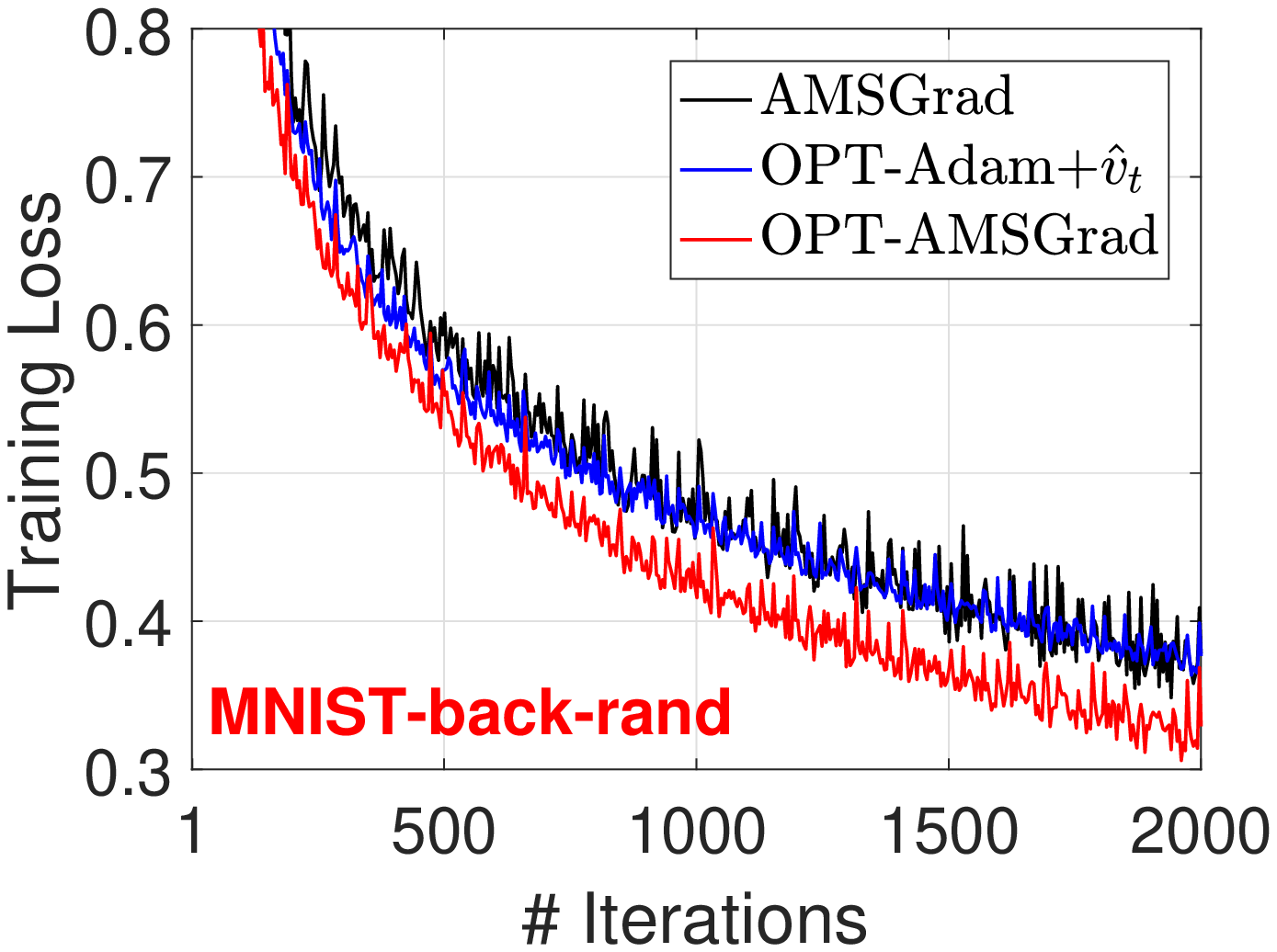}\hspace{-0.1in}
\includegraphics[width=2.1in]{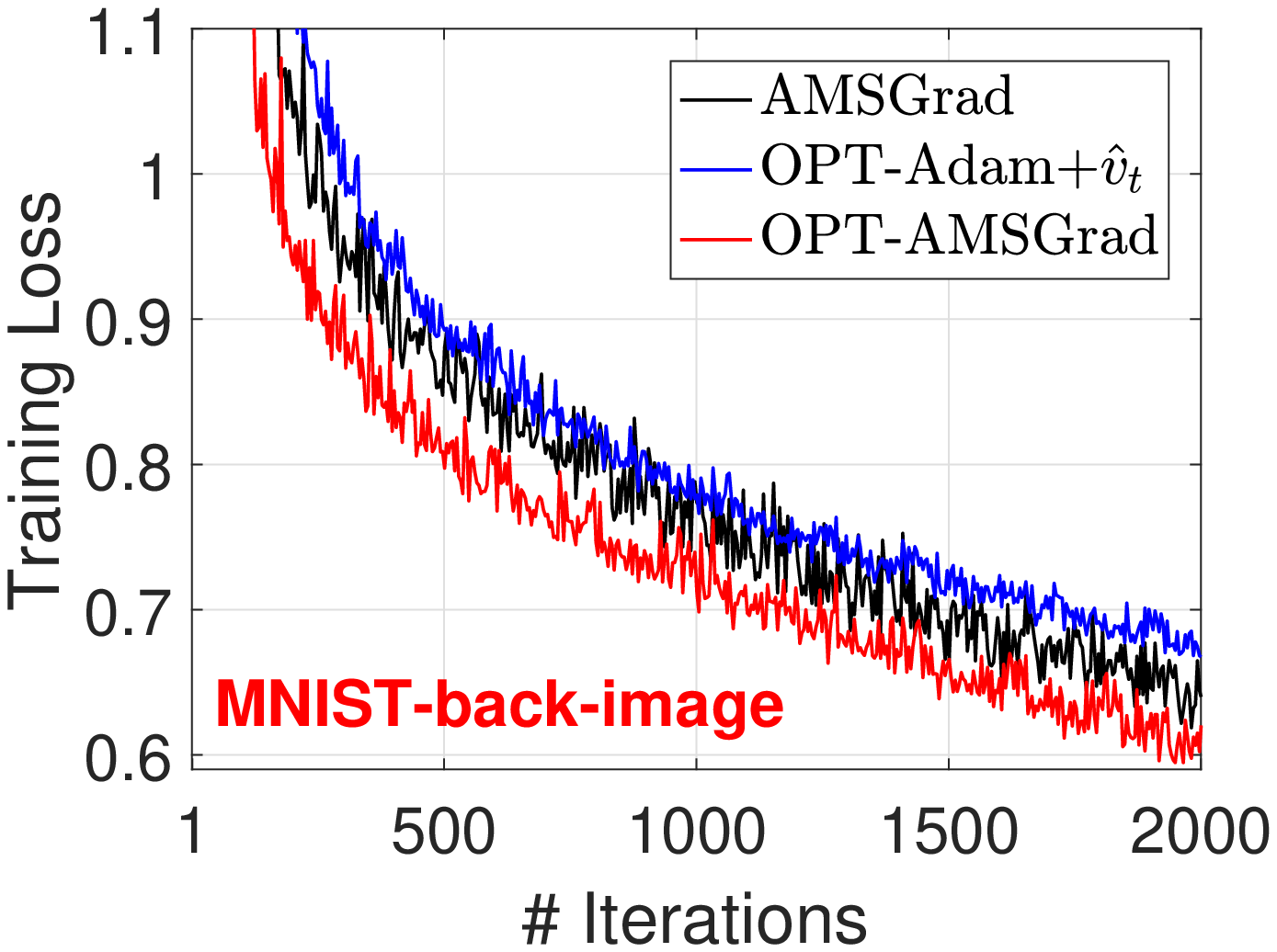}\hspace{-0.1in}
\includegraphics[width=2.1in]{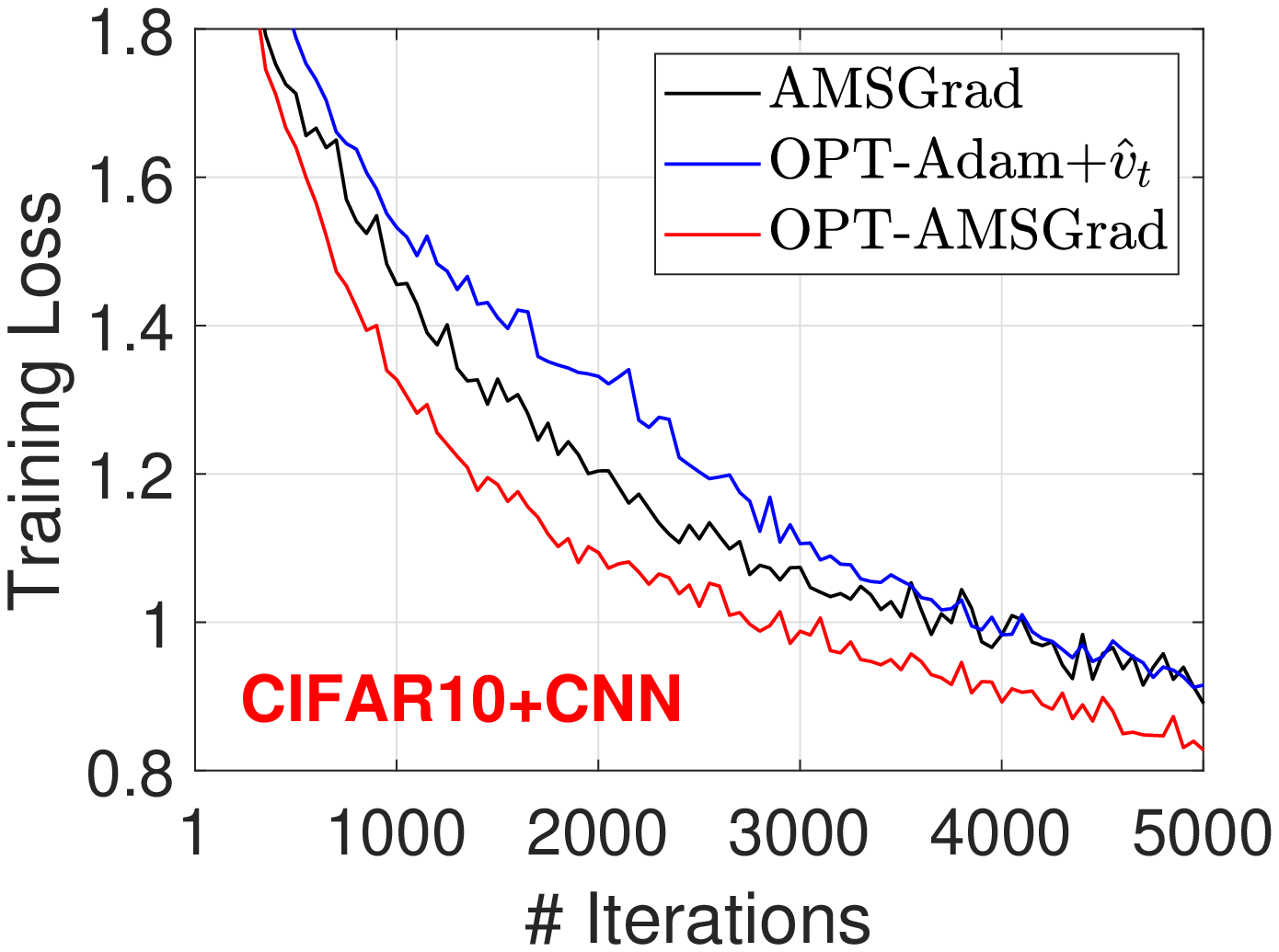}
}

\mbox{
\includegraphics[width=2.1in]{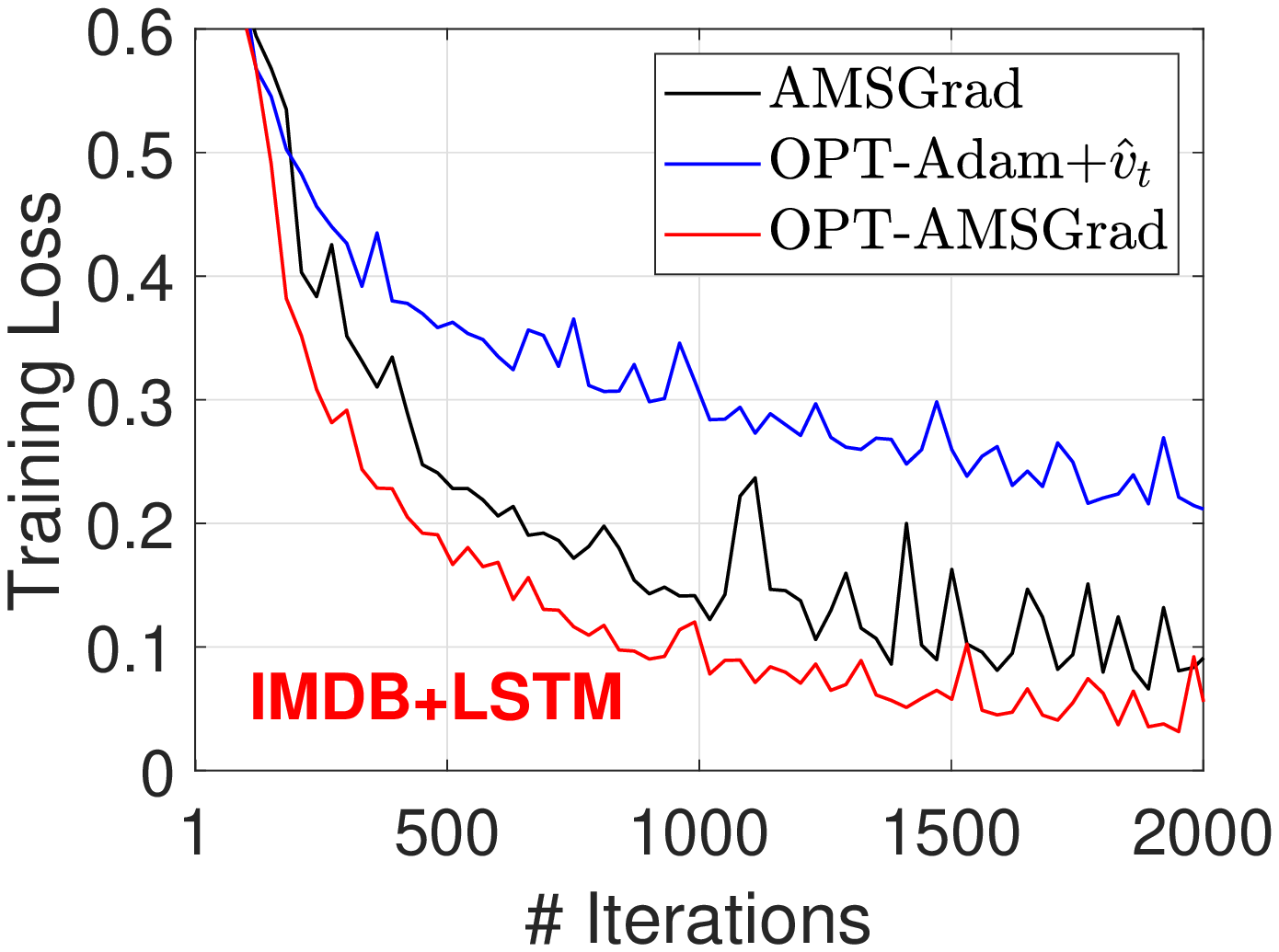}\hspace{-0.1in}
\includegraphics[width=2.1in]{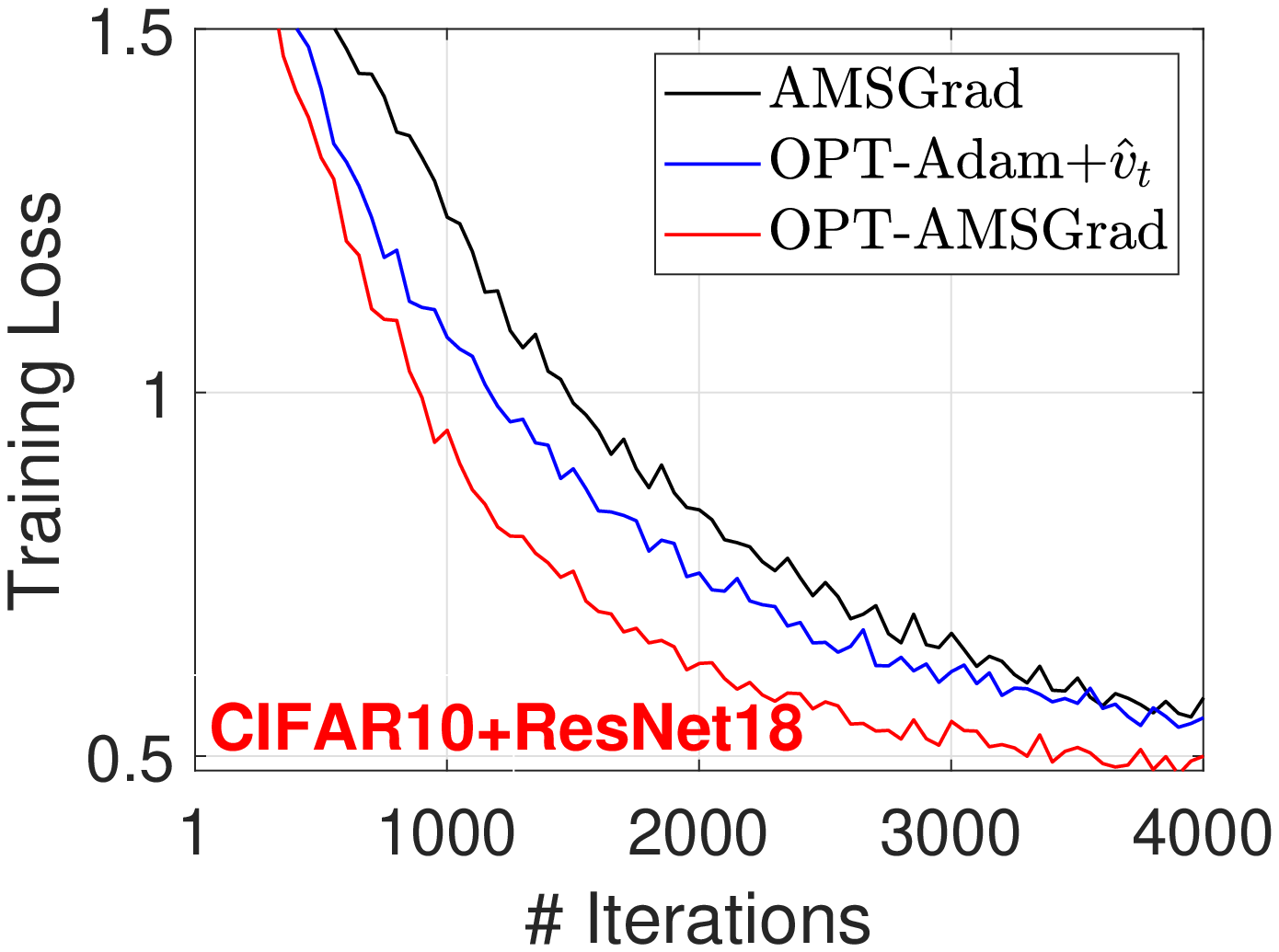}\hspace{-0.1in}
\includegraphics[width=2.1in]{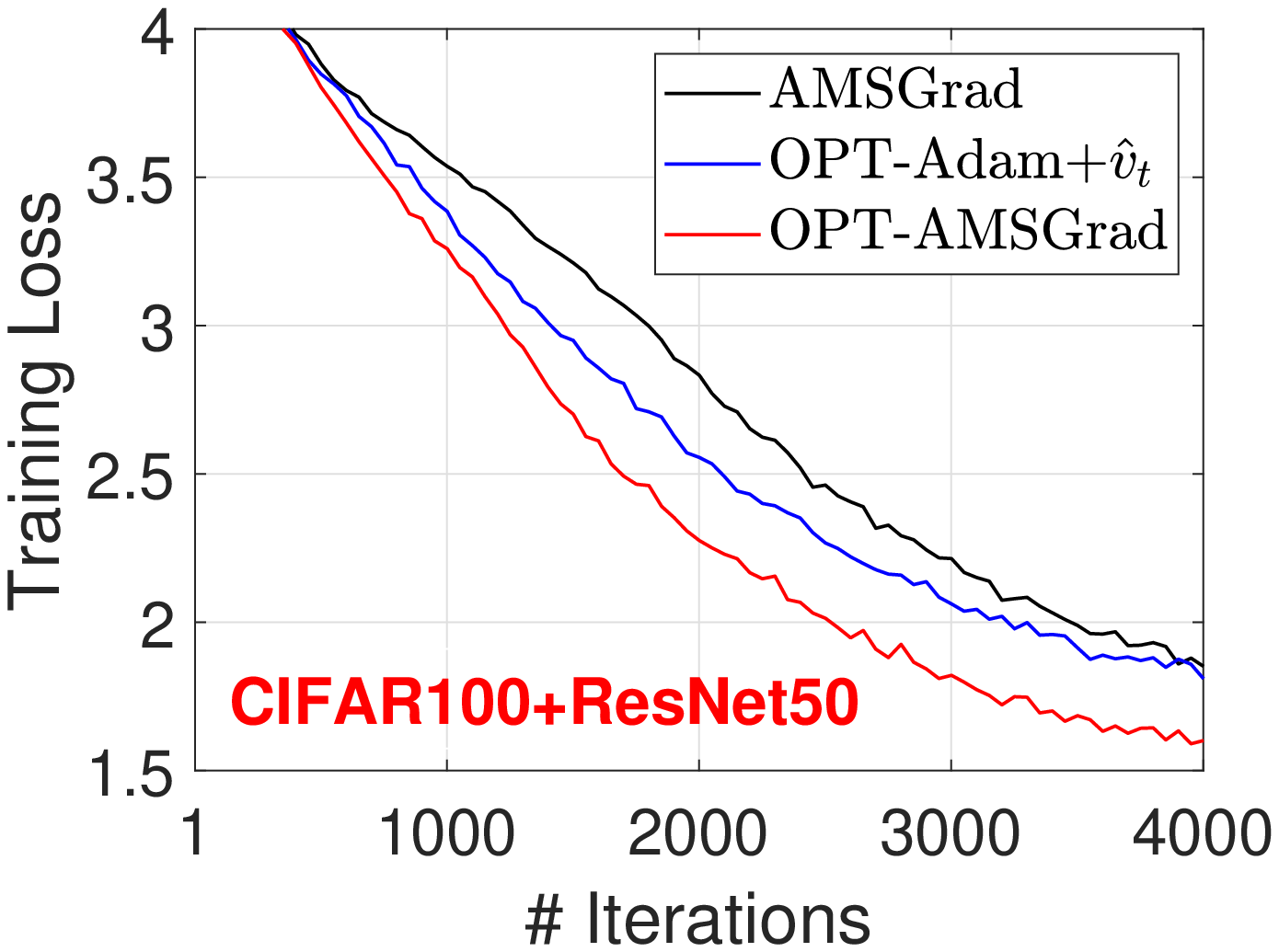}
}
\vspace{-0.1in}
\caption{Training loss vs. number of iterations for fully connected NN, CNN, LSTM and ResNet.}
\label{fig:train_loss}\vspace{-0.4in}
\end{figure}

\textbf{Results.} 
Firstly, to illustrate the acceleration effect of \textsc{OPT-AMSGrad} at early stage, we provide the training loss against number of iterations in Figure~\ref{fig:train_loss}. Clearly, on all datasets, the proposed \textsc{OPT-AMSGrad} converges faster than the other competing methods since fewer iterations are required to achieve the same precision, validating one of the main edges of \textsc{OPT-AMSGrad}.
We are also curious about the long-term performance and generalization of the proposed method in test phase.
In Figure~\ref{fig:testandtrain}, we plot the results when the model is trained until the test accuracy stabilizes. 
We observe: \textsf{(1)} in the long term, \textsc{OPT-AMSGrad} algorithm may converge to a better point with smaller objective function value, and \textsf{(2)} in these three applications, \textsc{OPT-AMSGrad} also outperforms the competing methods in terms of test~accuracy.

\begin{figure}[h]
\centering
\mbox{
\includegraphics[width=2.1in]{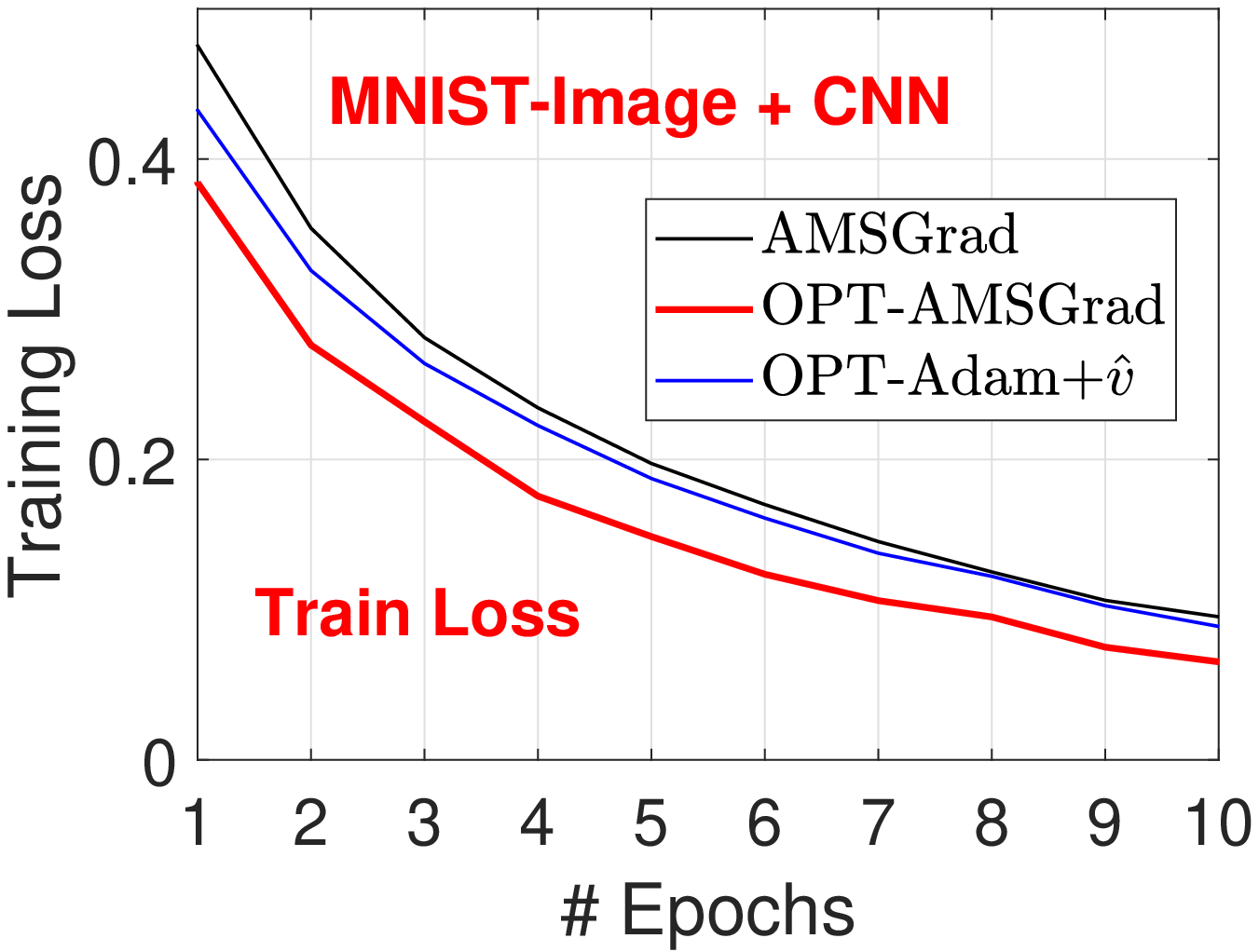}\hspace{-0.12in}
\includegraphics[width=2.1in]{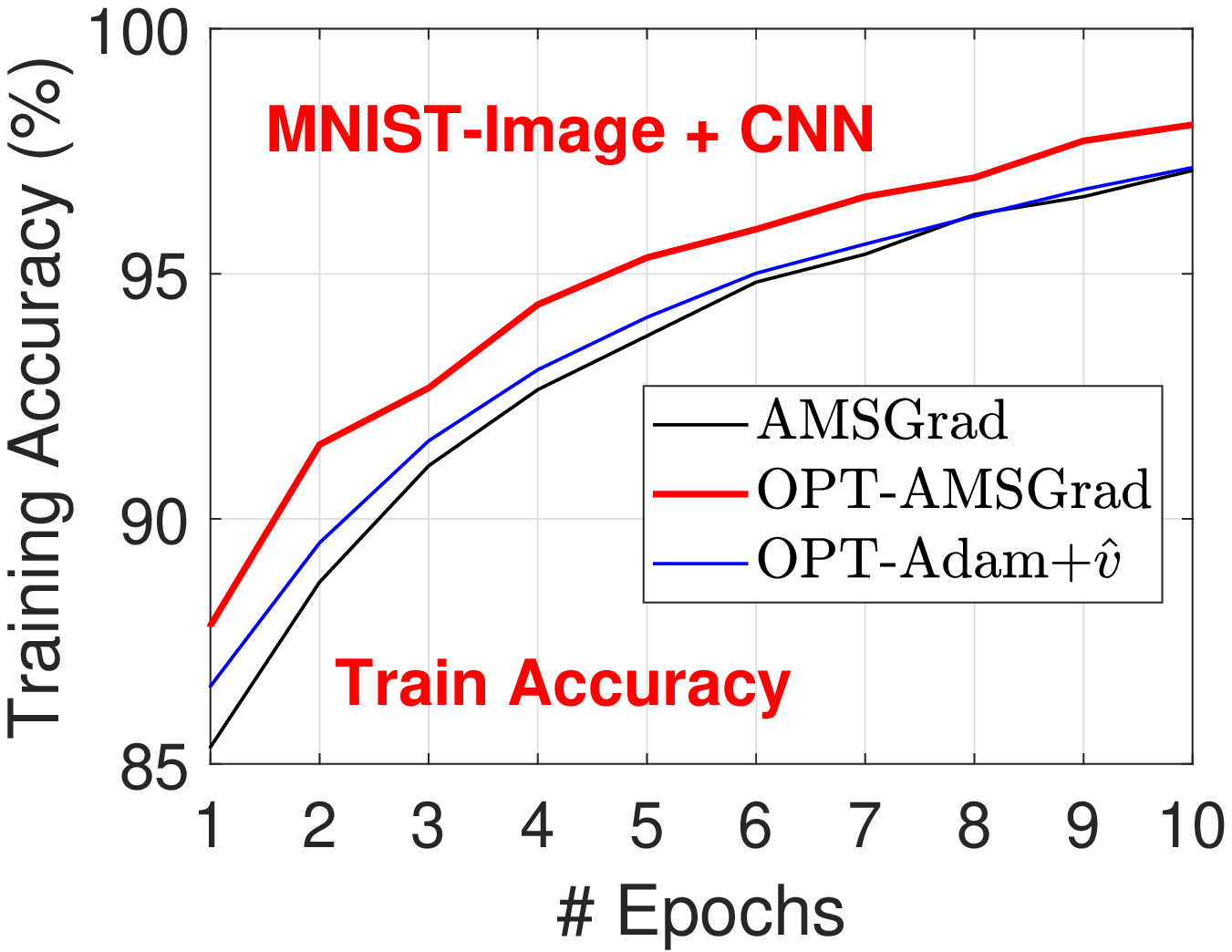}\hspace{-0.12in}
\includegraphics[width=2.1in]{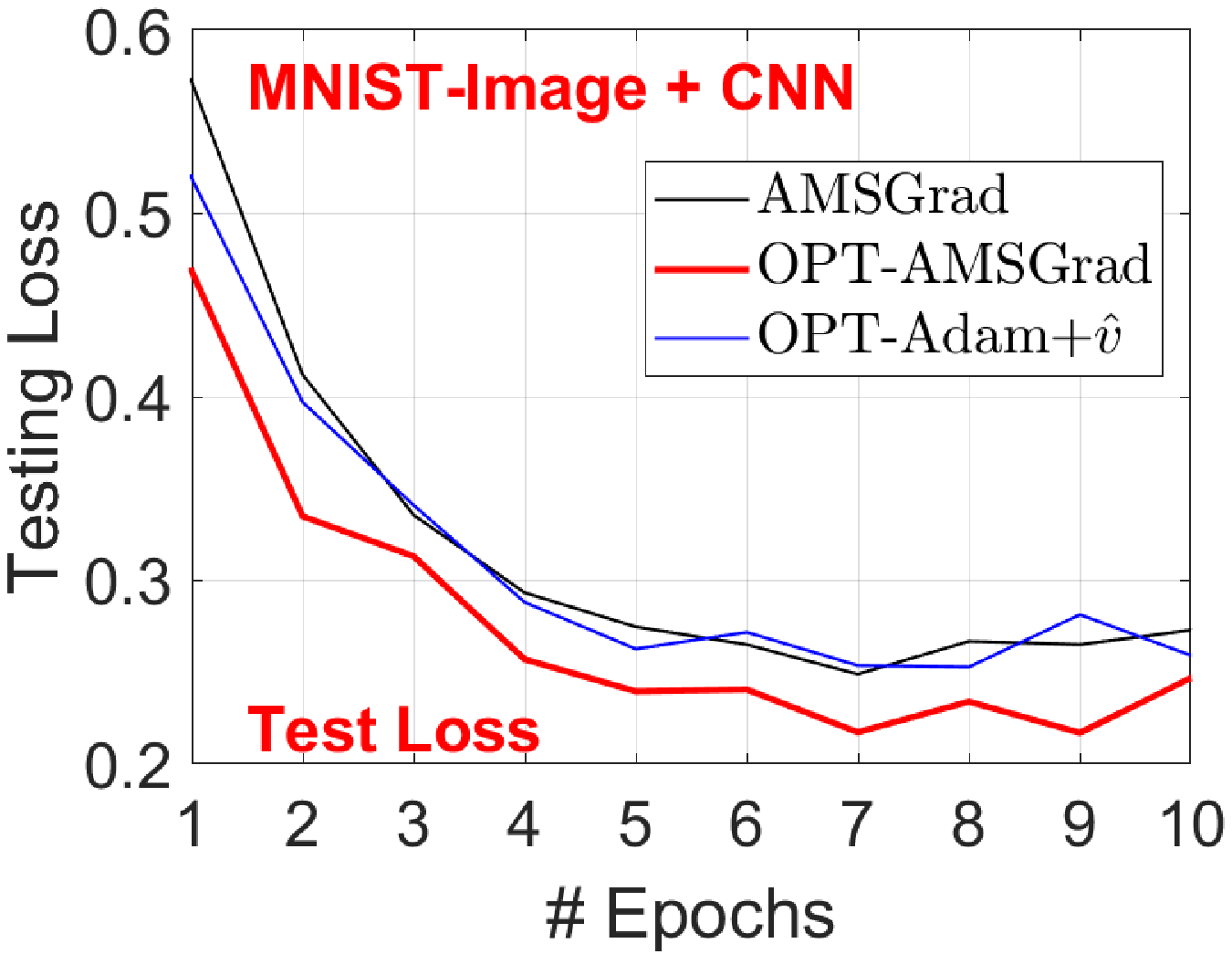}
}

\mbox{
\includegraphics[width=2.1in]{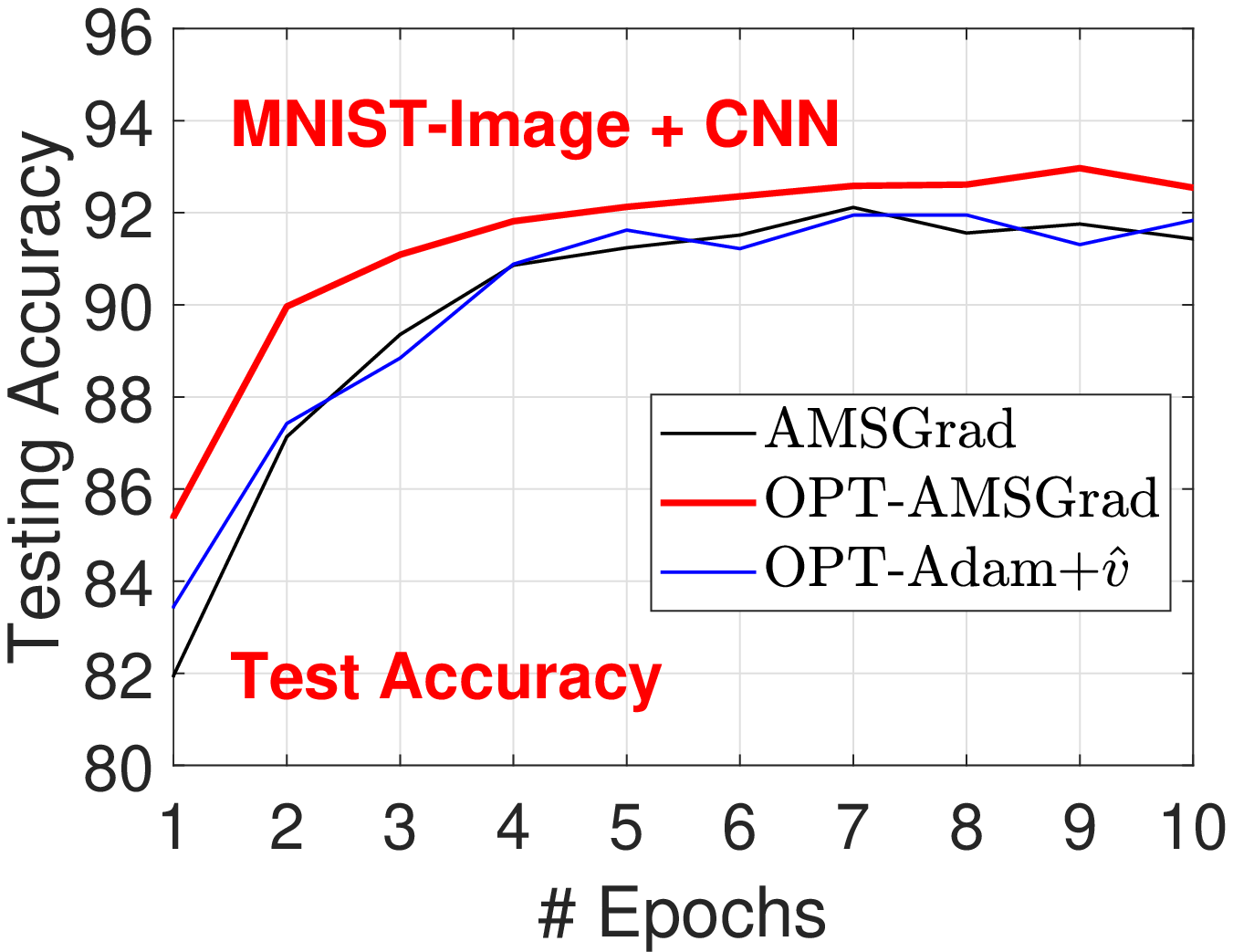}\hspace{-0.12in}
\includegraphics[width=2.1in]{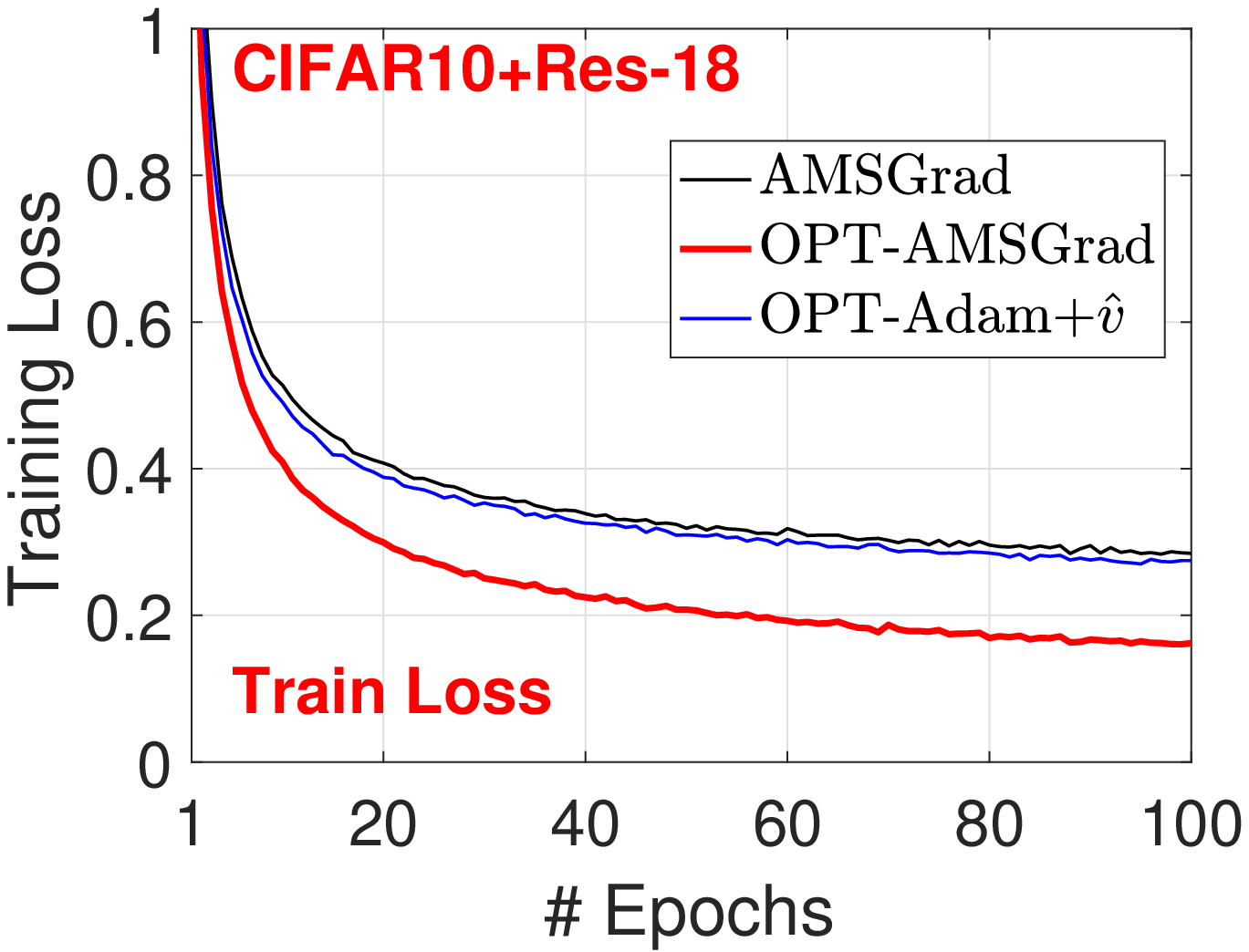}\hspace{-0.12in}
\includegraphics[width=2.1in]{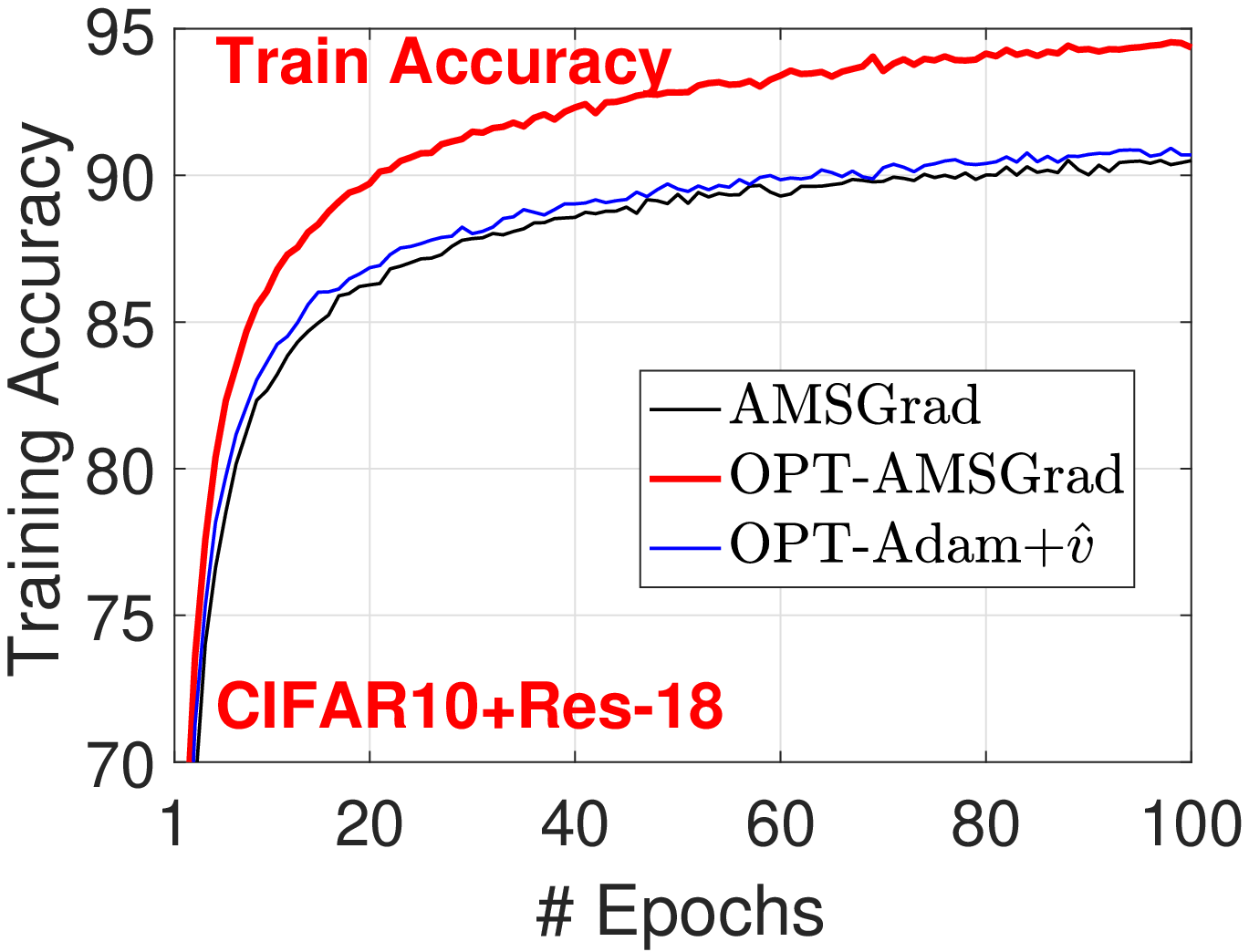}
}

\mbox{
\includegraphics[width=2.1in]{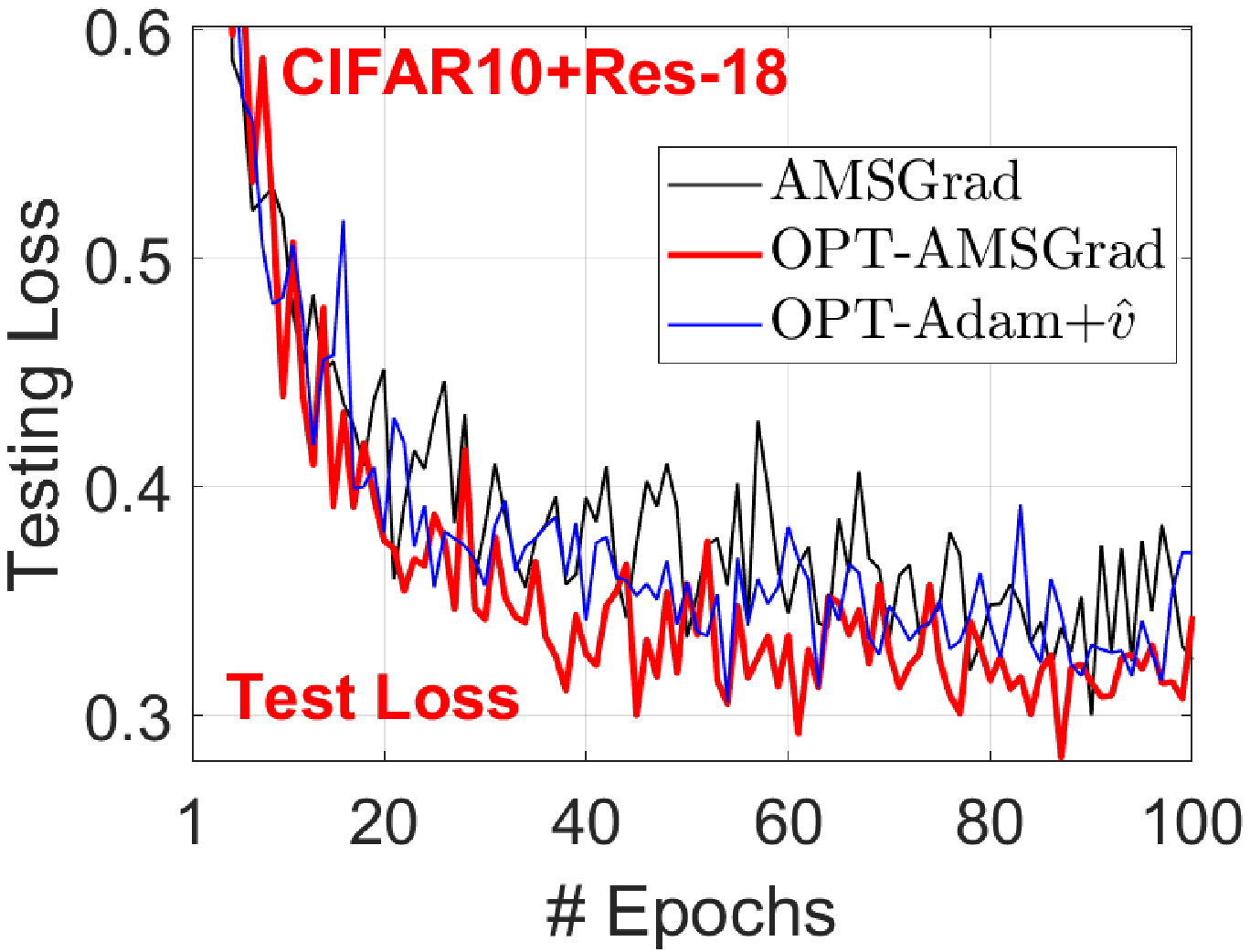}\hspace{-0.12in}
\includegraphics[width=2.1in]{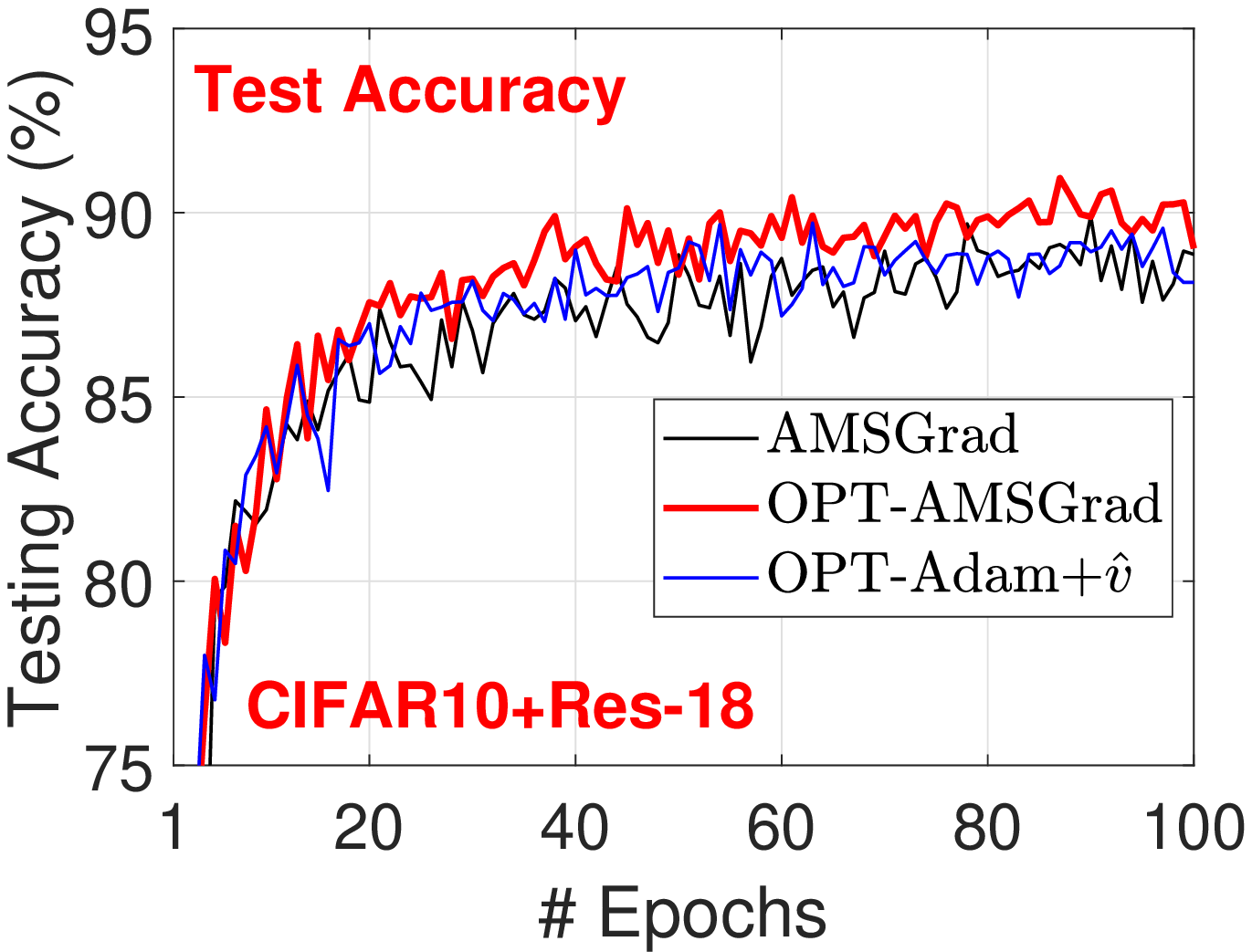}\hspace{-0.12in}
\includegraphics[width=2.1in]{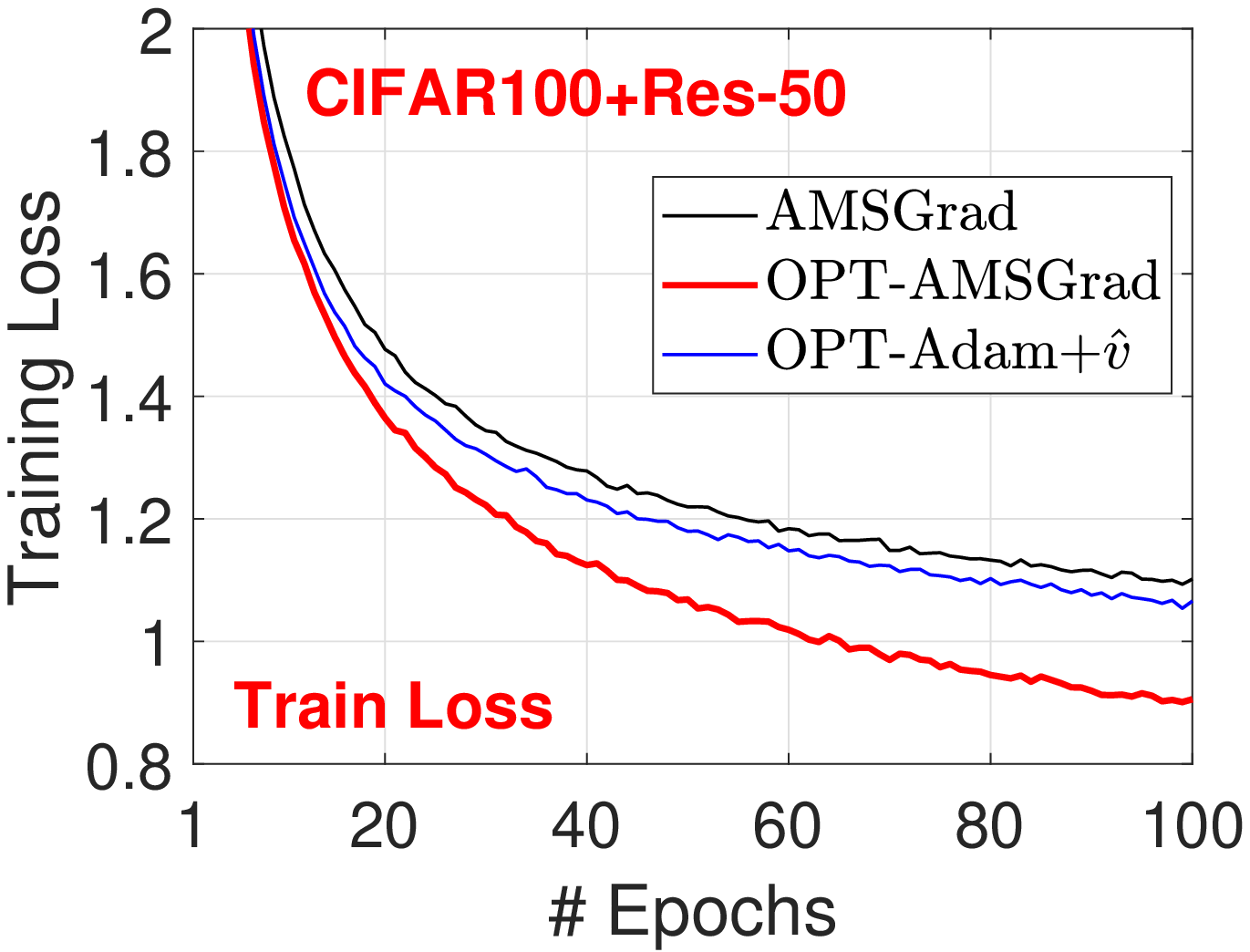}
}
\mbox{
\includegraphics[width=2.1in]{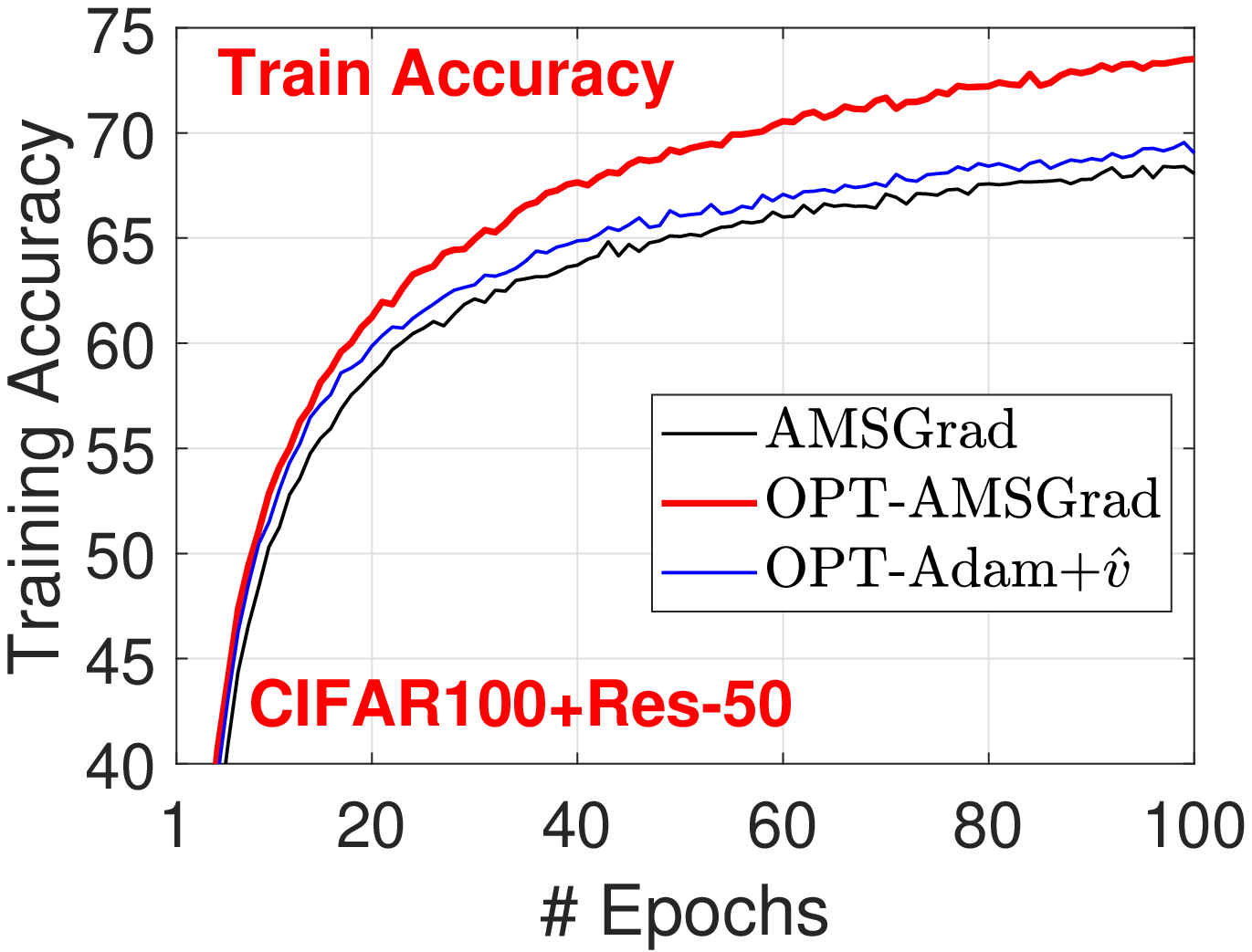}\hspace{-0.12in}
\includegraphics[width=2.1in]{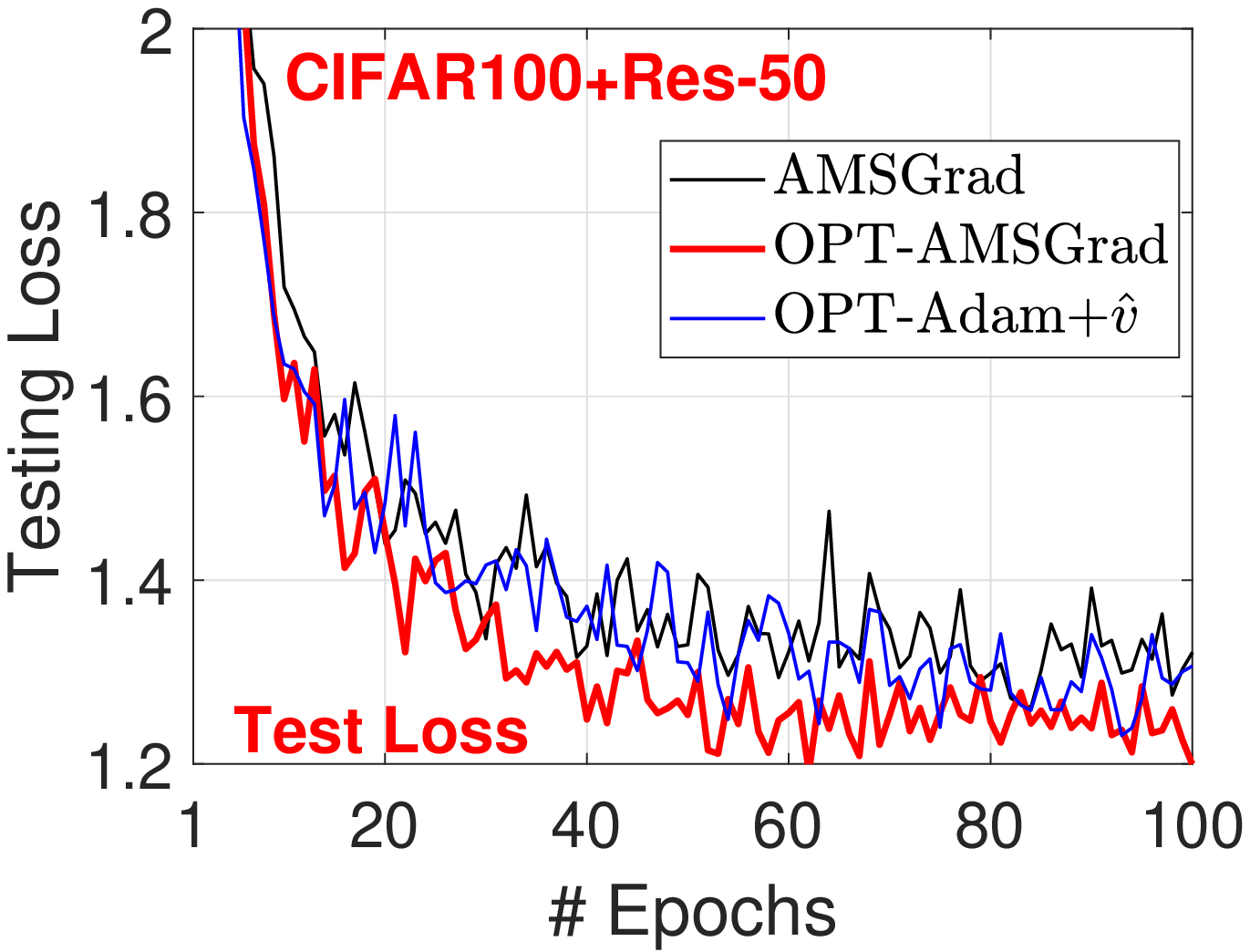}\hspace{-0.12in}
\includegraphics[width=2.1in]{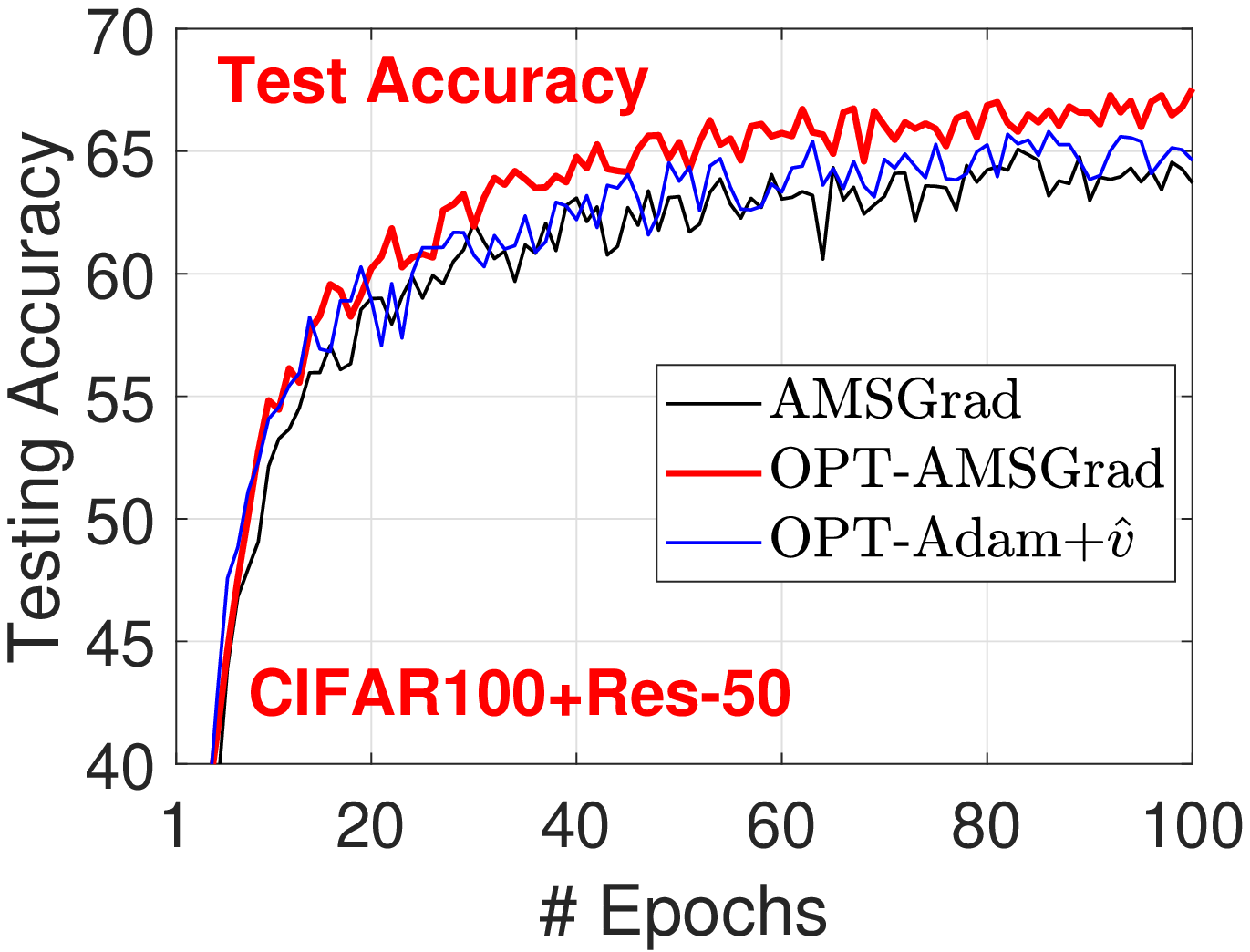}
}

\caption{\textit{MNIST-back-image} + CNN, \textit{CIFAR10} + Res-18 and \textit{CIFAR100} + Res-50 . We compare three methods in terms of training (cross-entropy) loss and accuracy, testing loss and accuracy.} \label{fig:testandtrain}
\end{figure}

\newpage

\subsection{Choice of parameter $r$}\label{sec:choicer}

Since the number of past gradients $r$ is important in gradient prediction (Algorithm~\ref{alg:algex}), we compare Figure~\ref{fig:compare} the performance under different values $r=3,5,10$ on two datasets. 
From the results we see that, taking into consideration both quality of gradient prediction and computational cost, $r=5$ is a good choice for most applications. 
We remark that, empirically, the performance comparison among $r=3,5,10$ is not absolutely consistent (i.e., more means better) in all cases. 
We suspect one possible reason is that for deep neural networks, the diversity of computed gradients through the iterations, due to the highly nonconvex loss, makes them inefficient for sequentially building the predictable process $\{m_t\}_{t>0}$. 
Thus, sometimes, the recent gradient vectors (e.g., $r\leq 5$) can be more informative. 
Yet, in some sense, this characteristic, very specific to deep neural networks, is itself a fundamental problem of gradient prediction methods.

\begin{figure}[h]
\centering
\mbox{\hspace{-0.15in}
\includegraphics[width=2.1in]{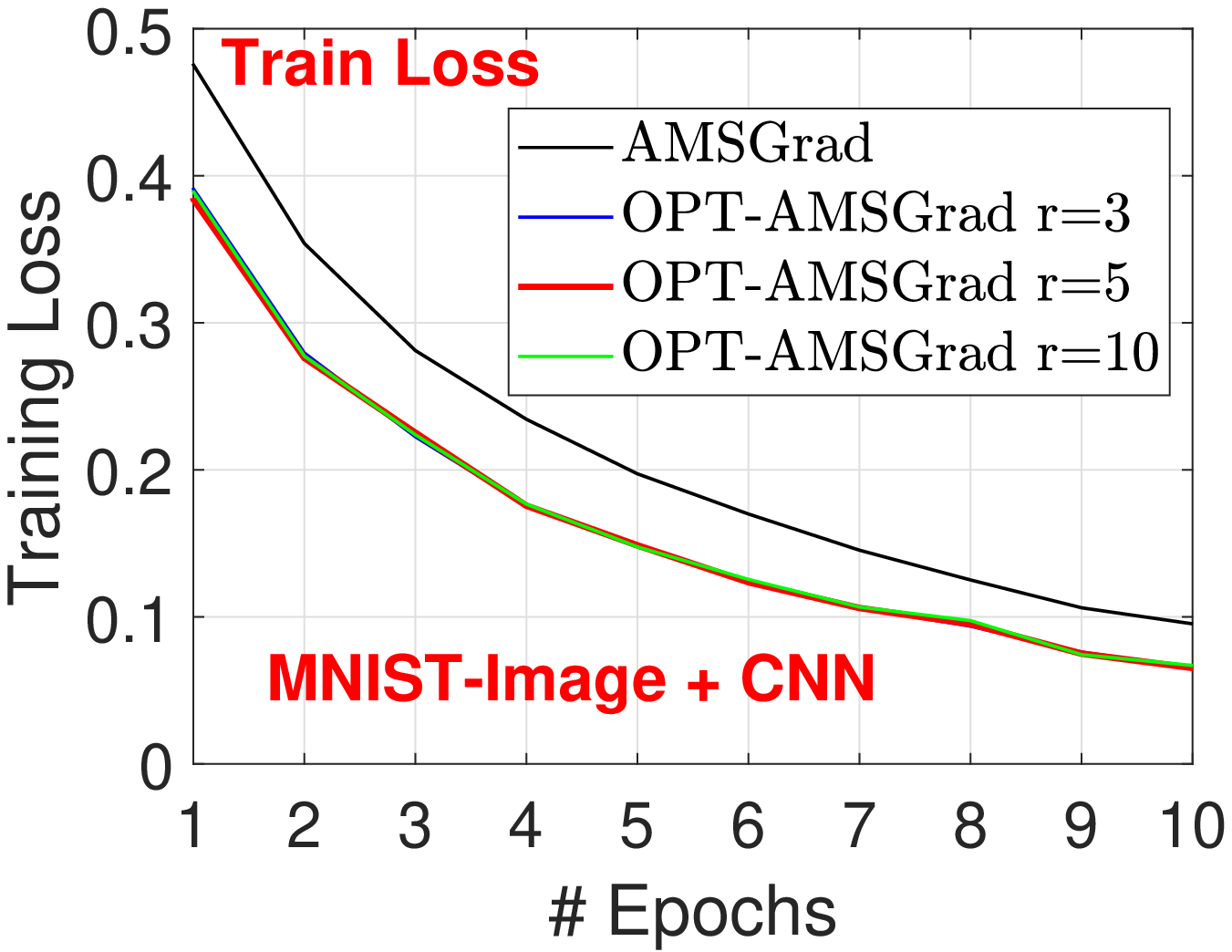}\hspace{-0.1in}
\includegraphics[width=2.1in]{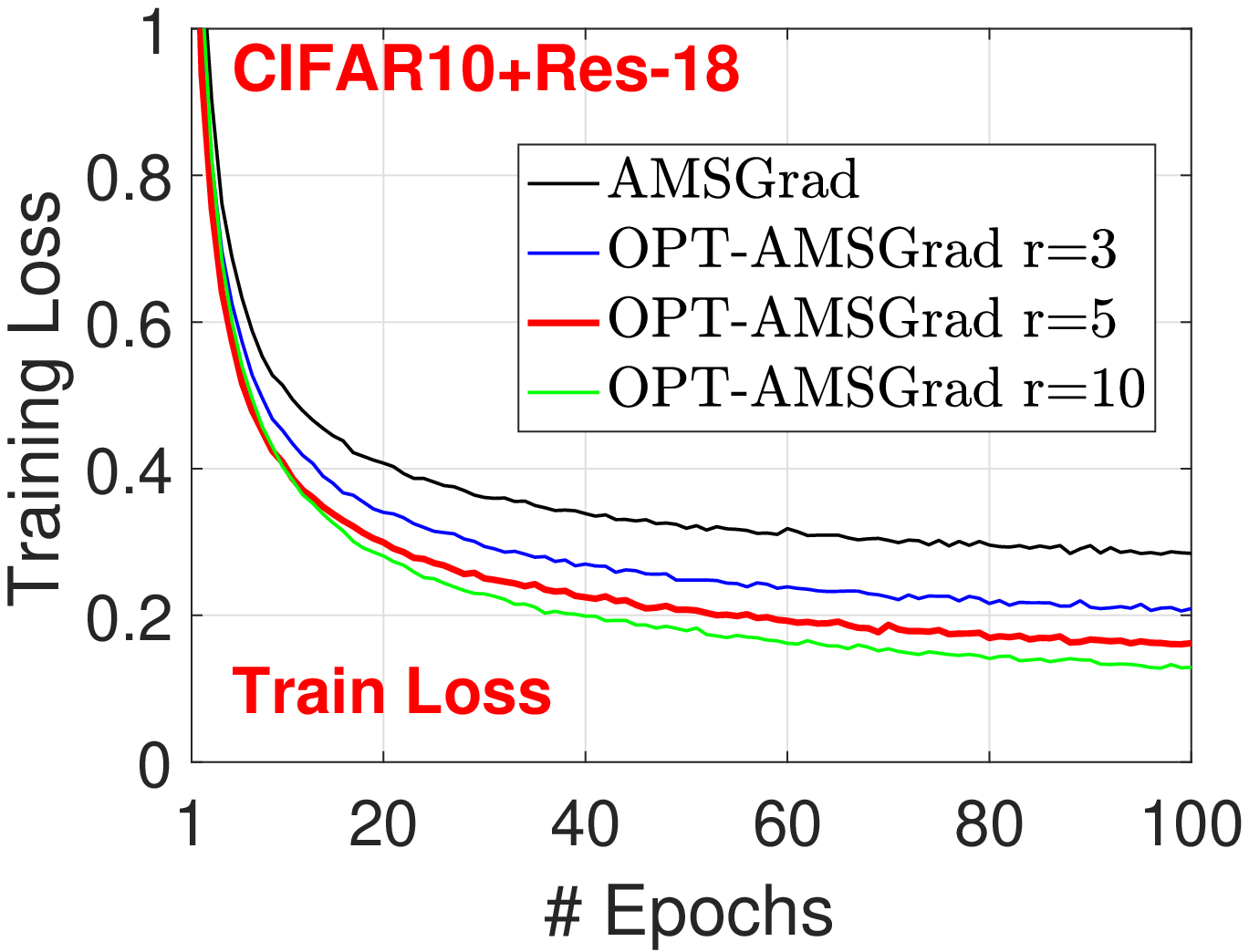}
}

\caption{Training loss w.r.t. different $r$ values.}\label{fig:compare}
\end{figure}

\section{Conclusion}

In this paper, we propose \textsc{OPT-AMSGrad}, which combines optimistic online learning and \textsc{AMSGrad} to improve sample efficiency and
accelerate the training process, in particular for deep neural networks. 
Given a good gradient prediction process, we demonstrate that the regret can be smaller than that of standard \textsc{AMSGrad}.
We also establish finite-time convergence bound on the second order moment of the gradient of the objective function matching that of state-of-the-art algorithms.
Experiments on various deep learning problems demonstrate the effectiveness of the proposed algorithm in accelerating the empirical risk minimization procedure and empirically show better generalization properties of \textsc{OPT-AMSGrad}.

\newpage\clearpage

\bibliographystyle{plain}
\bibliography{standard}

\clearpage

\appendix

\section{Additional Remarks on the Gradient Prediction Process}

\textbf{Two illustrative examples.}\hspace{0.1in} We provide two toy examples to demonstrate how \textsc{OPT-AMSGrad} works with the chosen extrapolation method. 
First, consider minimizing a quadratic function $H(w) := \frac{b}{2} w^2 $ with vanilla gradient descent method $w_{t+1} = w_t - \eta_t \nabla H(w_t)$. 
The gradient $g_{t}:= \nabla H(w_{t})$ can be recursively expressed as  $g_{t+1} = b w_{t+1} = b ( w_t  - \eta_t g_t ) = g_t - b \eta_t g_t  $.
Thus, the update can be written in the form of 
$$g_t = A g_{t-1}  + \mathcal{O}( \| g_{t-1} \|_2^2 ) u_{t-1}\, ,$$
where $A = (1 - b \eta)$ and $u_{t-1}=0$ by setting $\eta_t=\eta$ (constant step size).
Specifically, consider optimizing $H(w) := w^2/2 $ by the following three algorithms with the same step size.
One is Gradient Descent (GD): $w_{t+1} = w_t - \eta_t g_t$, while the other two are \textsc{OPT-AMSGrad} with $\beta_1=0$ and the second moment term $\hat{v}_t$ being dropped: $w_{t+\frac{1}{2}} = \Pi_{\Theta}\big[ w_{t-\frac{1}{2}} - \eta_t g_t \big]$, $w_{t+1} = \Pi_{\Theta}\big[ w_{t+\frac{1}{2}} - \eta_{t+1} m_{t+1} \big]$. 
We denote the algorithm that sets $m_{t+1}= g_t$ as \textsc{Opt-1}, and denote the algorithm that uses the extrapolation method to get $m_{t+1}$ as \textsc{Opt-extra}.
We let $\eta_t=0.1$ and the initial point $w_0=5$ for all three methods.

\begin{figure}[h]
\begin{center}
    \mbox{\hspace{-0.1in}
    \subfigure[]{
    \includegraphics[width=1.7in]{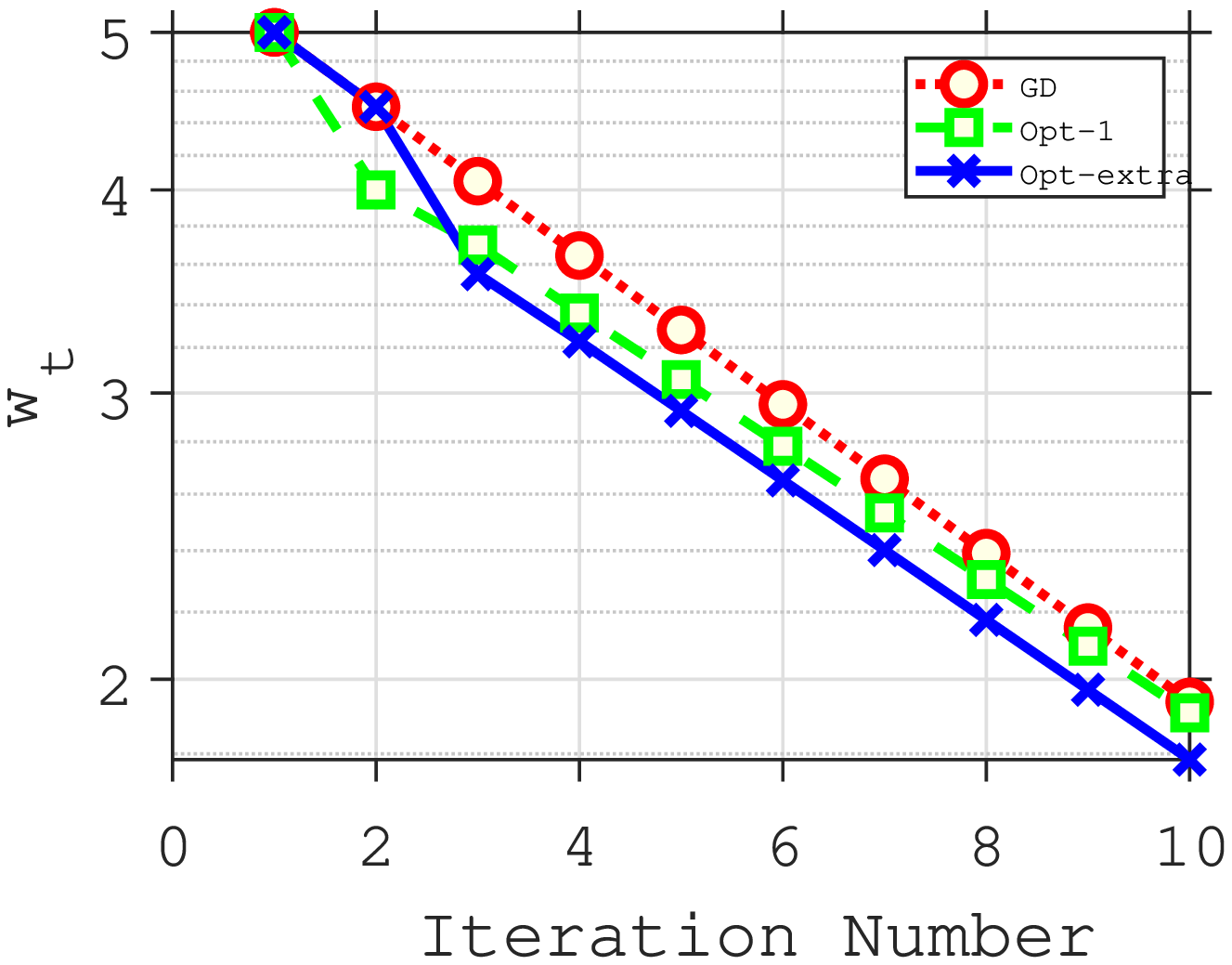}\hspace{-0.2in}
    }
    \subfigure[]{
    \includegraphics[width=1.7in]{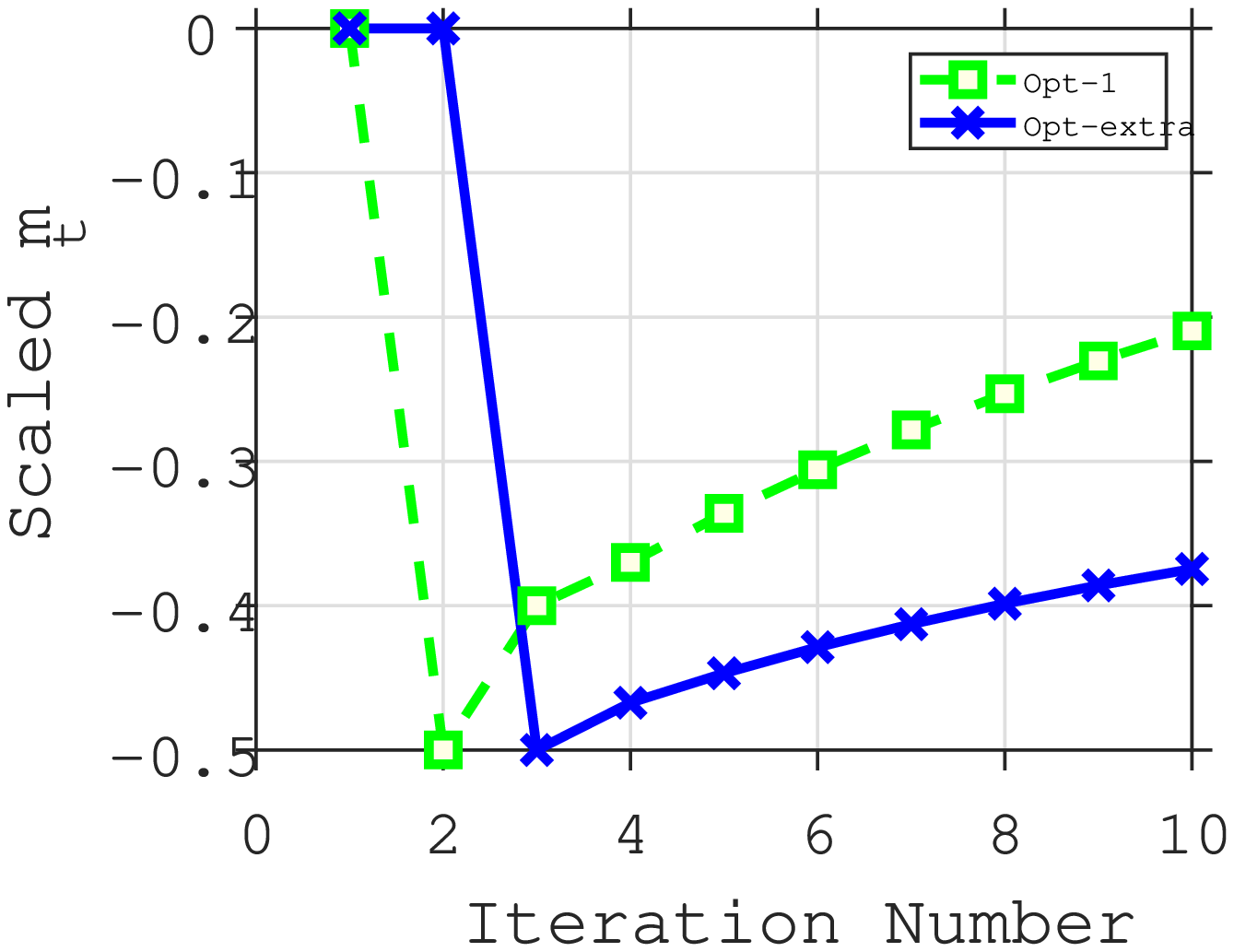}\hspace{-0.2in}
    }
    \subfigure[]{
    \includegraphics[width=1.7in]{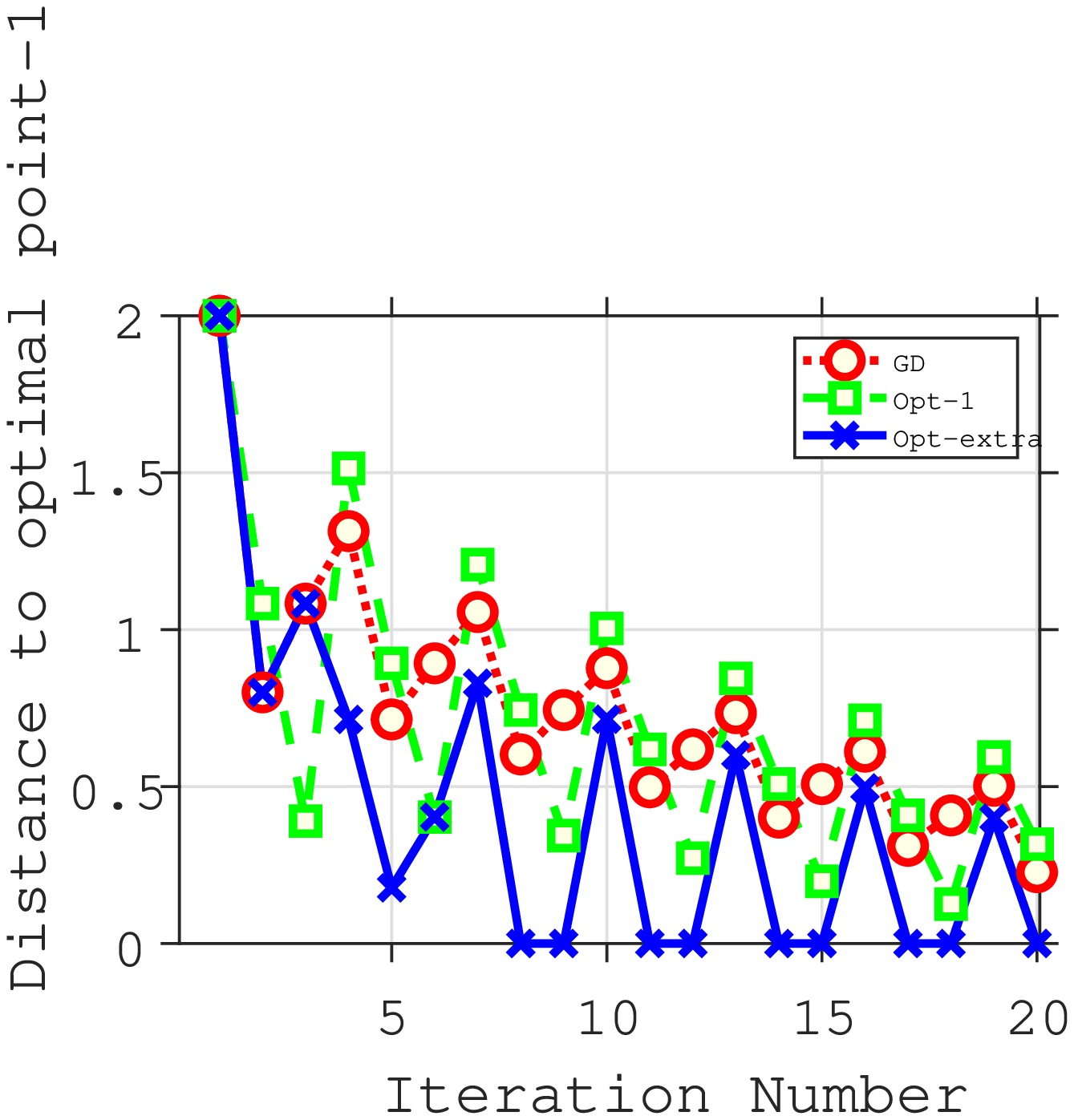}\hspace{-0.2in}
    }
        \subfigure[]{
    \includegraphics[width=1.7in]{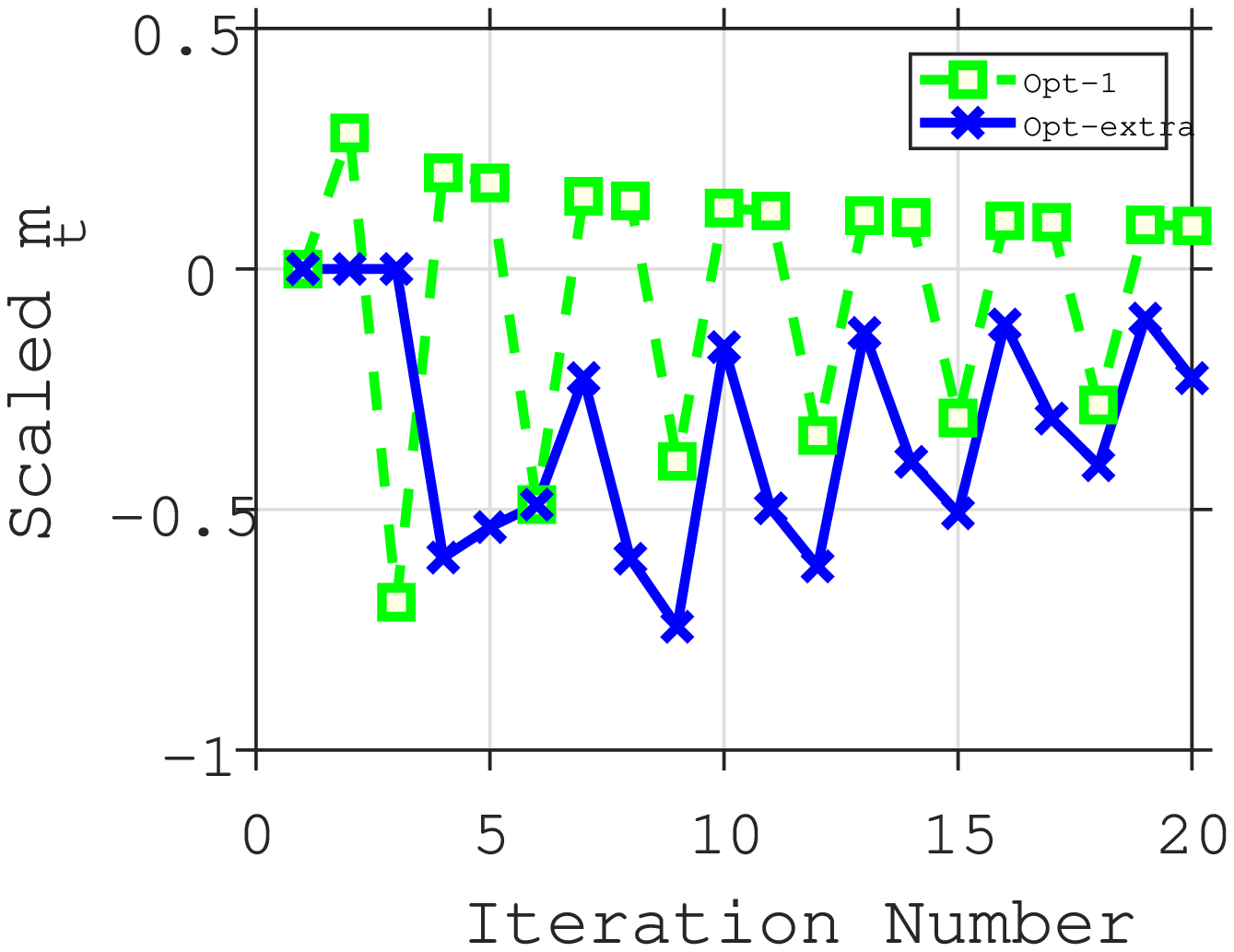}
    }
    }
\end{center}\vspace{-0.1in}
     \caption{\small (a): The iterate $w_t$; the closer to the optimal point $0$ the better. (b): A scaled and clipped version of $m_t$: $w_t - w_{t-1/2}$, which measures how the prediction of $m_t$ drives the update towards the optimal point. In this scenario, the more negative the better.
     (c): Distance to the optimal point $-1$. The smaller the better. (d): A scaled and clipped version of $m_t$: $w_t - w_{t-1/2}$, which measures how the prediction of $m_t$ drives
the update towards the optimal point. In this scenario, the more negative the better.
     }
     \label{simu}
\end{figure}

The simulation results are on Figure~\ref{simu} (a) and (b). Sub-figure (a) plots the updates $\{w_t\}_{t>0}$ through the iterations, where the updates go towards the optimal point $0$.
Sub-figure (b) displays a scaled and clipped version of $m_t$, defined as $w_t - w_{t-1/2}$, which can be viewed as $- \eta_t m_{t}$ if the projection (if existing) is lifted.
Sub-figure (a) shows that \textsc{Opt-extra} converges faster than the other methods. 
Furthermore, sub-figure (b) shows that the prediction by the extrapolation method is better than the prediction by simply using the previous gradient. 
The sub-figure shows that $-m_t$ from both methods points to $0$ for each iteration and the magnitude is larger for the one produced by the extrapolation method after iteration $2$. \footnote{The extrapolation needs at least two gradients for prediction. Thus, in the first two iterations,~$m_t=0$.}

Now let us consider another problem: an online learning problem proposed in~\cite{Proc:Reddi_ICLR18}
\footnote{\cite{Proc:Reddi_ICLR18} uses this example to show that \textsc{Adam}~\cite{Proc:Kingma_ICLR15} fails to converge.}.
Assume the learner's decision space is $\Theta=[-1,1]$, and the loss function is $\ell_t(w) = 3 w$ if $t \text{ mod } 3 = 1$, and $\ell_t(w) = - w$ otherwise.
The optimal point to minimize the cumulative loss is $w^*=-1$.
We let $\eta_t=0.1 / \sqrt{t}$ and the initial point $w_0=1$ for all three methods.
The parameter $\lambda$ of the extrapolation method is set to $\lambda=10^{-3}>0$. 
The results are reported Figure~\ref{simu} (c) and (d).
Sub-figure (c) shows that \textsc{Opt-extra} converges faster than the other methods while \textsc{Opt-1} is not performing better than GD.
The reason is that the gradient changes from $-1$ to $3$ at $t \text{ mod } 3 = 1$ and it changes from $3$ to $-1$ at $t \text{ mod } 3 = 2$.
Consequently, using the current gradient as the guess for the next is empirically not a good choice, since the next gradient is in the opposite direction of the current one, according to our experiments.
Sub-figure (d) shows that $-m_t$, obtained with the extrapolation method, always points to $w^*=-1$, while the one obtained by using the previous negative direction points to the opposite direction in two thirds of rounds. 
It empirically shows that the extrapolation method is much less affected by the gradient oscillation and always makes the prediction in the right direction, which suggests that the method can capture the aggregate effect.

\section{Proof of Theorem~\ref{thm:mainconvex}}\label{app:thmmainconvex}
\begin{Theorem*}
Suppose the learner incurs a sequence of convex loss functions $\{ \ell_{t}(\cdot) \}$.
Then,  \textsc{OPT-AMSGrad} (Algorithm~\ref{alg:optamsgrad}) has regret 
\begin{equation}\notag
\begin{aligned}
\mathcal{R}_T \leq &   \frac{ B_{\psi_1}(w^*, \tilde{w}_{1})}{\eta_1}
+ \sum_{t=1}^T\frac{\eta_t}{2} \| g_t - \tilde{m}_t  \|_{\psi_{t-1}^*}^2  + \frac{D_{\infty}^2}{\eta_{\min}}  \sum_{i=1}^d \hat{v}_{T}^{1/2}[i] + D_{\infty}^2\beta_1^2   \sum_{t=1}^T  \| g_t - \theta_{t-1}  \|_{\psi_{t-1}^*}\eqsp,
\end{aligned}
\end{equation}
where $ \tilde{m}_{t+1}  = \beta_1 \theta_{t-1} +(1-\beta_1) m_{t+1}$, $g_{t}:= \nabla \ell_{t}(w_t)$, $\eta_{{\min}} := \min_{{t}} \eta_{t}$ and $D_{\infty}^2$ is the diameter of the bounded set $\Theta$.
The result holds for any benchmark $w^{*} \in \Theta$ and any step size sequence $\{ \eta_t \}_{t>0}$.
\end{Theorem*}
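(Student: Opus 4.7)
The plan is to cast OPT-AMSGrad as a time-varying instance of optimistic mirror descent (OMD). The two updates in Algorithm~\ref{alg:optamsgrad} are exactly the update/commit steps of OMD with regularizer $\psi_t(\cdot)$ (induced by the diagonal metric $\text{diag}\{\hat v_t\}^{1/2}$), pseudo-gradient $\theta_t$, and optimistic guess $\tilde m_t=\beta_1\theta_{t-1}+(1-\beta_1)m_t$:
\[
\tilde w_{t+1}=\arg\min_{w\in\Theta}\,\eta_t\langle\theta_t,w\rangle+B_{\psi_t}(w,\tilde w_t),\qquad
w_{t+1}=\arg\min_{w\in\Theta}\,\eta_t\langle\tilde m_{t+1},w\rangle+B_{\psi_t}(w,\tilde w_{t+1}).
\]
Starting from convexity, I would bound $\mathcal{R}_T\leq\sum_t\langle g_t,w_t-w^*\rangle$ and split it as $\sum_t\langle\theta_t,w_t-w^*\rangle+\sum_t\langle g_t-\theta_t,w_t-w^*\rangle$. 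Using the momentum identity $g_t-\theta_t=\beta_1(g_t-\theta_{t-1})$, together with Cauchy--Schwarz in the dual Mahalanobis norm $\|\cdot\|_{\psi_{t-1}^*}$ and the $\ell_\infty$-diameter bound $\|w_t-w^*\|_\infty\leq D_\infty$, will produce the last term of the stated bound involving $\|g_t-\theta_{t-1}\|_{\psi_{t-1}^*}$ (the second $\beta_1$ appears once we track that $w_t-w^*$ itself carries a $\beta_1$-scaled contribution inherited from the preceding optimistic step, yielding the $\beta_1^2$ factor).

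For the leading $\sum_t\langle\theta_t,w_t-w^*\rangle$ piece, I would apply the three-point identity for Bregman divergences twice per round: once at $\tilde w_{t+1}$ (with reference $w^*$) and once at $w_{t+1}$ (with reference $\tilde w_{t+1}$). Adding them gives
\[
\eta_t\langle\theta_t,w_t-w^*\rangle\leq B_{\psi_t}(w^*,\tilde w_t)-B_{\psi_t}(w^*,\tilde w_{t+1})+\eta_t\langle\theta_t-\tilde m_t,w_t-\tilde w_{t+1}\rangle-B_{\psi_t}(\tilde w_{t+1},w_t).
\]
Using the $1$-strong convexity of $\psi_t$ in $\|\cdot\|_{\psi_t}$, Young's inequality absorbs $B_{\psi_t}(\tilde w_{t+1},w_t)$ into the cross-term and produces $\tfrac{\eta_t}{2}\|\theta_t-\tilde m_t\|_{\psi_{t-1}^*}^2$; a further rewriting using $\theta_t-\tilde m_t=(1-\beta_1)(g_t-m_t)$, combined with the decomposition of $\langle g_t-\theta_t,\cdot\rangle$ above, can then be assembled into the $\tfrac{\eta_t}{2}\|g_t-\tilde m_t\|_{\psi_{t-1}^*}^2$ term that appears in the statement.

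Finally, I would sum over $t$ and telescope the $B_{\psi_t}(w^*,\tilde w_\cdot)/\eta_t$ contributions. Because $\psi_t$ varies with $t$, the telescoping is not exact; I would write $B_{\psi_t}(w^*,\tilde w_{t+1})/\eta_t-B_{\psi_{t-1}}(w^*,\tilde w_{t+1})/\eta_{t-1}$ as a residual and bound it using the monotonicity $\hat v_t\succeq\hat v_{t-1}$ (line~\ref{line:maxop} of the algorithm), the definition $\eta_{\min}=\min_t\eta_t$, and $\|w^*-\tilde w_{t+1}\|_\infty\leq D_\infty$. After summation the residuals collapse into $\tfrac{D_\infty^2}{\eta_{\min}}\sum_{i=1}^d\hat v_T^{1/2}[i]$, while the leading Bregman term at $t=1$ contributes $B_{\psi_1}(w^*,\tilde w_1)/\eta_1$.

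The main obstacle is the simultaneous bookkeeping of (i) the time-varying regularizer $\psi_t$, which prevents clean telescoping and requires the monotonicity/$\ell_\infty$ argument above, and (ii) the momentum structure, which forces the proof to juggle three different gradient-like quantities ($g_t$, $\theta_t$, $\tilde m_t$) while matching the exact $\|g_t-\tilde m_t\|_{\psi_{t-1}^*}^2$ form and the precise $\beta_1^2$ coefficient on the drift correction. Getting both to land on the four terms of the theorem with the stated constants, rather than slightly suboptimal ones, is where the argument becomes delicate.
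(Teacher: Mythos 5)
Your plan follows essentially the same route as the paper's proof: both interpret the two-step update as optimistic mirror descent with the time-varying regularizer $\psi_t$, apply the proximal/three-point inequality to each of the two proximal steps, use Young's inequality on the cross term to produce $\tfrac{\eta_t}{2}\|g_t-\tilde m_t\|^2_{\psi_{t-1}^*}$, and control the non-telescoping Bregman residuals via the monotonicity of $\hat v_t$ together with the $\ell_\infty$-diameter, exactly as in the paper's terms $A_1$ and $A_2$. The only caveats are bookkeeping ones --- with the paper's indexing $\tilde m_t=\beta_1\theta_{t-2}+(1-\beta_1)m_t$, so $\theta_t-\tilde m_t=\beta_1(\theta_{t-1}-\theta_{t-2})+(1-\beta_1)(g_t-m_t)$ rather than $(1-\beta_1)(g_t-m_t)$, and the momentum-drift term $\langle g_t-\theta_t,w_t-w^*\rangle=\beta_1\langle g_t-\theta_{t-1},w_t-w^*\rangle$ naturally yields a factor $\beta_1$ rather than the stated $\beta_1^2$ (the paper itself reaches $\beta_1^2$ through an equally questionable norm identity) --- neither of which changes the structure of the argument.
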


\begin{proof}
Beforehand, we denote:
\beq\notag
\begin{split}
& \tilde{g}_t  = \beta_1 \theta_{t-1} +(1 - \beta_1) g_t \, , \\
& \tilde{m}_{t+1}  = \beta_1 \theta_{t-1} +(1-\beta_1) m_{t+1} \, ,
\end{split}
\eeq
where we recall that $g_t$ and $m_{t+1}$ are respectively the gradient $\nabla \ell_t(w_t)$ and the predictable guess.
By regret decomposition, we have that
\begin{equation} \label{nn1}
\begin{aligned}
 & \mathcal{R}_T:= \sum_{t=1}^T \ell_t(w_t) - \min_{w \in \Theta} \sum_{t=1}^T \ell_t(w)  \textstyle  \\
  \leq & \sum_{t=1}^T  \langle w_t - w^*, \nabla \ell_t(w_t) \rangle
\\  = &\sum_{t=1}^T \langle  w_t - \tilde{w}_{t+1} , g_t - \tilde{m}_t \rangle + \langle w_t - \tilde{w}_{t+1}, \tilde{m}_t \rangle + \langle \tilde{w}_{t+1} - w^*, \tilde{g}_t  \rangle+ \langle \tilde{w}_{t+1} - w^*,g_t - \tilde{g}_t  \rangle \eqsp.
\end{aligned}
\end{equation}

Recall the notation $\psi_t(x)$ and the Bregman divergence $B_{\psi_t}(u,v)$ defined Section~\ref{sec:analysis}.
We exploit a useful inequality (which appears in e.g.,~\cite{tseng2008accelerated}).
For any update of the form $\hat{w} = \arg\min_{w \in \Theta} \langle w, \theta \rangle + B_{\psi}(w, v)$, it holds that
\begin{equation} \label{ii}
\langle \hat{w} - u, \theta \rangle \leq B_{\psi}( u, v ) - B_{\psi}( u, \hat{w}) - B_{\psi}( \hat{w}, v) \quad \textrm{for any $u \in \Theta$} \eqsp.
\end{equation}
For $\beta_1=0$, we can rewrite the update on line 8 of (Algorithm~\ref{alg:optamsgrad}) as
\begin{equation} \label{nc1}
\tilde{w}_{t+1}= \arg\min_{w \in \Theta} \eta_t \langle w, \tilde{g}_t \rangle + B_{\psi_t}(w, \tilde{w}_{t} ) \eqsp.
\end{equation}
By using (\ref{ii}) for (\ref{nc1}) with $\hat{w} = \tilde{w}_{t+1}$ (the output of the minimization problem), $u = w^*$ and $v = \tilde{w}_{t}$, we have
\begin{equation} \label{nn2}
\begin{aligned}
 \langle \tilde{w}_{t+1} - w^*, & \tilde{g}_t \rangle \leq \frac{1}{\eta_t}\big[ B_{\psi_t}( w^*, \tilde{w}_{t}) -B_{\psi_t}(w^*,  \tilde{w}_{t+1} ) - B_{\psi_t}(\tilde{w}_{t+1}, \tilde{w}_{t}) \big] \eqsp.
\end{aligned}
\end{equation}

We can also rewrite the update on line 9 of (Algorithm~\ref{alg:optamsgrad}) at time $t$ as
\begin{equation} \label{nc2}
\textstyle w_{t+1} = \arg\min_{w \in \Theta} \eta_{t+1} \langle w, \tilde{m}_{t+1} \rangle + B_{\psi_t}(w, \tilde{w}_{t+1} ) \eqsp.
\end{equation}
and, by using \eqref{ii} for \eqref{nc2} (written at iteration $t$), with $\hat{w} = w_{t}$ (the output of the minimization problem), $u = \tilde{w}_{t+1}$ and $v = \tilde{w}_{t}$, we have
\begin{equation} \label{nn3}
\begin{aligned}
\langle w_t -\tilde{w}_{t+1}, & \tilde{m}_t  \rangle \leq \frac{1}{\eta_t}\big[ B_{\psi_{t-1}}(\tilde{w}_{t+1}, \tilde{w}_{t}) - B_{\psi_{t-1}}(\tilde{w}_{t+1}, w_t ) - B_{\psi_{t-1}}(w_{t}, \tilde{w}_{t}) \big] \eqsp.
\end{aligned}
\end{equation}
By (\ref{nn1}), (\ref{nn2}), and (\ref{nn3}), we obtain
\begin{equation} \notag
\begin{aligned}
 \mathcal{R}_T & \overset{(\ref{nn1})}{\leq} \sum_{t=1}^T \langle  w_t - \tilde{w}_{t+1} , g_t - \tilde{m}_t \rangle + \langle w_t - \tilde{w}_{t+1}, \tilde{m}_t \rangle + \langle \tilde{w}_{t+1} - w^*, \tilde{g}_t  \rangle+ \langle \tilde{w}_{t+1} - w^*,g_t - \tilde{g}_t  \rangle \\
& \overset{(\ref{nn2}), (\ref{nn3})}{\leq}  \sum_{t=1}^T \| w_t - \tilde{w}_{t+1} \|_{\psi_{t-1}} \| g_t - \tilde{m}_t  \|_{\psi_{t-1}^*} + \|  \tilde{w}_{t+1} - w^* \|_{\psi_{t-1}} \| g_t - \tilde{g}_t  \|_{\psi_{t-1}^*}\\
&+ \frac{1}{\eta_t} \big[ B_{\psi_{t-1}}(\tilde{w}_{t+1}, \tilde{w}_{t}) - B_{\psi_{t-1}}(\tilde{w}_{t+1}, w_t ) - B_{\psi_{t-1}}(w_{t}, \tilde{w}_{t}) \\
& +  B_{\psi_t}( w^*, \tilde{w}_{t}) -B_{\psi_t}(w^*,  \tilde{w}_{t+1} ) - B_{\psi_t}(\tilde{w}_{t+1}, \tilde{w}_{t}) \big] \eqsp,
\end{aligned}
\end{equation}
which is further bounded by
\begin{equation} \label{nnnn}
\begin{aligned}
  \mathcal{R}_T \leq \sum_{t=1}^T \Big\{  & \frac{1}{2 \eta_t} \| w_t - \tilde{w}_{t+1} \|_{\psi_{t-1}}^2 + \frac{\eta_t}{2} \| g_t - m_t  \|_{\psi_{t-1}^*}^2+ \|  \tilde{w}_{t+1} - w^* \|_{\psi_{t-1}} \| g_t - \tilde{g}_t  \|_{\psi_{t-1}^*}\\\
 &  + \frac{1}{\eta_t} \big(\underbrace{B_{\psi_{t-1}}(\tilde{w}_{t+1}, \tilde{w}_{t}) - B_{\psi_t}(\tilde{w}_{t+1}, \tilde{w}_{t}) }_{A_1}- \frac{1}{2} \| \tilde{w}_{t+1} - w_t \|_{\psi_{t-1}}^2\\
 &+\underbrace{B_{\psi_t}( w^*, \tilde{w}_{t}) -B_{\psi_t}(w^*,  \tilde{w}_{t+1} )}_{A_2}  \big) \Big\} \eqsp,
\end{aligned}
\end{equation}
where the inequality is due to $ \| w_t - \tilde{w}_{t+1}   \|_{\psi_{t-1}} \| g_t - m_t  \|_{\psi_{t-1}^*} = \inf_{ \beta > 0 }   \frac{1}{2\beta} \| w_t - \tilde{w}_{t+1} \|_{\psi_{t-1}}^2 +  \frac{\beta}{2} \| g_t - m_t  \|_{\psi_{t-1}^*}^2$ by Young's inequality and the 1-strongly convex of $\psi_{t-1}(\cdot)$ with respect to $\| \cdot \|_{\psi_{t-1}}$ which yields that $B_{\psi_{t-1}}(\tilde{w}_{t+1}, w_t )  \geq \frac{1}{2} \| \tilde{w}_{t+1} -  w_t  \|^2_{\psi_t} \geq 0$. 

\vspace{0.1in}

To proceed, notice that
\begin{equation} \label{nn5}
\begin{aligned}
A_1& :=  B_{\psi_{t-1}}(\tilde{w}_{t+1}, \tilde{w}_{t}) - B_{\psi_t}(\tilde{w}_{t+1}, \tilde{w}_{t})  \\
& = \langle \tilde{w}_{t+1} - \tilde{w}_{t} , \text{diag}(\hat{v}_{t-1}^{1/2} -\hat{v}_t^{1/2} ) ( \tilde{w}_{t+1}- \tilde{w}_{t} ) \rangle \leq 0 \eqsp,
\end{aligned}
\end{equation}
as the sequence $\{\hat{v}_t\}$ is non-decreasing. And that
\begin{equation}\label{nn4}
\begin{aligned}
A_2 := B_{\psi_t}( w^*, \tilde{w}_{t}) -B_{\psi_t}(w^*,  \tilde{w}_{t+1} )  &= \langle w^* - \tilde{w}_{t+1}  , \text{diag}(\hat{v}_{t+1}^{1/2} -\hat{v}_t^{1/2} ) ( w^* - \tilde{w}_{t+1}  ) \rangle\\
  & \leq ( \max_i (w^*[i] -  \tilde{w}_{t+1} [i] )^2  )\cdot ( \sum_{i=1}^d \hat{v}_{t+1}^{1/2}[i] -\hat{v}_t^{1/2}[i] ) \eqsp.
\end{aligned}
\end{equation}
Therefore, by (\ref{nnnn}),(\ref{nn4}),(\ref{nn5}), we have
\begin{equation}\notag
\begin{aligned}
\mathcal{R}_T \leq  \frac{D_{\infty}^2}{\eta_{\min}}  \sum_{i=1}^d \hat{v}_{T}^{1/2}[i] + \frac{ B_{\psi_1}(w^*, \tilde{w}_{1})}{\eta_1}
+ \sum_{t=1}^T\frac{\eta_t}{2} \| g_t - \tilde{m}_t  \|_{\psi_{t-1}^*}^2  +D_{\infty}^2 \beta_1^2  \sum_{t=1}^T \| g_t - \theta_{t-1}  \|_{\psi_{t-1}^*}  \eqsp,
\end{aligned}
\end{equation}
since $  \| g_t - \tilde{g}_t  \|_{\psi_{t-1}^*} =  \| g_t - \beta_1 \theta_{t-1} -(1- \beta_1) g_t \|_{\psi_{t-1}^*} = \beta^2 \| g_t - \theta_{t-1}  \|_{\psi_{t-1}^*} $.
This completes the proof.

\end{proof}

\vspace{0.2in}

\section{Proof of Corollary~\ref{cor:corollary}}
\begin{Corollary*}
Suppose $\beta_1=0$ and $\{v_t\}_{t>0}$ is a monotonically increasing sequence, then we obtain the following regret bound for any $w^{*} \in \Theta$ and sequence of stepsizes $\{ \eta_t = \eta/\sqrt{t}\}_{t>0}$: 
\begin{equation}\notag
\begin{aligned}
\mathcal{R}_T \leq & \frac{ B_{\psi_1}}{\eta_1}
+ \frac{\eta \sqrt{1 + \log T}}{\sqrt{1 - \beta_2}} \sum_{i=1}^d \| (g-m)_{1:T}[i] \|_2 +\frac{D_{\infty}^2}{\eta_{\min}} \sum_{i=1}^d \left[ (1-\beta_2) \sum_{s=1}^{T} \beta_2^{T-s} g^2_s[i] \right]^{1/2} \eqsp,
\end{aligned}
\end{equation}
where $B_{\psi_1} := B_{\psi_1}(w^*, \tilde{w}_{1})$, $g_{t}:= \nabla \ell_{t}(w_t)$ and $\eta_{{\min}} := \min_{{t}} \eta_{t}$.
\end{Corollary*}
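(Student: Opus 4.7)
\textbf{Specialization of Theorem~\ref{thm:mainconvex}.} The plan is to start from the general bound of Theorem~\ref{thm:mainconvex} and substitute $\beta_{1}=0$. Two immediate simplifications occur: the fourth summand carries a factor $\beta_{1}^{2}$ and therefore vanishes identically, while the auxiliary sequence $\tilde m_{t}=\beta_{1}\theta_{t-1}+(1-\beta_{1})m_{t}$ collapses to $m_{t}$. The surviving estimate
\[\mathcal{R}_{T}\leq \frac{B_{\psi_{1}}(w^{*},\tilde w_{1})}{\eta_{1}}+\sum_{t=1}^{T}\frac{\eta_{t}}{2}\|g_{t}-m_{t}\|_{\psi_{t-1}^{*}}^{2}+\frac{D_{\infty}^{2}}{\eta_{\min}}\sum_{i=1}^{d}\hat v_{T}^{1/2}[i]\]
already matches the first term of the Corollary verbatim, so only the middle and last summands need to be reshaped.

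\textbf{Reshaping the $\hat v_{T}$ term.} The monotonicity assumption on $\{v_{t}\}$ makes the clipping $\hat v_{t}=\max(\hat v_{t-1},v_{t})$ vacuous, so $\hat v_{T}=v_{T}$. Unrolling the coordinatewise EMA recursion $v_{t}=\beta_{2}v_{t-1}+(1-\beta_{2})g_{t}^{2}$ from the initialization $v_{0}=\epsilon\mathbf{1}$ yields $v_{T}[i]=(1-\beta_{2})\sum_{s=1}^{T}\beta_{2}^{T-s}g_{s}^{2}[i]+\beta_{2}^{T}\epsilon$, whose square root (after absorbing the vanishing $\beta_{2}^{T}\epsilon$ initialization contribution) reproduces the third summand of the Corollary.

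\textbf{Reshaping the optimistic-regret term.} Expanding the dual norm as $\|x\|_{\psi_{t-1}^{*}}^{2}=\sum_{i}x[i]^{2}/\hat v_{t-1}^{1/2}[i]$ and substituting $\eta_{t}=\eta/\sqrt t$ turns the middle summand into
\[\frac{\eta}{2}\sum_{i=1}^{d}\sum_{t=1}^{T}\frac{(g_{t}[i]-m_{t}[i])^{2}}{\sqrt t\,\hat v_{t-1}^{1/2}[i]}.\]
From here I would follow the two-step recipe familiar from the AMSGrad analysis in~\cite{Proc:Reddi_ICLR18}: first, bound the denominator below via the EMA recursion, which contributes a $1/\sqrt{1-\beta_{2}}$ factor and converts one power of $(g_{t}[i]-m_{t}[i])^{2}$ in the numerator into $|g_{t}[i]-m_{t}[i]|$; second, apply Cauchy-Schwarz over $t$ with weights $1/\sqrt t$, which pulls out $\sqrt{\sum_{t=1}^{T}1/t}\leq \sqrt{1+\log T}$ and leaves $\sqrt{\sum_{t=1}^{T}(g_{t}[i]-m_{t}[i])^{2}}=\|(g-m)_{1:T}[i]\|_{2}$. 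Summing over coordinates $i$ then gives the middle summand of the Corollary.

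\textbf{Anticipated obstacle.} The only non-routine step is the lower bound on $\hat v_{t-1}^{1/2}[i]$ in the first part above, because $\hat v_{t-1}$ is built from $g_{1},\dots,g_{t-1}$ and has no \emph{a priori} relation to the current prediction error $g_{t}-m_{t}$. In the standard AMSGrad analysis this mismatch is handled via an index shift combined with the monotonicity of $\{\hat v_{t}\}$ and the elementary bound $\hat v_{t}[i]\geq(1-\beta_{2})g_{t}^{2}[i]$; in the optimistic setting the same device must be applied after relating the residual $g_{t}-m_{t}$ to $g_{t}$, which is the one place where some care is required. Everything else in the argument is routine bookkeeping and Cauchy-Schwarz.
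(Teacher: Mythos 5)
Your overall route is the same as the paper's: substitute $\beta_1=0$ into Theorem~\ref{thm:mainconvex} (killing the fourth summand and collapsing $\tilde m_t$ to $m_t$), identify $\hat v_T=v_T$ under the monotonicity hypothesis and unroll the EMA to get the third summand, then attack the middle summand coordinatewise with a $1/\sqrt{1-\beta_2}$ lower bound on the denominator followed by Cauchy--Schwarz over $t$ to produce $\sqrt{1+\log T}\,\|(g-m)_{1:T}[i]\|_2$. All of that matches the paper's proof step for step.

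However, the ``anticipated obstacle'' you flag at the end is not a routine index-shift issue: it is a genuine unbridged gap, and it is exactly the point where the paper's own argument fails to be rigorous. To extract $|g_t[i]-m_t[i]|/\sqrt{1-\beta_2}$ from $(g_t[i]-m_t[i])^2/\sqrt{\hat v_{t-1}[i]}$ you need a lower bound of the form $\hat v_{t-1}[i]\gtrsim(1-\beta_2)(g_t[i]-m_t[i])^2$. But by line~6 of Algorithm~\ref{alg:optamsgrad}, $v_t$ accumulates $g_s^2$, not $(g_s-m_s)^2$, so the available bound is $\hat v_{t-1}[i]\geq(1-\beta_2)g_{t-1}^2[i]$, and there is no reason for the residual $|g_t[i]-m_t[i]|$ to be controlled by $|g_{t-1}[i]|$ (indeed, a good predictor makes the residual small while the gradient itself need not be). The paper circumvents this by silently rewriting $v_{T-1}[i]$ as $(1-\beta_2)\sum_{s=1}^{T-1}\beta_2^{T-1-s}(g_s[i]-m_s[i])^2$ --- i.e., pretending the second moment is built from residuals, contradicting the stated algorithm --- and then invoking the explicitly labelled ``rough approximation'' $\sum_{s=1}^{t-1}\beta_2^{t-1-s}(g_s[i]-m_s[i])^2\simeq(g_t[i]-m_t[i])^2$, which is not an inequality in either direction. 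So you have correctly located the crux, but ``relating the residual $g_t-m_t$ to $g_t$'' cannot be done in general; closing the argument requires either modifying the algorithm so that $v_t$ tracks the squared residuals, or an additional assumption tying $|g_t-m_t|$ to $\hat v_{t-1}$. Neither your proposal nor the paper's proof supplies this, so the middle summand of the Corollary is not actually established as stated.
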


\begin{proof}
Recall the bound in Theorem~\ref{thm:mainconvex}:
\begin{equation}\notag
\begin{aligned}
\mathcal{R}_T \leq &   \frac{ B_{\psi_1}(w^*, \tilde{w}_{1})}{\eta_1}
+ \sum_{t=1}^T\frac{\eta_t}{2} \| g_t - \tilde{m}_t  \|_{\psi_{t-1}^*}^2  + \frac{D_{\infty}^2}{\eta_{\min}}  \sum_{i=1}^d \hat{v}_{T}^{1/2}[i] + D_{\infty}^2\beta_1^2   \sum_{t=1}^T  \| g_t - \theta_{t-1}  \|_{\psi_{t-1}^*}\eqsp.
\end{aligned}
\end{equation}

The second term reads:
\begin{align*}
     &\sum_{t=1}^T \frac{\eta_t}{2} \|g_t - m_t  \|_{\psi_{t-1}^*}^2 \\
      =&\sum_{t=1}^{T-1} \frac{\eta_t}{2} \|g_t - m_t  \|_{\psi_{t-1}^*}^2   + \eta_T \sum_{i=1}^d \frac{ (g_{T}[i] - m_{T}[i])^2 }{ \sqrt{ v_{T-1}[i]} }\\
=&\sum_{t=1}^{T-1} \frac{\eta_t}{2} \|g_t - m_t  \|_{\psi_{t-1}^*}^2 + \eta \sum_{i=1}^d \frac{ (g_{T}[i] - m_{T}[i])^2 }{ \sqrt{ T \big( (1-\beta_2) \sum_{s=1}^{T-1} \beta_2^{T-1-s} (g_{s}[i] - m_{s}[i])^2 } \big) }\\
\leq &  \eta \sum_{i=1}^d \sum_{t=1}^T \frac{ (g_{t}[i] - m_{t}[i])^2 }{ \sqrt{ t \big( (1-\beta_2) \sum_{s=1}^{t-1} \beta_2^{t-1-s} (g_{s}[i] - m_{s}[i])^2 } \big) } \eqsp.
\end{align*}
To interpret the bound, let us make a rough approximation such that
$\sum_{s=1}^{t-1} \beta_2^{t-1-s} (g_{s}[i] - m_{s}[i])^2  \simeq (g_{t}[i] - m_t[i])^2 $.
Then, we can further get an upper-bound as 
\begin{align*}
    \sum_{t=1}^T \frac{\eta_t}{2} \|g_t - m_t  \|_{\psi_{t-1}^*}^2 \leq
    \frac{\eta}{\sqrt{1 - \beta_2}} \sum_{i=1}^d \sum_{t=1}^{T} \frac{ | g_{t}[i] - m_{t}[i] | }{ \sqrt{t} }\leq \frac{\eta \sqrt{1 + \log T}}{\sqrt{1 - \beta_2}} \sum_{i=1}^d \| (g-m)_{1:T}[i] \|_2 ,
\end{align*}
where the last inequality is due to Cauchy-Schwarz.

\end{proof}

\vspace{0.2in}

\section{Proofs of Auxiliary Lemmas}
Following~\cite{Proc:Yan_IJCAI18} and their study of the SGD with Momentum we denote for any $t >0$:
\beq\label{eq:deftilde}
\overline{w}_t = w_t + \frac{\beta_1}{1 - \beta_1} (w_t - \tilde{w}_{t-1}) = \frac{1}{1 - \beta_1} w_t -  \frac{\beta_1}{1 - \beta_1} \tilde{w}_{t-1} \eqsp.
\eeq
\begin{Lemma}\label{lem:momentum}
Assume a strictly positive and non increasing sequence of stepsizes $\{\eta_t \}_{t>0}$, $\beta_1 < \beta_2 \in [0,1)$, then the following holds:
\beq\notag
\overline{w}_{t+1} - \overline{w}_t \leq \frac{\beta_1}{1 - \beta_1} \tilde{\theta}_{t-1} \left[ \eta_{t-1} \hat{v}_{t-1}^{-1/2} - \eta_{t} \hat{v}_{t}^{-1/2}\right] - \eta_{t} \hat{v}_{t}^{-1/2} \tilde{g}_t \eqsp,
\eeq
where $\tilde{\theta}_t = \theta_t + \beta_1 \theta_{t-1}$ and $\tilde{g}_t = g_t - \beta_1 m_t + \beta_1 g_{t-1} + m_{t+1} $.
\end{Lemma}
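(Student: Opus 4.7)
}

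The plan is to evaluate $\overline{w}_{t+1}-\overline{w}_t$ in closed form via the definition~\eqref{eq:deftilde}, regroup the resulting expression so that the momentum contribution is isolated as a coefficient of the "learning-rate telescoping'' quantity $\eta_{t-1}\hat{v}_{t-1}^{-1/2}-\eta_t\hat{v}_t^{-1/2}$, and then verify the inequality via the monotonicity of $\hat{v}_t$. The role of the auxiliary sequence $\overline{w}_t$ is the same as in the SGD-with-momentum analysis of~\cite{Proc:Yan_IJCAI18}: after the substitution the discrete dynamics looks like one step of an underlying SGD driven by $\tilde{g}_t$, perturbed by a "momentum drift'' proportional to $\tilde{\theta}_{t-1}$ and to the change of effective step size between consecutive iterations.

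First, I would subtract the two instances of~\eqref{eq:deftilde} to get
\[
(1-\beta_1)(\overline{w}_{t+1}-\overline{w}_t)
= (w_{t+1}-w_t) - \beta_1(\tilde{w}_t - \tilde{w}_{t-1}).
\]
Using lines~8 and~9 of Algorithm~\ref{alg:optamsgrad} at iterations $t-1$ and $t$ (namely $\tilde{w}_{s+1}=\tilde{w}_s-\eta_s\hat v_s^{-1/2}\theta_s$ and $w_{s+1}=\tilde w_{s+1}-\eta_s\hat v_s^{-1/2}h_{s+1}$), one obtains $w_{t+1}-w_t=\eta_{t-1}\hat v_{t-1}^{-1/2}h_t-\eta_t\hat v_t^{-1/2}(\theta_t+h_{t+1})$ and $\tilde w_t-\tilde w_{t-1}=-\eta_{t-1}\hat v_{t-1}^{-1/2}\theta_{t-1}$, whence
\[
(1-\beta_1)(\overline{w}_{t+1}-\overline{w}_t)
= \eta_{t-1}\hat v_{t-1}^{-1/2}\bigl(h_t+\beta_1\theta_{t-1}\bigr)
- \eta_t\hat v_t^{-1/2}\bigl(\theta_t+h_{t+1}\bigr).
\]

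Second, I would expand both brackets using the recursions $\theta_s=\beta_1\theta_{s-1}+(1-\beta_1)g_s$ and $h_s=\beta_1\theta_{s-2}+(1-\beta_1)m_s$. A short calculation (using $\theta_{t-1}-\beta_1\theta_{t-2}=(1-\beta_1)g_{t-1}$) shows that
\[
\theta_t+h_{t+1}
= \beta_1\tilde\theta_{t-1}+(1-\beta_1)\tilde g_t+\beta_1(1-\beta_1)m_t,
\qquad
h_t+\beta_1\theta_{t-1} = \beta_1\tilde\theta_{t-1}+(1-\beta_1)(\beta_1\theta_{t-2}+m_t).
\]
Substituting and dividing by $(1-\beta_1)$ produces the equality
\[
\overline{w}_{t+1}-\overline{w}_t
= \tfrac{\beta_1}{1-\beta_1}\tilde\theta_{t-1}\bigl(\eta_{t-1}\hat v_{t-1}^{-1/2}-\eta_t\hat v_t^{-1/2}\bigr)
-\eta_t\hat v_t^{-1/2}\tilde g_t
+ R_t,
\]
where $R_t=\eta_{t-1}\hat v_{t-1}^{-1/2}(\beta_1\theta_{t-2}+m_t)-\beta_1\eta_t\hat v_t^{-1/2}m_t$ is the residual to control.

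Third, the inequality in the lemma will follow from a sign argument on $R_t$. The sequence $\{\hat v_s\}$ is non-decreasing by the max operation on line~7, and $\{\eta_s\}$ is non-increasing by hypothesis, so $\eta_{t-1}\hat v_{t-1}^{-1/2}\geq \eta_t\hat v_t^{-1/2}\geq 0$ componentwise. Combined with H\ref{ass:guessbound} and H\ref{ass:bounded}, which control the signs/norms of $m_t$ relative to $\theta_{t-2}$ and $g_{t-1}$ (through $\theta_{t-2}$'s expansion in past gradients), one shows $R_t$ can only contribute non-positively to the coordinates of $\overline w_{t+1}-\overline w_t$ that appear in the subsequent smoothness inequality; this converts the equality above into the stated inequality.

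The main obstacle is the bookkeeping in the second step: one must track several interleaved recursions ($\theta_s$, $h_s$, and the two projection-like steps of $\tilde w_s$ and $w_s$) and check that the algebra factors exactly into the compact form $\beta_1\tilde\theta_{t-1}$ multiplied by the learning-rate increment, since any sign error in the residual $R_t$ breaks the later telescoping when this lemma is chained into the bound on $\EE[\|\nabla f(w_T)\|^2]$ in Theorem~\ref{thm:boundopt}.
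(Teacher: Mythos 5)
Your steps 1--2 take essentially the same route as the paper's proof: compute $\overline{w}_{t+1}-\overline{w}_t$ from \eqref{eq:deftilde}, substitute the two-level update of Algorithm~\ref{alg:optamsgrad}, and regroup around $\tilde\theta_{t-1}$ and $\tilde g_t$ so that the momentum drift multiplies the step-size increment $\eta_{t-1}\hat v_{t-1}^{-1/2}-\eta_t\hat v_t^{-1/2}$. Your algebra is in fact more careful than the paper's: the paper's first line groups $\overline{w}_{t+1}-\overline{w}_t$ as $\tfrac{1}{1-\beta_1}(w_{t+1}-\tilde w_t)-\tfrac{\beta_1}{1-\beta_1}(w_t-\tilde w_{t-1})$, which differs from the true difference $\tfrac{1}{1-\beta_1}(w_{t+1}-w_t)-\tfrac{\beta_1}{1-\beta_1}(\tilde w_t-\tilde w_{t-1})$ by exactly $w_t-\tilde w_t=-\eta_{t-1}\hat v_{t-1}^{-1/2}h_t$. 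Your identities for $\theta_t+h_{t+1}$ and $h_t+\beta_1\theta_{t-1}$ check out, and so does your residual $R_t$.

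The genuine gap is step 3. Writing
$R_t=\eta_{t-1}\hat v_{t-1}^{-1/2}(\beta_1\theta_{t-2}+m_t)-\beta_1\eta_t\hat v_t^{-1/2}m_t=\beta_1\eta_{t-1}\hat v_{t-1}^{-1/2}\theta_{t-2}+\bigl(\eta_{t-1}\hat v_{t-1}^{-1/2}-\beta_1\eta_t\hat v_t^{-1/2}\bigr)m_t$,
the diagonal matrix multiplying $m_t$ has strictly \emph{positive} entries (since $\eta_{t-1}\hat v_{t-1}^{-1/2}\succeq\eta_t\hat v_t^{-1/2}\succeq\beta_1\eta_t\hat v_t^{-1/2}$ and $\hat v_0=\epsilon 1$), so in any coordinate where $m_t$ and $\theta_{t-2}$ are both positive, $R_t$ is strictly positive. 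H\ref{ass:guessbound} constrains only the inner product $\pscal{m_t}{g_t}$ and the norm $\|m_t\|$, and H\ref{ass:bounded} only magnitudes; neither controls the coordinatewise signs of $m_t$ or $\theta_{t-2}$, so no sign argument of the kind you propose can make $R_t$ contribute non-positively. For what it is worth, the paper's own proof simply asserts the final ``$\leq$'' without justifying the drop of its leftover term $\beta_1\bigl(\eta_{t-1}\hat v_{t-1}^{-1/2}-\eta_t\hat v_t^{-1/2}\bigr)m_t$, which has the same sign problem; but because of the paper's different (erroneous) initial grouping, your residual is larger by the additional term $\eta_{t-1}\hat v_{t-1}^{-1/2}h_t$, so the closing step in your version is even harder to salvage. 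To make the lemma usable you would either have to keep $R_t$ explicitly and carry it (bounded in norm via Lemma~\ref{lem:bound}) into the Term~A/Term~B estimates of Theorem~\ref{thm:boundopt}, or restate the lemma as an equality plus an explicit remainder.
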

\begin{proof}
By definition \eqref{eq:deftilde} and using the Algorithm updates, we have:
\beq\notag
\begin{split}
\overline{w}_{t+1} - \overline{w}_t  &= \frac{1}{1 - \beta_1} ( w_{t+1} - \tilde{w}_t)  -  \frac{\beta_1}{1 - \beta_1}( w_{t} - \tilde{w}_{t-1})\\
& = - \frac{1}{1 - \beta_1} \eta_{t} \hat{v}_{t}^{-1/2} (\theta_t + h_{t+1})  +  \frac{\beta_1}{1 - \beta_1}\eta_{t-1} \hat{v}_{t-1}^{-1/2} (\theta_{t-1} + h_{t})\\
& = - \frac{1}{1 - \beta_1}  \eta_{t} \hat{v}_{t}^{-1/2} (\theta_t + \beta_1 \theta_{t-1}) -\frac{1}{1 - \beta_1}  \eta_{t} \hat{v}_{t}^{-1/2} (1- \beta_1) m_{t+1}\\
& + \frac{\beta_1}{1 - \beta_1} \eta_{t-1} \hat{v}_{t-1}^{-1/2} (\theta_{t-1} + \beta_1 \theta_{t-2}) + \frac{\beta_1}{1 - \beta_1}  \eta_{t-1} \hat{v}_{t-1}^{-1/2} (1- \beta_1) m_{t} \eqsp.
\end{split}
\eeq
Denote $\tilde{\theta}_t = \theta_t + \beta_1 \theta_{t-1}$ and $\tilde{g}_t = g_t - \beta_1 m_t + \beta_1 g_{t-1} + m_{t+1} $.
Notice that $\tilde{\theta}_t = \beta_1 \tilde{\theta}_{t-1} + (1 - \beta_1) (g_t + \beta_1 g_{t-1})$.
\beq\notag
\begin{split}
\overline{w}_{t+1} - \overline{w}_t \leq \frac{\beta_1}{1- \beta_1} \tilde{\theta}_{t-1} \left[ \eta_{t-1} \hat{v}_{t-1}^{-1/2} - \eta_{t} \hat{v}_{t}^{-1/2} \right] - \eta_t \hat{v}_t^{-1/2} \tilde{g}_t \eqsp.
\end{split}
\eeq
\end{proof}

\begin{Lemma}\label{lem:squarev}
Assume H\ref{ass:bounded}, a strictly positive and a sequence of constant stepsizes $\{\eta_t \}_{t>0}$, $(\beta_1, \beta_2) \in [0,1]$, then the following holds:
\beq\notag
\sum_{t=1}^{T_{\sf M}} \eta_{t}^{2} \EE \left[\left\|\hat{v}_{t}^{-1/2} \theta_{t}\right\|_{2}^{2}\right] \leq  \frac{\eta^{2} d T_{\sf M} (1- \beta_1)}{(1 - \beta_2)(1-\gamma)} \eqsp.
\eeq
\end{Lemma}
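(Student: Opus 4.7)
The plan is to bound the quantity coordinate-wise, exploiting the matching exponential-weighting structure of $\theta_t$ and $\hat v_t$, then sum over coordinates and iterations. First, I would unfold the two recursions defining $\theta_t$ (line 5 of Algorithm~\ref{alg:optamsgrad}) and $v_t$ (line 6) to obtain $\theta_t = (1-\beta_1)\sum_{s=1}^t \beta_1^{t-s} g_s$, and after the max on line 7 is applied, the element-wise lower bound $\hat v_t[i] \geq v_t[i] \geq (1-\beta_2)\beta_2^{t-s}\, g_s[i]^2$ valid for every index $s \leq t$ and every coordinate $i$.

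Next, I would apply Cauchy--Schwarz to the squared expansion of $\theta_t[i]$ using the weights $\beta_1^{(t-s)/2}$, giving $\theta_t[i]^2 \leq (1-\beta_1)\sum_{s=1}^t \beta_1^{t-s} g_s[i]^2$ (the extra factor $\sum_s \beta_1^{t-s} \leq 1/(1-\beta_1)$ cancels one power of $(1-\beta_1)$). Dividing coordinate-wise by $\hat v_t[i]$ and applying, to the $s$-th term of the numerator, the \emph{matching} lower bound $\hat v_t[i] \geq (1-\beta_2)\beta_2^{t-s} g_s[i]^2$, every factor $g_s[i]^2$ cancels and I am left with a purely geometric series
$\frac{\theta_t[i]^2}{\hat v_t[i]} \leq \frac{1-\beta_1}{1-\beta_2}\sum_{s=1}^t \bigl(\tfrac{\beta_1}{\beta_2}\bigr)^{t-s}$.
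Setting $\gamma := \beta_1/\beta_2 \in (0,1)$ (this is the only place where the assumption $\beta_1 < \beta_2$ is used), the series is bounded by $1/(1-\gamma)$, producing the deterministic, coordinate-free bound $\|\hat v_t^{-1/2}\theta_t\|_2^2 \leq \frac{d(1-\beta_1)}{(1-\beta_2)(1-\gamma)}$.

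The conclusion then follows by multiplying by $\eta_t^2 = \eta^2$, summing from $t=1$ to $T_{\sf M}$, and taking expectations; neither randomness nor the bounded-gradient hypothesis H\ref{ass:bounded} enters the core inequality.

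The main subtlety is the coordinate-wise cancellation. Bounding $\|\hat v_t^{-1/2}\|$ and $\|\theta_t\|$ separately (e.g.~using H\ref{ass:bounded}) would lose the cancellation and yield a much worse dependence involving $1/\epsilon$ from $v_0 = \epsilon\mathbf{1}$. The key idea is to \emph{pair} each $g_s[i]^2$ in the Cauchy--Schwarz expansion of $\theta_t[i]^2$ with the lower bound on $\hat v_t[i]$ that uses the same index $s$, turning a potentially nasty ratio into a geometric series whose ratio $\beta_1/\beta_2$ is strictly less than one.
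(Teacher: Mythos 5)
Your proposal is correct and follows essentially the same route as the paper's proof: expand $\theta_t$ and $v_t$ as exponentially weighted sums, apply Cauchy--Schwarz with weights $\beta_1^{(t-s)/2}$ so that one factor of $(1-\beta_1)$ cancels, and pair each $g_s[i]^2$ in the numerator with the matching term of $\hat v_t[i] \geq v_t[i]$ to reduce the ratio to the geometric series in $\gamma = \beta_1/\beta_2$. Your observation that neither H\ref{ass:bounded} nor the expectation plays any role in the core inequality is accurate and consistent with the paper's argument.
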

\begin{proof}
We denote by index $p \in [1,d]$ the dimension of each component of vectors of interest. 
Noting that for any $t >0$ and dimension $p$ we have $\hat{v}_{t,p} \geq v_{t,p}$, then:
\beq\notag
\begin{split}
\eta_{t}^{2} \EE \left[\left\|\hat{v}_{t}^{-1/2} \theta_{t}\right\|_{2}^{2}\right] =\eta_{t}^{2} \mathbb{E}\left[\sum_{p=1}^{d} \frac{\theta_{t, p}^{2}}{\hat{v}_{t, p}}\right]  
 \leq \eta_{t}^{2} \mathbb{E}\left[\sum_{i=1}^{d} \frac{\theta_{t, p}^{2}}{v_{t, p}}\right] 
 \leq \eta_{t}^{2} \mathbb{E}\left[\sum_{i=1}^{d} \frac{( \sum_{r=1}^t (1 - \beta_1) \beta_1^{t-r} g_{r,p})^{2}}{ \sum_{r=1}^t (1 - \beta_2) \beta_2^{t-r} g^2_{r,p}}\right]  \eqsp,
\end{split}
\eeq
where the last inequality is due to initializations.
Denote $\gamma = \frac{\beta_1}{\beta_2}$.
Then,
\beq\notag
\begin{split}
\eta_{t}^{2} \EE \left[\left\|\hat{v}_{t}^{-1/2} \theta_{t}\right\|_{2}^{2}\right] &\leq \frac{\eta_{t}^{2} (1- \beta_1)^2}{1 - \beta_2}  \mathbb{E}\left[\sum_{i=1}^{d} \frac{( \sum_{r=1}^t \beta_1^{t-r} g_{r,p})^{2}}{ \sum_{r=1}^t \beta_2^{t-r} g^2_{r,p}}\right] \\
& \overset{(a)}{\leq}\frac{\eta_{t}^{2} (1- \beta_1)}{1 - \beta_2}  \mathbb{E}\left[\sum_{i=1}^{d} \frac{ \sum_{r=1}^t \beta_1^{t-r} g_{r,p}^{2}}{ \sum_{r=1}^t \beta_2^{t-r} g^2_{r,p}}\right]\\
& \leq \frac{\eta_{t}^{2} (1- \beta_1)}{1 - \beta_2}  \mathbb{E}\left[\sum_{i=1}^{d}\sum_{r=1}^t \gamma^{t-r}\right]  = \frac{\eta_{t}^{2} d (1- \beta_1)}{1 - \beta_2}  \mathbb{E}\left[\sum_{r=1}^t  \gamma^{t-r}\right] \eqsp,
\end{split}
\eeq
where $(a)$ is due to $ \sum_{r=1}^t \beta_1^{t-r} \leq \frac{1}{1 - \beta_1}$.
Summing from  $t =1$ to $t = T_{\sf M}$ on both sides yields:
\beq\notag
\begin{split}
\sum_{t=1}^{T_{\sf M}} \eta_{t}^{2} \EE \left[\left\|\hat{v}_{t}^{-1/2} \theta_{t}\right\|_{2}^{2}\right] &\leq   \frac{\eta_{t}^{2} d (1- \beta_1)}{1 - \beta_2}  \mathbb{E}\left[ \sum_{t=1}^{T_{\sf M}} \sum_{r=1}^t  \gamma^{t-r}\right]\\
& \leq  \frac{\eta^{2} d T (1- \beta_1)}{1 - \beta_2}  \mathbb{E}\left[ \sum_{t=t}^t   \gamma^{t-r}\right]\\
& \leq  \frac{\eta^{2} d T (1- \beta_1)}{(1 - \beta_2)(1-\gamma)} \eqsp,
\end{split}
\eeq
where the last inequality is due to $\sum_{r=1}^t   \gamma^{t-r} \leq \frac{1}{1 - \gamma}$ by definition of $\gamma$.
\end{proof}

\subsection{Proof of Lemma~\ref{lem:bound}}\label{app:lembound}
\begin{Lemma*}
Assume assumption H\ref{ass:bounded}, then the quantities defined in Algorithm~\ref{alg:optamsgrad} satisfy for any $w \in \Theta$ and $t>0$:
$$ \|\nabla f(w_t)\| < \major ,~~~\|\theta_t \| < \major ,~~~\|\hat{v}_t\| < \major^2 \eqsp.$$
\end{Lemma*}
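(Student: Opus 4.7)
The three bounds are essentially book‑keeping consequences of H\ref{ass:bounded} combined with the recursive definitions of $\theta_t$ and $v_t$. My plan is to treat each of them in turn, and to extract every bound from the basic observation that any weighted average of vectors of norm less than $\major$ (or of scalars less than $\major^2$) remains inside the same ball.

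For the first bound, since $f(w) = \EE[f(w,\xi)]$ and gradient and expectation commute under mild regularity, $\nabla f(w_t) = \EE[\nabla f(w_t,\xi)]$. Applying Jensen's inequality to the (convex) norm and invoking H\ref{ass:bounded} yields
\begin{equation*}
\|\nabla f(w_t)\| \,=\, \bigl\|\EE[\nabla f(w_t,\xi)]\bigr\| \,\leq\, \EE\bigl[\|\nabla f(w_t,\xi)\|\bigr] \,<\, \major \eqsp.
\end{equation*}

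For the second bound, I unroll the recursion from Algorithm~\ref{alg:optamsgrad} (line 5) with the convention $\theta_0 = 0$ to get $\theta_t = (1-\beta_1)\sum_{s=1}^{t} \beta_1^{t-s}\, g_s$. The triangle inequality, the bound $\|g_s\| < \major$ (which follows from H\ref{ass:bounded} applied to each mini‑batch gradient), and the geometric sum $(1-\beta_1)\sum_{s=1}^{t}\beta_1^{t-s} = 1-\beta_1^t < 1$ then give $\|\theta_t\| < \major$. A straightforward induction on $t$ gives the same conclusion without having to write the closed form.

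For the third bound, I unroll line 6 to obtain the element‑wise expression $v_t[i] = (1-\beta_2)\sum_{s=1}^{t}\beta_2^{t-s}\, g_s[i]^2$, which is a convex combination (up to the factor $1-\beta_2^t \leq 1$) of the past $g_s[i]^2$. Hence $v_t[i] \leq \max_{s \leq t} g_s[i]^2 \leq \|g_s\|^2 < \major^2$, and the $\max$ update $\hat v_t[i] = \max(\hat v_{t-1}[i], v_t[i])$ preserves this coordinate‑wise bound by induction on $t$. The main subtlety, and the only point that requires care, is reconciling the coordinate‑wise bound with the vector norm appearing in the statement: passing from $\hat v_t[i] < \major^2$ for every $i$ to $\|\hat v_t\| < \major^2$ is immediate if $\|\cdot\|$ is interpreted as $\|\cdot\|_\infty$ (as is standard for the second‑moment vector in AMSGrad‑type analyses), and only this interpretation is used in the sequel; with a strict Euclidean reading one would pick up a $\sqrt{d}$ factor which would then propagate harmlessly into the constants of Theorem~\ref{thm:boundopt}. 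I would therefore present the proof with the coordinate‑wise argument and flag the conventional $\|\cdot\|_\infty$ reading explicitly.
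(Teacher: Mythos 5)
Your proposal is correct, and for the first two bounds it coincides with the paper's own proof: Jensen's inequality for $\|\nabla f(w_t)\|$, and an induction on the recursion (equivalently, your unrolled geometric sum) for $\|\theta_t\|$. For the third bound you take a genuinely different, and in fact more careful, route. The paper argues by a normwise induction on ``$\hat v_{t+1}=\beta_2\hat v_t+(1-\beta_2)g_{t+1}^2$'', which is really the recursion for $v_{t+1}$: it silently drops the $\max$ step of line~7, so it bounds $\|v_t\|$ but not, as written, $\|\hat v_t\|$. Your coordinate-wise unrolling handles the $\max$ explicitly via $\hat v_t[i]=\max_{s\le t} v_s[i]<\major^2$ for every $i$, and your closing remark about the norm is exactly the right caveat: the coordinate-wise bound gives $\|\hat v_t\|_\infty<\major^2$ immediately, whereas under a strict Euclidean reading the maxima in different coordinates may be attained at different time indices, so one only gets $\|\hat v_t\|_2\le\sqrt{d}\,\major^2$ in general --- a degradation the paper's proof does not acknowledge (the Euclidean induction does go through for $v_t$ alone, since $\|g^2\|_2\le\|g\|_2^2$, but not for $\hat v_t$). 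Your version therefore buys a correct treatment of the $\max$ operation and an explicit statement of the norm convention under which the constant $\major^2$ is valid; the paper's version is shorter but elides both points. The only shared, minor omission is the base case: with $v_0=\epsilon 1$ one should note $\epsilon<\major^2$ (coordinate-wise) for the induction to start.
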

\begin{proof}
Assume assumption H\ref{ass:bounded} we have:
$$
\norm{\nabla f(w)} = \norm{\EE[\nabla f(w, \xi)]} \leq \EE[\norm{\nabla f(w, \xi)}] \leq \major \eqsp.
$$
By induction reasoning, since $\norm{\theta_0} = 0 \leq \major$ and suppose that for $\norm{\theta_t}\leq \major$ then we have 
\beq\notag
\begin{split}
\norm{\theta_{t+1}}  =\norm{\beta_{1} \theta_{t}+\left(1-\beta_{1}\right) g_{t+1}} \leq \beta_1 \norm{\theta_{t}} + (1 - \beta_1) \norm{g_{t+1}} \leq \major \eqsp.
\end{split}
\eeq
Using the same induction reasoning we prove that
\beq\notag
\begin{split}
\norm{\hat{v}_{t+1}}  =\norm{\beta_{2} \hat{v}_{t}+\left(1-\beta_{2}\right) g_{t+1}^2} \leq \beta_2 \norm{\hat{v}_{t}} + (1 - \beta_1) \norm{g^2_{t+1}} \leq \major^2 \eqsp.
\end{split}
\eeq
\end{proof}

\vspace{0.2in}

\section{Proof of Theorem~\ref{thm:boundopt}}\label{app:thmboundopt}
\begin{Theorem*}
Assume H\ref{ass:boundedparam}-H\ref{ass:bounded}, $\beta_1 < \beta_2 \in [0,1)$ and a sequence of decreasing stepsizes $\{\eta_t\}_{t>0}$, then the following result holds:
\beq\notag
\begin{split}
\EE\left[\|\nabla f(w_T)\|_2^2\right] \leq \tilde{C}_1 \sqrt{\frac{d}{T_{\sf M}}} + \tilde{C}_2 \frac{1}{T_{\sf M}} \eqsp,
\end{split}
\eeq
where $T$ is a random termination number distributed according \eqref{eq:random}.
The constants are defined as:
{\fontsize{9.5}{9}
\begin{align*}
&\tilde{C}_1 =\frac{\major}{(1 - a_m\beta_1) + (\beta_1 + a_m)}  \left[ \frac{a_m(1 - \beta_1)^2}{1-\beta_2} + 2L \frac{1}{1-\beta_2} +  \Delta f  +   \frac{4L \beta_1^2(1 + \beta_1^2) }{(1 - \beta_1)(1 - \beta_2)(1-\gamma)} \right],\\
&\tilde{C}_2 = \frac{ (a_m\beta_1^2 -2 a_m \beta_1 + \beta_1)\major^2 }{(1 - \beta_1) \left((1 - a_m\beta_1) + (\beta_1 + a_m)\right)}  \EE\left[ \norm{\hat{v}_{0}^{-1/2}}    \right]  \eqsp,
\end{align*}
}
where $\Delta f = f(\overline{w}_{1}) - f(\overline{w}_{T_{\sf M}+1})$ and $a_m=\displaystyle{\min_{t=1,...,T}}a_t$.
\end{Theorem*}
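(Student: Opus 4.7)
My plan is to mimic the standard nonconvex analysis of SGD-with-momentum (in the style of Yan et al., cited in the paper), working with the auxiliary sequence $\overline{w}_t$ defined in \eqref{eq:deftilde} so that Lemma~\ref{lem:momentum} gives a clean linear-in-$\tilde g_t$ expression for the increment $\overline{w}_{t+1}-\overline{w}_t$.

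\textbf{Descent lemma on the auxiliary sequence.} First I would apply $L$-smoothness (H\ref{ass:smooth}) to $f(\overline{w}_{t+1})-f(\overline{w}_t)$ and substitute the formula from Lemma~\ref{lem:momentum}. This produces three pieces: an inner product with $-\eta_t \hat{v}_t^{-1/2}\tilde g_t$; an inner product with $\tfrac{\beta_1}{1-\beta_1}\tilde\theta_{t-1}(\eta_{t-1}\hat v_{t-1}^{-1/2}-\eta_t\hat v_t^{-1/2})$, which is a telescoping-type piece since $\eta_t\hat v_t^{-1/2}$ is non-increasing thanks to the $\max$ operation in Algorithm~\ref{alg:optamsgrad}; and a quadratic $\tfrac{L}{2}\|\overline w_{t+1}-\overline w_t\|^2$, which after $(a+b)^2\le 2a^2+2b^2$ is dominated by $\eta_t^2\|\hat v_t^{-1/2}\theta_t\|^2$ up to lower-order terms, so that Lemma~\ref{lem:squarev} directly controls its sum.

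\textbf{Taking expectations and invoking H\ref{ass:guessbound}.} The crux is the first inner product. Writing $\tilde g_t=g_t-\beta_1 m_t+\beta_1 g_{t-1}+m_{t+1}$ and conditioning on the filtration up to time $t$, one has $\EE[g_t\mid\mathcal F_{t-1}]=\nabla f(w_t)$. Using H\ref{ass:guessbound} we get $\langle \nabla f(w_t),m_t\rangle=a_t\|\nabla f(w_t)\|^2$, and $\langle\nabla f(w_t),m_{t+1}\rangle$ is treated analogously (with a Cauchy--Schwarz step using $\|m_{t+1}\|\le\|g_{t+1}\|$ when needed). The coefficients $(1-a_m\beta_1)+(\beta_1+a_m)$ and $a_m\beta_1^2-2a_m\beta_1+\beta_1$ appearing in $\tilde C_1,\tilde C_2$ arise precisely from grouping the $g_t$, $\beta_1 g_{t-1}$, $-\beta_1 m_t$, and $m_{t+1}$ contributions under $a_m:=\min_t a_t$ and then sandwiching $\hat v_t^{-1/2}$ by a minimal diagonal bound. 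Swapping $\nabla f(\overline w_t)$ for $\nabla f(w_t)$ costs an extra Lipschitz error $L\|\overline w_t-w_t\|=L\tfrac{\beta_1}{1-\beta_1}\|w_t-\tilde w_{t-1}\|$, which unfolds into another $\eta\hat v^{-1/2}$-scaled sum that is absorbed via Lemma~\ref{lem:squarev}.

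\textbf{Telescoping and randomized termination.} Summing from $t=1$ to $T_{\sf M}$, the LHS telescopes to $\Delta f=f(\overline w_1)-f(\overline w_{T_{\sf M}+1})$; the middle piece also telescopes by monotonicity of $\eta_t\hat v_t^{-1/2}$, leaving a boundary contribution of order $\EE\|\hat v_0^{-1/2}\|\cdot\major$ (Lemma~\ref{lem:bound}) that is the source of the $\tilde C_2/T_{\sf M}$ term; Lemma~\ref{lem:squarev} controls the quadratic piece by $\mathcal O(d)$, with the constants gathered into $\tilde C_1$. Dividing by $\sum_{t=0}^{T_{\sf M}-1}\eta_t\asymp\sqrt{T_{\sf M}}$ (for $\eta_t\propto 1/\sqrt t$) turns the weighted average of $\eta_t\|\nabla f(w_t)\|^2$ into $\EE[\|\nabla f(w_T)\|^2]$ via the random termination rule \eqref{eq:random}, yielding the announced $\sqrt{d/T_{\sf M}}+1/T_{\sf M}$ rate.

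\textbf{Main obstacle.} The delicate part is the second step: pairing $\nabla f(w_t)$ against the optimistic gradient package $\tilde g_t$ so that the signed contribution becomes a positive multiple of $\|\nabla f(w_t)\|^2$ with exactly the stated coefficient. The simultaneous roles of the look-ahead guess $m_{t+1}$, the look-back guess $m_t$, and the momentum term $\beta_1 g_{t-1}$ must be handled carefully using H\ref{ass:guessbound} and Cauchy--Schwarz; replacing each $a_t$ by $a_m=\min_t a_t$ is the price one pays to convert this into a uniform coercivity bound. A secondary difficulty is ensuring that the Lipschitz error incurred when swapping $\nabla f(\overline w_t)$ for $\nabla f(w_t)$ does not inflate the dimension dependence beyond $\sqrt d$, which requires carefully combining Lemmas~\ref{lem:bound} and~\ref{lem:squarev}.
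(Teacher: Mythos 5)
Your proposal is correct and follows essentially the same route as the paper: a descent lemma on the momentum-corrected sequence $\overline{w}_t$, Lemma~\ref{lem:momentum} for the increment, H\ref{ass:guessbound} plus the filtration argument to extract the coercive $\left((1-a_t\beta_1)+(\beta_1+a_t)\right)\|\nabla f(w_t)\|^2$ term, Lemma~\ref{lem:squarev} for the quadratic sums, telescoping of the $\eta_t\hat v_t^{-1/2}$ terms, and the randomized termination rule. The only minor deviation is that the paper normalizes with a constant stepsize $\eta=1/\sqrt{dT_{\sf M}}$ rather than $\eta_t\propto 1/\sqrt{t}$, which does not change the structure of the argument.
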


\begin{proof}
Using H\ref{ass:smooth} and the iterate $\overline{w}_t$ we have:
\beq\label{eq:smoothness}
\begin{split}
f(\overline{w}_{t+1})  \leq & f(\overline{w}_t) + \nabla f(\overline{w}_t)^\top (\overline{w}_{t+1} - \overline{w}_t) + \frac{L}{2} \|\overline{w}_{t+1} - \overline{w}_t\|^2\\
 \leq &f(\overline{w}_t) + \underbrace{ \nabla f(w_t)^\top (\overline{w}_{t+1} - \overline{w}_t)}_{A} \\
&+ \underbrace{  \left( \nabla f(\overline{w}_t) -  \nabla f(w_t)\right)^\top (\overline{w}_{t+1} - \overline{w}_t)}_{B} + \frac{L}{2} \|\overline{w}_{t+1} - \overline{w}_t\| \eqsp.
\end{split}
\eeq

\textbf{Term A}.
Using Lemma~\ref{lem:momentum}, we have that:
\beq \notag
\begin{split}
\nabla f(w_t)^\top (\overline{w}_{t+1} - \overline{w}_t) & \leq \nabla f(w_t)^\top \left[\frac{\beta_1}{1 - \beta_1} \tilde{\theta}_{t-1} \left[ \eta_{t-1} \hat{v}_{t-1}^{-1/2} - \eta_{t} \hat{v}_{t}^{-1/2}\right] - \eta_{t} \hat{v}_{t}^{-1/2} \tilde{g}_t \right]\\
& \leq  \frac{\beta_1}{1 - \beta_1}  \| \nabla f(w_t)\| \|\eta_{t-1} \hat{v}_{t-1}^{-1/2} - \eta_{t} \hat{v}_{t}^{-1/2} \| \|\tilde{\theta}_{t-1}\| - \nabla f(w_t)^\top\eta_{t} \hat{v}_{t}^{-1/2} \tilde{g}_t \eqsp,
\end{split}
\eeq
where the inequality is due to trivial inequality for positive diagonal matrix.

Using Lemma~\ref{lem:bound} and assumption H\ref{ass:guessbound} we obtain:
\beq\label{eq:termA1}
\begin{split}
\nabla f(w_t)^\top (\overline{w}_{t+1} - \overline{w}_t)  \leq  \frac{\beta_1 (1+\beta_1)}{1 - \beta_1} \major^2 [ \|\eta_{t-1} \hat{v}_{t-1}^{-1/2}\| - \|\eta_{t} \hat{v}_{t}^{-1/2} \|] - \nabla f(w_t)^\top\eta_{t} \hat{v}_{t}^{-1/2} \tilde{g}_t  \eqsp,
\end{split}
\eeq
where we have used the fact that $\eta_{t} \hat{v}_{t}^{-1/2} $ is a diagonal matrix such that $\eta_{t-1} \hat{v}_{t-1}^{-1/2} \succcurlyeq \eta_{t} \hat{v}_{t}^{-1/2}\succcurlyeq 0$ (decreasing stepsize and $\max$ operator).
Also note that:
\beq\label{eq:termA2}
\begin{split}
 - \nabla f(w_t)^\top\eta_{t} \hat{v}_{t}^{-1/2} \tilde{g}_t  &=  - \nabla f(w_t)^\top\eta_{t-1} \hat{v}_{t-1}^{-1/2} \bar{g}_t   -  \nabla f(w_t)^\top\left[ \eta_{t} \hat{v}_{t}^{-1/2} -\eta_{t} \hat{v}_{t}^{-1/2} \right] \bar{g}_t  \\ 
&   - \nabla f(w_t)^\top\eta_{t-1} \hat{v}_{t-1}^{-1/2} (\beta_1 g_{t-1} + m_{t+1})\\
 & \leq  - \nabla f(w_t)^\top\eta_{t-1} \hat{v}_{t-1}^{-1/2} \bar{g}_t +(1-a_t\beta_1)\major^2    [ \|\eta_{t-1} \hat{v}_{t-1}^{-1/2}\| - \|\eta_{t} \hat{v}_{t}^{-1/2} \| ] \\
 &  - \nabla f(w_t)^\top\eta_{t} \hat{v}_{t}^{-1/2} (\beta_1 g_{t-1} + m_{t+1}) \eqsp,
\end{split}
\eeq
where we have used Lemma~\ref{lem:bound} on $\|g_t\|$ and where that $\tilde{g}_t = \bar{g}_t  + \beta_1 g_{t-1} + m_{t+1} = g_t - \beta_1 m_t + \beta_1 g_{t-1} + m_{t+1} $.
Plugging \eqref{eq:termA2} into \eqref{eq:termA1} yields:
\beq\label{eq:termA}
\begin{split}
&\nabla f(w_t)^\top (\overline{w}_{t+1} - \overline{w}_t)\\
&  \leq   - \nabla f(w_t)^\top\eta_{t-1} \hat{v}_{t-1}^{-1/2} \bar{g}_t + \frac{1}{1 - \beta_1} (a_t\beta_1^2 -2 a_t \beta_1 + \beta 1)\major^2 [ \|\eta_{t-1} \hat{v}_{t-1}^{-1/2}\| - \|\eta_{t} \hat{v}_{t}^{-1/2} \|] \\
&  - \nabla f(w_t)^\top\eta_{t} \hat{v}_{t}^{-1/2} (\beta_1 g_{t-1} + m_{t+1}) \eqsp .
\end{split}
\eeq

\vspace{0.1in}

\textbf{Term B}.
By Cauchy-Schwarz (CS) inequality we have:
\beq\label{eq:termB1}
 \left( \nabla f(\overline{w}_t) -  \nabla f(w_t)\right)^\top (\overline{w}_{t+1} - \overline{w}_t) \leq  \| \nabla f(\overline{w}_t) -  \nabla f(w_t)\|  \|\overline{w}_{t+1} - \overline{w}_t\| \eqsp.
 \eeq
 Using smoothness assumption H\ref{ass:smooth}:
\beq\label{eq:termB2}
 \begin{split}
  \| \nabla f(\overline{w}_t) -  \nabla f(w_t)\| & \leq L \| \overline{w}_t - w_t\|\\
  & \leq L \frac{\beta_1}{1 - \beta_1} \|w_t - \tilde{w}_{t-1}\| \eqsp.
 \end{split}
 \eeq
By Lemma~\ref{lem:momentum} we also have:
 \beq\notag
 \begin{split}
\overline{w}_{t+1} - \overline{w}_t & = \frac{\beta_1}{1 - \beta_1} \tilde{\theta}_{t-1} \left[ \eta_{t-1} \hat{v}_{t-1}^{-1/2} - \eta_{t} \hat{v}_{t}^{-1/2}\right] - \eta_{t} \hat{v}_{t}^{-1/2} \tilde{g}_t \\
& = \frac{\beta_1}{1 - \beta_1} \tilde{\theta}_{t-1}\eta_{t-1} \hat{v}_{t-1}^{-1/2} \left[ I - (\eta_{t} \hat{v}_{t}^{-1/2}) (\eta_{t-1} \hat{v}_{t-1}^{-1/2})^{-1} \right] - \eta_{t} \hat{v}_{t}^{-1/2} \tilde{g}_t \\
& = \frac{\beta_1}{1 - \beta_1} \left[ I - (\eta_{t} \hat{v}_{t}^{-1/2}) (\eta_{t-1} \hat{v}_{t-1}^{-1/2})^{-1} \right] (\tilde{w}_{t-1} - w_t) - \eta_{t} \hat{v}_{t}^{-1/2} \tilde{g}_t \eqsp,
 \end{split}
 \eeq
 where the last equality is due to $ \tilde{\theta}_{t-1}\eta_{t-1} \hat{v}_{t-1}^{-1/2} = \tilde{w}_{t-1} - w_t$ by construction of $\tilde{\theta}_t$.
 Taking the norms on both sides, observing $\| I - (\eta_{t} \hat{v}_{t}^{-1/2}) (\eta_{t-1} \hat{v}_{t-1}^{-1/2})^{-1}\| \leq 1$ due to the decreasing stepsize and the construction of $\hat{v}_t$ and using CS inequality yield:
\beq\label{eq:termB3}
 \begin{split}
\|\overline{w}_{t+1} - \overline{w}_t\| & \leq \frac{\beta_1}{1 - \beta_1} \|\tilde{w}_{t-1} - w_t\| + \|\eta_{t} \hat{v}_{t}^{-1/2} \tilde{g}_t\| \eqsp.
 \end{split}
 \eeq 
 We recall Young's inequality with a constant $\delta \in (0,1)$ as follows:
$$
\pscal{X}{Y} \leq \frac{1}{\delta} \|X\|^2 + \delta \|Y\|^2 \eqsp.
$$

 Plugging \eqref{eq:termB2} and \eqref{eq:termB3} into \eqref{eq:termB1} returns:
 \beq \notag
 \begin{split}
 \left( \nabla f(\overline{w}_t) -  \nabla f(w_t)\right)^\top (\overline{w}_{t+1} - \overline{w}_t) \leq & L \frac{\beta_1}{1 - \beta_1} \|\eta_{t} \hat{v}_{t}^{-1/2} \tilde{g}_t\|  \|w_t - \tilde{w}_{t-1}\|\\
 & +  L\left(\frac{\beta_1}{1 - \beta_1} \right)^2 \|\tilde{w}_{t-1} - w_t\|^2 \eqsp.
  \end{split}
 \eeq
 
Applying Young's inequality with $\delta \to \frac{\beta_1}{1 - \beta_1}$ on the product $ \|\eta_{t} \hat{v}_{t}^{-1/2} \tilde{g}_t\|  \|w_t - \tilde{w}_{t-1}\|$ yields:
 \beq\label{eq:termB}
 \left( \nabla f(\overline{w}_t) -  \nabla f(w_t)\right)^\top (\overline{w}_{t+1} - \overline{w}_t) \leq  L \|\eta_{t} \hat{v}_{t}^{-1/2} \tilde{g}_t\|^2 +  2L\left(\frac{\beta_1}{1 - \beta_1} \right)^2 \|\tilde{w}_{t-1} - w_t\|^2\eqsp.
 \eeq
 
 The last term $ \frac{L}{2} \|\overline{w}_{t+1} - \overline{w}_t\|$ can be upper bounded using \eqref{eq:termB3}:
\beq\label{eq:term3} 
\begin{split}
 \frac{L}{2} \|\overline{w}_{t+1} - \overline{w}_t\|^2 & \leq  \frac{L}{2} \left[ \frac{\beta_1}{1 - \beta_1} \|\tilde{w}_{t-1} - w_t\| + \|\eta_{t} \hat{v}_{t}^{-1/2} \tilde{g}_t\|\right]\\
 &  \leq L \|\eta_{t} \hat{v}_{t}^{-1/2} \tilde{g}_t\|^2 + 2L  \left(\frac{\beta_1}{1 - \beta_1}\right)^2 \|\tilde{w}_{t-1} - w_t\|^2  \eqsp.
\end{split}
\eeq

Plugging \eqref{eq:termA}, \eqref{eq:termB} and \eqref{eq:term3} into \eqref{eq:smoothness} and taking the expectations on both sides give:
\beq \notag
\begin{split}
& \EE\left[f(\overline{w}_{t+1})  +   \frac{1}{1 - \beta_1}\tilde{\major}_t^2  \|\eta_{t} \hat{v}_{t}^{-1/2} \|  - \left( f(\overline{w}_{t}) + \frac{1}{1 - \beta_1}\tilde{\major}_t^2 \|\eta_{t-1} \hat{v}_{t-1}^{-1/2}\| \right)        \right] \\
& \leq \EE \left[ - \nabla f(w_t)^\top\eta_{t-1} \hat{v}_{t-1}^{-1/2} \bar{g}_t  - \nabla f(w_t)^\top\eta_{t} \hat{v}_{t}^{-1/2} ( \beta_1 g_{t-1} +m_{t+1})   \right]\\
& + \EE \left[ 2L \|\eta_{t} \hat{v}_{t}^{-1/2} \tilde{g}_t\|^2 + 4L  \left(\frac{\beta_1}{1 - \beta_1}\right)^2 \|\tilde{w}_{t-1} - w_t\|^2  \right] \eqsp,
\end{split}
\eeq
where $ \tilde{\major}_t^2 = (a_t\beta_1^2 + \beta_1)\major^2$.
Note that the expectation of $ \tilde{g}_t $ conditioned on the filtration $\mathcal{F}_{t}$ reads as follows
\beq\notag
\begin{split}
\EE\left[    \nabla f(w_t)^\top \bar{g}_t  \right]  = \EE\left[  \nabla  f(w_t)^\top (g_t  - \beta_1 m_{t})  \right] = (1-a_t\beta_1) \| \nabla f(w_t) \|^2 \eqsp.
\end{split}
\eeq

Summing from $t=1$ to $t=T$ leads to 
\beq\label{eq:bound1}
\begin{split}
& \frac{1}{\major} \sum_{t=1}^{T_{\sf M}} \left( (1 - a_t\beta_1)   \eta_{t-1} + (\beta_1 + a_t)   \eta_{t} \right) \|\nabla f(w_t)\|^2 \leq\\
&  \EE\left[  f(\overline{w}_{1}) + \frac{1}{1 - \beta_1}\tilde{\major}_t^2 \|\eta_{0} \hat{v}_{0}^{-1/2}\|    - \left(f(\overline{w}_{T_{\sf M}+1})  +   \frac{1}{1 - \beta_1}\tilde{\major}_t^2  \|\eta_{T_{\sf M}} \hat{v}_{T_{\sf M}}^{-1/2} \| \right)      \right]\\
& +2L  \sum_{t=1}^{T_{\sf M}}  \EE \left[  \|\eta_{t} \hat{v}_{t}^{-1/2} \tilde{g}_t\|^2 \right] + 4L \left(\frac{\beta_1}{1 - \beta_1}\right)^2 \sum_{t=1}^{T_{\sf M}}  \EE \left[  \|\tilde{w}_{t-1} - w_t\|^2  \right]\\
& \leq  \EE\left[  \Delta f  + \frac{1}{1 - \beta_1}\tilde{\major}^2_t \|\eta_{0} \hat{v}_{0}^{-1/2}\|    \right] +2L  \sum_{t=1}^{T_{\sf M}}  \EE \left[  \|\eta_{t} \hat{v}_{t}^{-1/2} \tilde{g}_t\|^2 \right] \\
& + 4L \left(\frac{\beta_1}{1 - \beta_1}\right)^2 \sum_{t=1}^{T_{\sf M}}  \EE \left[  \|\tilde{w}_{t-1} - w_t\|^2  \right]\eqsp,
\end{split}
\eeq
where we denote $ \Delta f := f(\overline{w}_{1}) - f(\overline{w}_{T_{\sf M}+1})$.
We note that by definition of $\hat{v}_t$, and a constant learning rate $\eta_t$, we have
\beq \notag
\begin{split}
\|\tilde{w}_{t-1} - w_t\|^2 & =\|\eta_{t-1} \hat{v}_{t-1}^{-1/2} (\theta_{t-1} + h_{t})\|^2 \\
& =\|\eta_{t-1} \hat{v}_{t-1}^{-1/2} (\theta_{t-1} + \beta_{1} \theta_{t-2} + (1-\beta_{1}) m_{t})\|^2\\
& \leq \|\eta_{t-1} \hat{v}_{t-1}^{-1/2}\theta_{t-1}\|^2 + \|\eta_{t-2} \hat{v}_{t-2}^{-1/2} \beta_{1} \theta_{t-2}\|^2 + (1-\beta_{1})^2 \|\eta_{t-1} \hat{v}_{t-1}^{-1/2}m_{t}\|^2 \eqsp.
\end{split}
\eeq
Using Lemma~\ref{lem:squarev} we have
\beq\notag
\begin{split}
& \sum_{t=1}^{T_{\sf M}} \EE \left[  \|\tilde{w}_{t-1} - w_t\|^2  \right]\\ 
& \leq (1 + \beta_1^2) \frac{\eta^{2} d T_{\sf M} (1- \beta_1)}{(1 - \beta_2)(1-\gamma)} + (1 - \beta_1)^2 \sum_{t=1}^{T_{\sf M}} \EE [ \|\eta_{t-1} \hat{v}_{t-1}^{-1/2}m_{t}\|] \eqsp.
\end{split}
\eeq


Assume $ a_m = \min_{1,...,T_M} a_t$ and denote $\tilde{\major}_m^2 = (a_m\beta_1^2 + \beta_1)\major^2$.
Setting a constant learning rate $\eta_t = \eta$ and plugging in \eqref{eq:bound1} yields:
\beq\notag
\begin{split}
&\EE[\|\nabla f(w_T)\|^2] = \frac{ 1 }{\sum_{j=1}^{T_{\sf M}} \eta_j}  \sum_{t=1}^{T_{\sf M}} \eta_{t} \|\nabla f(w_t)\|^2 =\frac{\sum_{1}^{T_M}\|\nabla f(w_t)\|^2}{T_M} \\
& \leq \frac{\major}{ T_M \eta((1 - a_m\beta_1) + (\beta_1 + a_m))}    \EE\left[  \Delta f  + \frac{1}{1 - \beta_1}\tilde{\major}_m^2 \|\eta_{0} \hat{v}_{0}^{-1/2}\|    \right]\\
& +  \frac{ 4L \left(\frac{\beta_1}{1 - \beta_1}\right)^2 \major}{T_M \eta((1 - a_m\beta_1) + (\beta_1 + a_m))}  (1 + \beta_1^2) \frac{\eta^{2} d T_{\sf M} (1- \beta_1)}{(1 - \beta_2)(1-\gamma)}\\
& + \frac{\major}{T_M \eta((1 - a_m\beta_1) + (\beta_1 + a_m))}  (1 - \beta_1)^2 \sum_{t=1}^{T_{\sf M}} \EE [ \|\eta_{t-1} \hat{v}_{t-1}^{-1/2}m_{t}\|]\\
& +  \frac{2L\major}{T_M \eta((1 - a_m\beta_1) + (\beta_1 + a_m))}   \sum_{t=1}^{T_{\sf M}}  \EE [  \|\eta_{t} \hat{v}_{t}^{-1/2} \tilde{g}_t\|^2 ]  \eqsp,
\end{split}
\eeq
where $T$ is a random termination number distributed according \eqref{eq:random} and $T_M$ is the maximum number of iteration.
Setting the stepsize to $\eta = \frac{1}{\sqrt{d T_{\sf M}}}$ yields :
\beq\notag
\begin{split}
\EE[\|\nabla f(w_T)\|^2] \leq C_{1,m} \sqrt{\frac{d}{T_{\sf M}}} +C_{2,m} \frac{1}{T_{\sf M}} +  \frac{\eta}{T_{\sf M}} D_{1,m} \EE [ \| \hat{v}_{t-1}^{-1/2}m_{t}\|] +  \frac{\eta}{T_{\sf M}} D_{2,m} \EE [ \| \hat{v}_{t-1}^{-1/2} \tilde{g}_{t}\|]  \eqsp,
\end{split}
\eeq
where
\beq\notag
\begin{split}
&C_{1,m} = \frac{\major}{(1 - a_m \beta_1) + (\beta_1 + a_m)}  \Delta f + \frac{ 4L \left(\frac{\beta_1}{1 - \beta_1}\right)^2 \major}{(1 - a_m\beta_1) + (\beta_1 + a_m)} \frac{(1 + \beta_1^2) (1- \beta_1)}{(1 - \beta_2)(1-\gamma)} \eqsp ,\\
&C_{2,m} =\frac{\major}{(1 - \beta_1) \left((1 - a_m\beta_1) + (\beta_1 + a_m)\right)}  (a_m\beta_1^2 + \beta_1)\major^2   \EE[ \| \hat{v}_{0}^{-1/2}  \| ] \eqsp.
\end{split}
\eeq

\textbf{Simple case as in~\cite{zhou2018convergence}:} if $\beta_1 = 0$ then $ \tilde{g}_{t} = g_t + m_{t+1}$ and $g_t = \theta_t$. Also using Lemma~\ref{lem:squarev} we have that:
\beq\notag
\sum_{t=1}^{T_{\sf M}} \eta_{t}^{2} \EE \left[\left\|\hat{v}_{t}^{-1/2} g_{t}\right\|_{2}^{2}\right] \leq  \frac{\eta^{2} d T_{\sf M}}{(1 - \beta_2)}  \eqsp;
\eeq
which leads to the final bound:
\beq\notag
\begin{split}
\EE[\|\nabla f(w_T)\|^2]  \leq \sqrt{\frac{d}{T_{\sf M}}} \tilde{C}_{1,m}  + \frac{1}{T_{\sf M}} \tilde{C}_{2,m} \eqsp,
\end{split}
\eeq
where
\beq \notag
\begin{split}
&\tilde{C}_{1,m} = C_{1,m} +  \frac{\major}{(1 - a_m\beta_1) + (\beta_1 + a_m)} \left[ \frac{a_m(1 - \beta_1)^2}{1-\beta_2} + 2L \frac{1}{1-\beta_2}  \right] \eqsp, \\
&\tilde{C}_{2,m} = C_{2,m} =\frac{\major}{(1 - \beta_1) \left((1 - a_m\beta_1) + (\beta_1 + a_m)\right)} \tilde{\major}_m^2   \EE[ \|\hat{v}_{0}^{-1/2} \|]\eqsp.
\end{split}
\eeq
\end{proof}

\newpage


\section{Proof of Lemma~\ref{lem:dnnh2} (Boundedness of the iterates H\ref{ass:boundedparam})}\label{app:lemdnnh2}

\begin{Lemma*}
Given the multilayer model \eqref{eq:dnnmodel}, assume the boundedness of the input data and of the loss function, i.e., for any $\xi \in \rset^p$ and $y \in \rset$ there is a constant $T >0$ such that:
\beq\label{eq:mildassumptions}
\|\xi\| \leq 1 \quad \textrm{a.s.} \quad \textrm{and} |\mathcal{L}'(\cdot, y)| \leq T \eqsp,
\eeq
where $\mathcal{L}'(\cdot, y)$ denotes its derivative w.r.t. the parameter. Then for each layer $\ell \in [1,L]$, there exists a constant $A_{(\ell)}$ such that:
$$
\|w^{(\ell)}\| \leq A_{(\ell)} \eqsp.
$$
\end{Lemma*}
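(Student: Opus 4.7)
The plan is to exploit the $\ell_2$ regularization as a per-coordinate contraction on the iterates, and to use a backward induction on the layer index $\ell = L, L-1, \ldots, 1$. The key mechanism is that the regularization term $\frac{\lambda}{2}\|w\|^2$ contributes exactly $\lambda w^{(\ell)}_t$ to the block of the gradient corresponding to layer $\ell$, so writing $g_t = \nabla_w \mathcal{L}(\textsf{MLN}(w_t,\xi_t),y_t) + \lambda w_t$ and substituting into line 5 of Algorithm~\ref{alg:optamsgrad} isolates a shrinkage factor of the form $1 - c_t \lambda$ acting on $w_t^{(\ell)}$. The other key ingredient is that for a sigmoid feed-forward network the activations lie in $(0,1)$, the sigmoid derivative is bounded by $1/4$, so standard backpropagation gives, block by block,
\begin{equation*}
\| \nabla_{w^{(\ell)}} \mathcal{L} \| \;\leq\; \frac{T}{4^{L-\ell+1}} \prod_{k=\ell+1}^{L} \| w^{(k)} \|\eqsp,
\end{equation*}
which depends only on the weights of strictly higher layers (and bounds involving $\|\xi\|\leq 1$).

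First I would carry out the base case $\ell = L$: the right-hand side of the display above is just $T/4$, independent of all weights. Collapsing lines 8 and 9 into the single update $w^{(L)}_{t+1} = \tilde{w}^{(L)}_t - \eta_t(\theta^{(L)}_t + h^{(L)}_{t+1})/\sqrt{\hat{v}^{(L)}_t}$ and substituting the decomposition $g_t = \nabla\mathcal{L} + \lambda w_t$ into $\theta_t$, one obtains, coordinate-wise, a recursion of the form $w^{(L)}_{t+1} = (1 - c_t \lambda)\, w^{(L)}_t + r_t$ where $c_t \in (0, \eta_t/\sqrt{\epsilon})$ because $\hat{v}_t \geq v_0 = \epsilon\mathbf{1}$, and $r_t$ collects (i) the bounded loss gradient contribution, (ii) the momentum term $\theta_{t-1}$ (which is itself bounded by the classical recursion $\|\theta_t\|\leq\beta_1\|\theta_{t-1}\|+(1-\beta_1)\|g_t\|$), (iii) the predictable guess $m_{t+1}$ (bounded under H\ref{ass:guessbound}), and (iv) the auxiliary gap $\|\tilde{w}_t - w_t\| = \eta_{t-1}\|h_t\|/\sqrt{\hat{v}_{t-1}}$. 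Choosing $\eta_t$ small enough so that $1 - c_t\lambda \in (0,1)$ uniformly in $t$ and summing the geometric series yields a constant $A_{(L)}$ with $\|w^{(L)}_t\|\leq A_{(L)}$.

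For the inductive step, suppose that $\|w^{(k)}_t\|\leq A_{(k)}$ holds uniformly in $t$ for every $k>\ell$. Plugging these into the backpropagation bound above replaces $T/4$ by the constant $\bar{T}_\ell := (T/4^{L-\ell+1})\prod_{k>\ell}A_{(k)}$, which does not depend on the current iterate or on $w^{(\ell)}$. Repeating verbatim the argument of the base case on the block $w^{(\ell)}$, with $\bar{T}_\ell$ in place of $T/4$, produces the required constant $A_{(\ell)}$. Iterating downward from $\ell = L$ to $\ell = 1$ completes the proof and, in particular, verifies H\ref{ass:boundedparam} for this architecture (taking $W = \max_\ell A_{(\ell)}$ up to a norm conversion).

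The main obstacle will be the double-update structure of OPT-AMSGrad together with the adaptive coordinate scaling. Specifically, one must check that the contraction factor $1 - c_t\lambda$ is not wiped out by the fluctuations of $\eta_t/\sqrt{\hat{v}_t}$ across coordinates, that the momentum recursion for $\theta_t$ closes with geometric factor $\beta_1<1$ without reintroducing unbounded dependence on past iterates, and that the auxiliary gap $\tilde{w}_t - w_t$ stays bounded by quantities we already control. This reduces to a Gronwall-type estimate on a linear-in-$\|w^{(\ell)}_t\|$ recursion with coefficient strictly less than $1$, which is standard once the step size is taken small enough (e.g.\ $\eta_t < \sqrt{\epsilon}/((1-\beta_1)\lambda)$).
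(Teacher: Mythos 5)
Your skeleton matches the paper's: backward induction from layer $L$ down to layer $1$, the backpropagation bound $\|\nabla_{w^{(\ell)}}\mathcal{L}\| \leq (T/4^{L-\ell+1})\prod_{k>\ell}\|w^{(k)}\|$ coming from $|\sigma'|\leq 1/4$ and activations in $(0,1)$, and the $\ell_2$ regularizer as the force that keeps each block bounded. Where you diverge is the mechanism for a single layer, and that is where the proposal has a gap. The paper does not set up a contraction at all: it first bounds the per-step displacement $\|w_t-\tilde w_{t-1}\|\leq B$ uniformly in $t$ (using H\ref{ass:guessbound} together with $\sqrt{\hat v_{t,p}}\geq\sqrt{1-\beta_2}\,|g_{t,p}|$, so that the adaptive normalization itself caps the step and no smallness condition on $\eta_t$ is needed), and then argues coordinate-wise that whenever $w^{(L)}_{t,i}\geq T/(4\lambda)+B$ the full gradient satisfies $\nabla_i f=\nabla_i\mathcal{L}+\lambda w^{(L)}_{t,i}\geq -T/4+\lambda\cdot T/(4\lambda)\geq 0$, so that coordinate can only decrease; combined with the cap $B$ on any single jump, this confines the coordinate to $|w^{(L)}_{t,i}|\leq T/(4\lambda)+2B$ for all $t$. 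This drift argument is insensitive to the step size and to the adaptive rescaling, which is precisely what your route has to fight against.

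The concrete gap in your contraction route is the optimistic correction. Collapsing the two updates (with $\beta_1=0$ for clarity) gives $w_{t+1}=w_t+\eta_{t-1}m_t/\sqrt{\hat v_{t-1}}-\eta_t(g_t+m_{t+1})/\sqrt{\hat v_t}$: the auxiliary gap $\tilde w_t-w_t=\eta_{t-1}m_t/\sqrt{\hat v_{t-1}}$ enters with a \emph{plus} sign and with coefficient $\eta_{t-1}/\sqrt{\hat v_{t-1}}\geq\eta_t/\sqrt{\hat v_t}$ (decreasing steps, nondecreasing $\hat v$). H\ref{ass:guessbound} controls $m_t$ only through $\|m_t\|\leq\|g_t\|\leq T/4+\lambda\|w_t\|$ and gives no coordinate-wise sign information, so the best available bound on this term is a contribution of $+c_{t-1}\lambda\|w_t\|$ with $c_{t-1}\geq c_t$. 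The factor multiplying $\|w_t\|$ in your recursion is then of the form $1-c_t\lambda+c_{t-1}\lambda$, which need not be below $1$, and shrinking $\eta$ does not rescue it because the contraction you extract from $-\eta_t\lambda w_t/\sqrt{\hat v_t}$ and the back-reaction from the $m_t$ term scale identically in $\eta$. To make the geometric-series argument close, you would need a genuine cancellation between the $m_t$, $g_t$ and $m_{t+1}$ contributions across consecutive iterations (a telescoping of the $\lambda w$ pieces), which H\ref{ass:guessbound} is too weak to deliver; otherwise you should switch to the paper's sign-of-the-drift argument, which only requires the displacement bound $B$ and avoids the issue entirely.
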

\begin{proof}
For any index $\ell \in [1, L]$ we denote the output of layer $\ell$ by 
$$
h^{(\ell)}(w,\xi)= \sigma\left(w^{(\ell)} \sigma\left(w^{(\ell-1)} \ldots \sigma\left(w^{(1)} \xi \right)\right)\right) \eqsp.
$$
Given the sigmoid assumption we have $\|h^{(\ell)}(w,\xi)\| \leq 1$ for any $\ell \in [1,L]$ and any $(w, \xi) \in \rset^d \times \rset^p$.
We also recall that $\mathcal{L}(\cdot, y)$ is the loss function, which can be Huber loss or cross entropy.
Observe that at the last layer $L$:
\beq\label{eq:boundderivativeloss}
\begin{split}
\|\nabla_{w^{(L)}\|  \mathcal{L}(\textsf{MLN}( w, \xi), y)} & =  \|\mathcal{L}'(\textsf{MLN}( w, \xi), y)\nabla_{w^{(L)}}\textsf{MLN}( w, \xi)\|\\
&  = \|\mathcal{L}'(\textsf{MLN}( w, \xi), y)\sigma'(w^{(L)} h^{(L-1)}(w,\xi))h^{(L-1)}(w,\xi)\|\\
& \leq \frac{T}{4} \eqsp,
\end{split}
\eeq
where the last equality is due to mild assumptions \eqref{eq:mildassumptions} and to the fact that the norm of the derivative of the sigmoid function is upper bounded by $1/4$.

\vspace{0.1in}

From Algorithm~\ref{alg:optamsgrad}, and with $\beta_1 = 0$ for the sake of notation, we have for iteration index $t >0$:
\beq \notag
\begin{split}
\|w_t - \tilde{w}_{t-1}\|  = \|-\eta_t \hat{v}_t^{-1/2} (\theta_t + h_{t+1})\|  & = \|\eta_t \hat{v}_t^{-1/2} (g_t + m_{t+1})\|  \\
& \leq \hat{\eta} \|\hat{v}_t^{-1/2} g_t\| + \hat{\eta} a \|\hat{v}_t^{-1/2} g_{t+1}\| \eqsp,
\end{split}
\eeq
where $\hat{\eta} = \max \limits_{t >0} \eta_t$.
For any dimension $p \in [1,d]$, using assumption H\ref{ass:guessbound}, we note that 
$$\sqrt{\hat{v}_{t,p}} \geq \sqrt{1-\beta_2} g_{t,p} \quad \textrm{and} \quad m_{t+1} \leq  a \norm{g_{t+1}} \eqsp.$$
Thus:
\beq\notag
\begin{split}
\|w_t - \tilde{w}_{t-1}\|  \leq \hat{\eta} \left( \|\hat{v}_t^{-1/2} g_t\| +  a \|\hat{v}_t^{-1/2} g_{t+1}\| \right) \leq \hat{\eta} \frac{a +1}{\sqrt{1 - \beta_2}}  \eqsp.
\end{split}
\eeq
In short there exists a constant $B$ such that $\|w_t - \tilde{w}_{t-1}\| \leq B$.

\vspace{0.1in}

\textbf{Proof by induction:} As in~\cite{defossez2020convergence}, we will prove the containment of the weights by induction.
Suppose an iteration index $T$ and a coordinate $i$ of the last layer $L$ such that $w^{(L)}_{T, i} \geq \frac{T}{4\lambda} + B$.
Using \eqref{eq:boundderivativeloss}, we have
$$
\nabla_i f(w^{(L)}_t, \xi) \geq - \frac{T}{4} + \lambda \frac{T}{\lambda4} \geq 0 \eqsp,
$$
where $f(w, \xi) = \mathcal{L}(\textsf{MLN}( w, \xi), y) +\frac{\lambda}{2}\norm{w}^2$ and is the loss of our MLN.
This last equation yields $\theta^{(L)}_{T,i} \geq 0$ (given the algorithm and $\beta_1 = 0$) and using the fact that $\|w_t - \tilde{w}_{t-1}\| \leq B$ we have
\beq\label{eq:decrease}
0 \leq w^{(L)}_{T-1,i} - B \leq w^{(L)}_{T,i} \leq w^{(L)}_{T-1,i} \eqsp,
\eeq
which means that $| w^{(L)}_{T,i}| \leq w^{(L)}_{T-1,i}$.
So if the first assumption of that induction reasoning holds, i.e., $w^{(L)}_{T-1, i} \geq \frac{T}{4\lambda} + B$, then the next iterates $w^{(L)}_{T, i}$ decreases, see \eqref{eq:decrease} and go below $\frac{T}{4\lambda} + B$. This yields that for any iteration index $t >0$ we have 
$$
w^{(L)}_{T, i} \leq \frac{T}{4\lambda} + 2B \eqsp,
$$
since $B$ is the biggest jump an iterate can do since $\|w_t - \tilde{w}_{t-1}\| \leq B$.
Likewise we can end up showing that 
$$
|w^{(L)}_{T, i}| \leq \frac{T}{4\lambda} + 2B \eqsp,
$$
meaning that the weights of the last layer at any iteration is bounded in some matrix norm.

Now that we have shown this boundedness property for the last layer $L$, we will do the same for the previous layers and conclude the verification of assumption H\ref{ass:boundedparam} by induction.

For any layer $\ell \in [1, L-1]$, we have:
\beq\label{eq:gradientatell}
\nabla_{w^{(\ell)}}  \mathcal{L}(\textsf{MLN}( w, \xi), y)  =  \mathcal{L}'(\textsf{MLN}( w, \xi), y) \left(\prod_{j=1}^{\ell+1} \sigma'\left(w^{(j)} h^{(j-1)}(w,\xi) \right) \right) h^{(\ell-1)}(w,\xi)  \eqsp.
\eeq
This last quantity is bounded as long as we can prove that for any layer $\ell$ the weights $w^{(\ell)}$ are bounded in some matrix norm as $\|w^{(\ell)}\|_{F} \leq F_\ell$ with the Frobenius norm.
Suppose we have shown $\norm{w^{(r)}}_{F} \leq F_r$ for any layer $r > \ell$. 
Then having this gradient \eqref{eq:gradientatell} bounded we can use the same lines of proof for the last layer $L$ and show that the norm of the weights at the selected layer $\ell$ satisfy
$$
\|w^{(\ell)}\| \leq \frac{T \prod_{t > \ell} F_t}{4^{L-\ell+1}} + 2B\eqsp.
$$
Showing that the weights of the previous layers $\ell \in [1, L-1]$ as well as for the last layer $L$ of our fully connected feed forward neural network are bounded at each iteration, leads by induction, to the boundedness (at each iteration) assumption we want to check, thus proving Lemma~\ref{lem:dnnh2}.
\end{proof}

\vspace{0.2in}


\end{document}